\def\eqref#1{equation~\ref{#1}}
\def\1{\bm{1}}
\DeclareMathAlphabet{\mathsfit}{\encodingdefault}{\sfdefault}{m}{sl}
\SetMathAlphabet{\mathsfit}{bold}{\encodingdefault}{\sfdefault}{bx}{n}
\newcommand{\normltwo}{L^2}
\newtheorem{theorem}{Theorem}
\newtheorem{lemma}[theorem]{Lemma}
\newtheorem{definition}[theorem]{Definition}
\newtheorem{assumption}[theorem]{Assumption}
\newmdtheoremenv{theo}{Theorem}
\newcommand{\distfun}{\ensuremath{\text{distance}}}
\newcommand{\matS}{\ensuremath{\mathcal S}}
\newcommand{\matE}{\ensuremath{\mathbb E}}
\newcommand{\matA}{\ensuremath{\mathcal A}}
\newcommand{\matF}{\ensuremath{\mathcal F}}
\newcommand{\transfunc}{\ensuremath{\mathcal T}}
\newcommand{\mathbN}{\ensuremath{\mathbb N}}
\newcommand{\matR}{\ensuremath{\mathbb R}}
\newcommand{\matbN}{\ensuremath{\mathbb N}}
\newcommand{\matX}{\ensuremath{\mathcal X}}
\newcommand{\matM}{\ensuremath{\mathcal M}}
\newcommand{\mathB}{\ensuremath{\mathbb B}}
\newcommand{\matD}{\ensuremath{\mathcal D}}
\newcommand{\cdf}{\ensuremath{\text{cdf}}}
\newcommand{\txtlin}{\ensuremath{\text{lin}}}
\newcommand{\indicator}{\ensuremath{\mathds{1}}}
\newcommand{\mathP}{\ensuremath{\mathbb P}}
\title{Geometry of Neural Reinforcement Learning in 
           Continuous State and Action Spaces}
\author{Saket Tiwari \thanks{Corresponding author: saket\_tiwari@brown.edu} \\
Department of Computer Science\\
Brown University\\
\And
Omer Gottesman \\
Amazon Web Services 
\And
George Konidaris \\
Department of Computer Science \\
Brown University \\
}
\begin{document}

\maketitle

\begin{abstract}
Advances in reinforcement learning (RL) have led to its successful application in complex tasks with continuous state and action spaces. Despite these advances in practice, most theoretical work pertains to finite state and action spaces. We propose building a theoretical understanding of continuous state and action spaces by employing a geometric lens to understand the \textit{locally attained} set of states. The set of all parametrised policies learnt through a semi-gradient based approach induces a set of attainable states in RL. We show that the training dynamics of a two-layer neural policy induce a low dimensional manifold of attainable states embedded in the high-dimensional nominal state space trained using an actor-critic algorithm. We prove that, under certain conditions, the dimensionality of this manifold is of the order of the dimensionality of the action space. This is the first result of its kind, linking the geometry of the state space to the dimensionality of the action space. We empirically corroborate this upper bound for four MuJoCo environments and also demonstrate the results in a toy environment with varying dimensionality. We also show the applicability of this theoretical result by introducing a local manifold learning layer to the policy and value function networks to improve the performance in control environments with very high degrees of freedom by changing one layer of the neural network to learn sparse representations.
\end{abstract}

\section{Introduction}

The goal of a reinforcement learning (RL) agent is to learn a policy that maximises its expected, time discounted cumulative reward \citep{Sutton1998IntroductionTR}.
Recent advances in RL  have lead to agents successfully learning in environments with enormous state spaces, such as games \citep{dqn, Silver2016MasteringTG, Wurman2022OutracingCG}, robotic control in simulation \citep{Lillicrap2016ContinuousCW, Schulman2015TrustRP, Schulman2017ProximalPO} and real environments \citep{Levine2016EndtoEndTO, Zhu2020TheIO, Deisenroth2011PILCOAM, kaufmann2023champion}.
However, we do not have an understanding of the intrinsic complexity of these seemingly large problems.

We investigate the complexity of RL environments through a geometric lens.
We build on the intuition behind the \emph{manifold hypothesis}, which states that most high-dimensional real-world datasets actually lie on or close to low-dimensional manifolds~\citep{Tenenbaum1997MappingAM, Carlsson2007OnTL, Fefferman2013TestingTM, Bronstein2021GeometricDL}; for example, the set of natural images is a very small, smoothly varying subset of all possible value assignments for the pixels.
A promising geometric approach is to model this data as a low-dimensional structure---a \textit{manifold}---embedded in a high-dimensional ambient space.
In supervised learning, especially deep learning theory, researchers have shown that the approximation error depends strongly on the dimensionality of the manifold~\citep{Shaham2015ProvableAP, Pai2019DIMALDI, Chen2019EfficientAO, Cloninger2020ReLUNA}, thus connecting the learning complexity to the complexity of the underlying structure of the dataset.
RL researchers have previously applied the manifold hypothesis ---- i.e. by hypothesizing that the effective state space lies on a low-dimensional manifold~\citep{smart2002effective, Mahadevan2005ProtovalueFD, Machado2017ALF, Machado2018EigenoptionDT, Banijamali2018RobustLC, Wu2019TheLI, Liu2021RobotRL}, but the assumption has never been theoretically and empirically validated.

RL shares many similarities with control theory \citep{bertsekas2012dynamic, bertsekas2024model}.
In a control-theoretic framework, the objective is to drive the system, over time, to a desired state or goal.
Consequently, theoreticians and practitioners are often interested in the \textit{reachability} of a control system to understand what state is reachable given how system changes under control inputs, i.e. the system dynamics.
Locally reachable states are the set of states to which the system can possibly transition, starting from a fixed state, under all \textit{smooth} time variant controls.
Control theorists have long studied the set of reachable states \citep{kalman1960general} using a differential geometric framework ~\citep{sussmann1973orbits, sussmann1987general}.
Theoretical research in the study of control systems is often focused on finding necessary and sufficient conditions in the system dynamics such that all states are reachable ~\citep{isidori1985nonlinear, sun2002controllability, respondek2005controllability, sun2007necessary} under all the admissible time-variant policies.
In RL, the objective is to maximize the discounted return via gradient-based updates to the policy parameters, so the focus is on states attained through a sequence of policies determined by these parameters.

Furthermore, a theoretical understanding of states attainable using neural network (NN) policies gives us insight into the geometry and low-dimensional structure of data in RL.
This requires utilising an analytically tractable model of NNs.
Ever since the remarkable success of neural networks, researchers have developed various theoretical models to better understand their efficacy.
A theoretical model intended to study a complex object, such as a neural network, often ends up making simplifying assumptions for tractability.
One such theoretical model studies the evolution of neural networks linearly in parameters during training  of \textit{wide} neural networks \citep{lee2019wide, Jacot2018NeuralTK}, meaning in a setting where the width approaches infinity.
This has helped researchers develop theories of the generalization properties of neural networks \citep{Jacot2018NeuralTK, allen2019learning, wei2019regularization, adlam2020neural}.
We similarly utilize a single hidden layer neural network model for the policy that is linear in terms of its parameters, not linear in the state, as the width approaches infinity, as has previously been applied to RL ~\citep{Wang2019NeuralPG, Cai2019NeuralTA}.

Within this theoretical framework, we provide a proof of the manifold hypothesis for deterministic continuous state and action RL environments with wide two-layer neural networks.
We prove that the effective set of attainable states is subset of a manifold and its dimensionality is upper bounded linearly in terms of the dimensionality of the action space, under appropriate assumptions, independent of the dimensionality of the nominal state space.
The primary intuition is that the set of states locally attained are restricted by two factors: \textbf{1)} the policy is time invariant and state dependent, and \textbf{2)} the set of policies is constrained by the optimization of a \textit{wide}, two-layer neural network using stochastic policy gradients.
Our theoretical results are for deterministic environments with continuous states and actions; we empirically corroborate the low-dimensional structure of attainable states on MuJoCo environments~\citep{Todorov2012MuJoCoAP} by applying the dimensionality estimation algorithm  by~\citet{Facco2017EstimatingTI}.
To show the applicability and relevance of our theoretical result, we empirically demonstrate that a policy can implicitly learn a low-dimensional representation with marginal computational overhead using the CRATE framework \citep{yu2023whitebox, yu2023emergence, pai2024masked}.
We present an algorithm that does two things simultaneously: \textbf{1)} learns a mapping to a local low dimensional representation parameterised by a DNN, and \textbf{2)} uses this effectively low-dimensional mapping to learn the policy and value function.
Our modified neural network works out of the box with SAC~\citep{Haarnoja2018SoftAO} and we show significant improvements in high dimensional DM control environments \citep{tunyasuvunakool2020dm_control}.

\section{Background and Mathematical Preliminaries}
\label{sec:preliminaries}

We first describe the continuous-time Markov decision process (MDP), which forms the foundation upon which our theoretical result is based.
Then we provide mathematical background on various ideas from the theory of manifolds that we employ in our proof.

\subsection{Continuous-Time Reinforcement Learning}
\label{sec:contirl}
We first analyse continuous-time reinforcement learning in a deterministic \textit{Markov decision process} (MDP) defined by the tuple $\matM = (\matS, \matA, \transfunc, f_r, s_0, \lambda)$ over time $t \in [0, T)$.
$\matS \subset \matR^{d_s}$ is the set of all possible states of the environment.
$\matA \subset \matR^{d_a}$ is the rectangular set of actions available to the agent.
$\transfunc: \matS \times \matA \times \matR^+ \to \matS$ and $\transfunc \in C^{\infty}$ is a \textit{smooth} function that determines the state transitions: $s' = \transfunc(s, a, \tau)$ is the state to which the agent transitions when it takes the action $a$ at state $s$ for the time period $\tau$. 
Note that $\transfunc(s, a, 0) = s$, which means that the agent's state remains unchanged if an action is applied for a duration of $\tau = 0$.
The reward obtained for reaching the state $s$ is $f_r(s)$, determined by the reward function $f_r: \matS \to \matR$.
$s_t$ denotes the state the agent is at time $t$ and $a_t$ is the action it takes at time $t$.
$s_0$ is the fixed initial state of the agent at $t = 0$, and the MDP terminates at $t = T$.
The agent lacks access to $f$ and $f_r$, and can only observe states and rewards at a given time $t \in [0, T)$.
The agent's decision-making process is determined by its policy $\pi: \matS \to \matA$.
Simply put, the agent takes action $\pi(s)$ in state $s$.
The agent's goal is to maximize the discounted return $J(\pi) = \int_{0}^{\intercal} e^{-\frac{l}{\lambda}} f_r(s_l) dl$, where $s_{t + \epsilon} = \transfunc(s_t, \pi(s_t), \epsilon)$ for small $\epsilon$ and for all $t \in [0, T)$.
We define the \textit{action tangent mapping}, $g: \matS \times \matA \to \matR^{d_s}$, for an MDP as
\[ \nabla_a f(s, a) = \lim_{\epsilon \to 0^+} \frac{\transfunc(s, a, \epsilon) - s}{\epsilon} = \frac{\partial \transfunc(s, a, \epsilon)}{\partial \epsilon}. \]
Intuitively, this captures the direction of change in the state $s$ after taking an action $a$.
We consider the family of control affine systems that represent a wide range of control systems \citep{isidori1985nonlinear, murray1991case, underactuated}, such that $\dot{s}_t = g(s) + \sum_{i = 1}^{d_a} h_i(s) a_i,$ where $\dot{s}_t$ is the time derivative of the state, $g, h_i: \matR^{d_s} \to \matR^{d_s}$ are infinitely differentiable (or smooth) functions. 
Similarly, $ \pi(s) = [\pi_1(s), \ldots, \pi_{d_a}(s)] $ is the direction of change in the agent's state following a policy $\pi$ at state $s$ for an infinitesimally short time.
The curve in the set of possible states, or the state-trajectory of the agent, is a differential equation whose integral form is:
\begin{equation}
\label{eq:integral}
    s_t^{\pi} = s_0 + \int_{0}^t g(s^{\pi}_l) + \sum_{i = 1}^{d_a} h_i(s^{\pi}_l) \pi_i(s^{\pi}_l) dl.
\end{equation}
This solution is also unique \citep{Wiggins1989IntroductionTA} for a fixed start state, $s_0$, and Lipschitz continuous policy, $\pi$.
The above curve is smooth if the policy is also smooth.
Therefore, given an MDP $\matM$ and a smooth deterministic policy $\pi \in \Pi$, the agent traverses a continuous time state-trajectory or curve $H_{\matM, \pi}:[0, T) \to \matS$.
The value function at time $t$ for a policy $\pi$ is the cumulative future reward starting at time $t$:
\begin{equation}
\label{eq:valfunc}
    v^{\pi}(s_t) = \int_t^T e^{-\frac{l + t}{\lambda}} f_r(s_l^{\pi}) dl.
\end{equation}
Note that the objective function, $J(\pi)$, is the same as $v^{\pi}(s_0)$.
Our specification is very similar to classical control and continuous time RL \citep{Cybenko1989ApproximationBS, Doya2000ReinforcementLI} but we define the transitions, $\transfunc$, differently.
More recently, researchers have developed the theory for continuous-time RL in a model-free setting with stochastic policies and dynamics \citep{Wang2020ReinforcementLI, jia2022policy}. 

\subsection{Manifolds} 
\label{subsec:mathdefs}

In practice, MDPs have a low-dimensional underlying structure resulting in fewer degrees of freedom than their nominal dimensionality.
In the Cheetah MujoCo environment, where the Cheetah is constrained to a plane, the goal of the RL agent is to learn a policy to make the Cheetah move forward as fast as possible.
The actions available to the agent are to provide torques at each of the 6 joints.
For example, an RL agent learning from control inputs for the Cheetah MuJoCo environment, one can ``minimally'' describe the cheetah's state by its ``pose'', velocity, and position as opposed to the entirety of the input vector.
The idea of a low-dimensional manifold embedded in a high-dimensional state space formalises this.

A function $h: X \to Y$, from one open subset $X \subset \mathbb R^{l_1}$, to another open subset $Y \subset R^{l_2}$, is a diffeomorphism if $h$ is bijective, and both $h$ and $h^{-1}$ are differentiable.
Intuitively, a low dimensional surface embedded in a high dimensional Euclidean space can be parameterised by a differentiable mapping, and if this mapping is bijective we term it a diffeomorphism.
Here, $X$ is diffeomorphic to $Y$. A manifold is defined as follows \citep{guilleminpollock1974, boothby1986introduction, Robbin2011INTRODUCTIONTD}.
\begin{definition}\label{def:manifold}
\textit{
A subset $M \subset \mathbb R^k$ is called a smooth $m$ dimensional submanifold of $\mathbb R^k$  iff every point $p \in M$ has an open neighborhood $U \subset \mathbb R^k$ such that $U \cap M$ is diffeomorphic to an open subset $O \subset \mathbb R^m$. 
A diffeomorphism,
$ \phi: U \cap M \to O $
is called a coordinate chart of M and the inverse,
$ \psi := \phi ^{-1}: O \to U \cap M $
is called a smooth parameterisation.}
\end{definition}

We illustrate this with an example in Figure~\ref{fig:cylinder}.
Further note that a coordinate chart is called \textit{local} to some point $p \in U \subset M$ the diffeomorphism property holds in the neighborhood $U$.
It offers a local "flattening" of the local neighborhood.
It is called global if it holds everywhere in $M$ but not all manifolds have a global chart (e.g., Figure \ref{fig:cylinder}).
If $M \subset \mathbb R^k$ is a smooth non-empty $m$-manifold, then $m \leq k$, reflecting the idea that a manifold is of lower or equal dimension than its ambient space.
A smooth curve $\gamma: I \to M$ is defined from an interval $I \subset \matR$ to the manifold $M$ as a function that is infinitely differentiable for all $t$.
The derivative of $\gamma$ at $t$ is denoted as $\Dot{\gamma}(t)$.
The set of derivatives of the curve at time $t$, $\Dot{\gamma}(t)$, for all possible smooth $\gamma$, forms a set that is called the tangent space $T_p(M)$ at the point $p$.
For a precise definition, see the appendix \ref{app:manibackground}.
Taking partial derivatives of $\psi$ with respect to each coordinate $x^j$, we obtain the vectors in $\mathbb{R}^k$:
\begin{equation*}
\frac{\partial \psi}{\partial x^j} = \left( \frac{\partial \psi^1}{\partial x^j}, \frac{\partial \psi^2}{\partial x^j}, \dots, \frac{\partial \psi^k}{\partial x^j} \right) 
\end{equation*}
These vectors span the tangent space $T_p M$ at the point $p$.
Therefore, locally the manifold can be alternatively represented as the space spanned by the non-linear bases: $\text{Span}(\frac{\partial \psi^1}{\partial x^j}, \frac{\partial \psi^2}{\partial x^j}, \dots, \frac{\partial \psi^k}{\partial x^j})$.

\begin{wrapfigure}{r}{0.5\textwidth}
    \centering
    \includegraphics[width=0.65\linewidth]{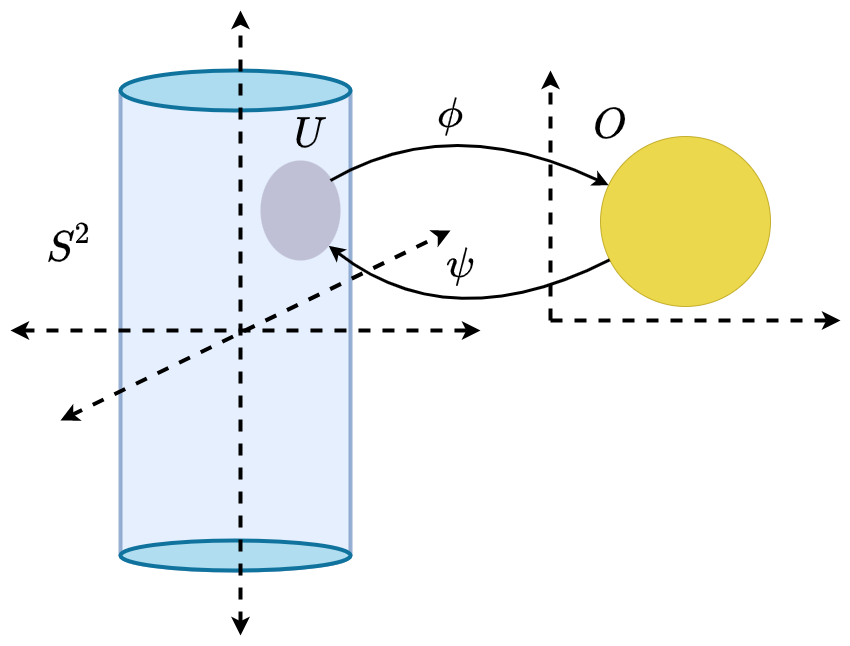}
    \caption{The surface of an open cylinder of unit radius, denoted by $S^2$,  in $\matR^3$ is a 2D manifold embedded in a 3D space.
More formally, $S^2 = \{ (x, y, z) | x^2 + y^2 = 1, z \in (-h, h) \}$ where the cylinder's height is $2h$.
One can smoothly parameterise $S^2$ as $\psi(\theta, b) = (\sin \theta, \cos \theta, b)$.
The coordinate chart is $\phi(x, y, z) = (\sin^{-1} x, z)$.}
    \label{fig:cylinder}
    \vspace{-15pt}
\end{wrapfigure}

\subsection{Vector Fields, Lie-Series, and Control Theory}
\label{sec:vectorfield}
Curves and tangent spaces in manifolds naturally lead to vector fields.
In the same way that a curve represents how an agent's state changes continuously, a vector field captures this change locally at every point of the state space.
A \textit{tangent vector} can be represented as $X = [v_1, ..., v_m ]^{\intercal} $, where each $v_i$ is a function.

\begin{definition}
\label{def:vecfield}
    The vector field \( X \) is called a \( C^r \) vector field if, in any local coordinate chart \( (U, \varphi) \) with coordinates \( (x_1, \dots, x_m) \), the components of \( X \) in the local basis \( \left\{ \frac{\partial}{\partial x_i} \right\} \) are \( C^r \) functions. That is, in local coordinates, \( X \) can be written as  $X(x) = \sum_{i=1}^{m} v_i(x) \frac{\partial}{\partial x_i}$, where each component function \( v_i: U \to \mathbb{R} \) is \( C^r \).
\end{definition}

We denote by $V^{\infty}(M)$ the set of all smooth vector fields on manifold $M$.
The rate of change of a function $f \in C^{\infty}(M)$ at a point $x$ along the vector field $X$ is defined by
\begin{equation}
\label{eq:lie_derivative}
     L_X(f) = X(f(x)) = \sum_{i = 1}^m v_i \frac{\partial f(x) }{\partial x_i}.
     \vspace{-1em}
\end{equation}

Associated with every such vector field $X \in V^{\infty}(M)$ and $x_0 \in M$ is the integral curve: $x(t) $.
Intuitively, following along the direction $X$ for time $t$, the curve starting from $x_0$ reaches the point $x(t)$.
The solution to the ODE with the starting condition $x(0) = x_0$ is denoted as the exponential map $e^{X}_t(x_0)$.
One can imagine that vector fields have a connection to policies in the way a policy determines the direction of change.
Therefore, it is an effective way to model the change in an agent's state given a vector field and an arbitrary fixed starting state over a time period.

Taylor series help approximate complex functions with polynomials; analogously, we will use the Lie series of the exponential map.
To define this expansion, we first recursively define the derivative of Lie $L^k_X(f) = L_X(L_X^{k - 1}(f))$ for $k \in \mathbN^+$  where $L_X( \cdot )$ is defined in equation \ref{eq:lie_derivative}.
The Lie series of the exponential map with $f(x) = x$ is \citep{jurdjevic1997geometric, Cheng2011AnalysisAD}
\begin{align}
\label{eq:expo_map}
    e_t^X(x) = & x + t X(x) + \sum_{l = 1}^{\infty} \frac{t^{l + 1}}{(l + 1)!} L_X^{l + 1} (x).
    \vspace{-1em}
\end{align}

\section{Model for Linearised Wide Two-Layer Neural Policy}
\label{sec:neuralnet}

An RL agent in the policy gradient framework \citep{Sutton1999PGM, konda1999actor} is equipped with a policy $\pi$ parameterised by parameters $\theta$ and takes gradient ascent steps in the direction of $\nabla_{\theta} J(\pi(; \theta))$.
Suppose that the agent's policy is parameterised by a \textit{wide} two-layer neural network policy.
This update direction can be estimated in different ways \citep{williams1992simple, Kakade2001ANP}.
Such an algorithm generates a sequence of parameters:
\begin{equation}
\label{eq:update_seq}
    \theta^{\tau + 1} \leftarrow \theta^{\tau} + \eta \nabla_{\theta} J(\theta),
\end{equation}
where $\eta$ is the learning rate.
In our setting, a neural RL agent parameterises the policy as a two-layer neural network with smooth activation.
We highlight the salient details below.

\subsection{Linear Parameterisation of Neural Policy}
\label{sec:linearnn}
For the set of permissible policies we consider the family of two-layer feed forward neural networks with GeLU activation \citep{Hendrycks2016GaussianEL}, which is a smooth analog of the popular ReLU activation \citep{Nair2010RectifiedLU}.
We follow the parameterisation for a two-layer fully connected neural network employed by \citet{cai2019neuraltemporal} and \citet{Wang2019NeuralPG} for analysis of RL algorithms, which is also used in theoretical analyses of wide neural networks in supervised learning \citep{allen2019convergence, Gao2019ConvergenceOA, lee2019wide}.
For a weight vector $W$ of the first layer and weights $C$ for the last layer a shallow, two-layer, fully connected neural network is parameterised as:
\begin{equation}
    \label{eq:singlelayer}
    f(s; W, C) = \frac{1}{\sqrt{n}} \sum_{\kappa = 1}^{n} C_{\kappa} \varphi(W_{\kappa} \cdot s),
    \vspace{-1em}
\end{equation}
where $W \in \matR^{n d_s}$ is the vector of first layer parameters where each $W_k$ is a $d_s$ length vector block and therefore the complete vector $W = [ W_1, W_2, ..., W_n ]$, $\varphi$ is GeLU activation, and $B \in \matR^{ d_a \times n}$ is a matrix comprised of $n$ column vectors of dimension $d_a$ denoted $C_k$, meaning $B = [C_1, C_2, ..., C_n]$.
Here $n$ is the width of the neural network.
The parameters are randomly initialised i.i.d  as $B_{k} \sim \text{Unif}(-1, 1)$ and $W_k \sim \text{Normal}(0, I_{d_s}/d_s),$ where $I_{d_s}$ is a $d_s \times d_s$ identity matrix and
Unif is the uniform distribution.
During training, \citet{cai2019neuraltemporal} and \citet{Wang2019NeuralPG} only update $W$ while keeping $B$ fixed to its random initialisation despite which, for a slightly different policy gradient based learning, the agent learns a near optimal policy.
Researchers study neural networks in simplified theoretical settings to advance the understanding of a complex system while keeping the mathematics tractable \citep{Li2017ConvergenceAO, Jacot2018NeuralTK, Du2018GradientDF, Mei2018AMF, allen2019convergence}.
While this shallow model of neural networks does away with complexity from multiple layers, it captures the over-parameterization in NNs.

Let $W^0$ be the initial parameters of the policy network defined in Equation \ref{eq:singlelayer}.
A \textit{linear approximation} of the policy is defined as
\begin{align}
\label{eq:linearised_policy}
f^{\txtlin}(s; W) = & f(s; W^0) + \nabla_{\theta} f(s; \theta) |_{\theta = W_0}  (W - W^0) =  f(s; W^0) + \Phi(s; W_0) (W - W^0)
\vspace{-1em}
\end{align}

where $\Phi(x; W_0) = \frac{1}{\sqrt{n}} \left  [ C^0_1 \varphi'(W^0_1 \cdot s ) s^{\intercal}, C^0_2 \varphi'(W^0_2 \cdot s ) s^{\intercal}, ..., C^0_n \varphi'(W^0_n \cdot s) s^{\intercal} \right ]$ is a $d_a \times n d_s$ feature matrix for the input $s$, $\varphi'$ is the gradient of GeLU function w.r.t the input, $\cdot$ represents the dot product, and the matrix $\Phi$ formed by the concatenation of $d_a \times d_s$ matrices $ B_k \varphi'(W^0_k \cdot s) s^{\intercal}$ for $k = 1, ..., n$.  This results in a matrix of size $d_a \times n d_s$.
$W$ is an $n d_s$ vector as described above.
We will omit the parameters $W, W^0,$ and $B$ from the representation of policies when there is no ambiguity.
It is a linear approximation because it is linear in the weights $W$ and non-linear, within $\Phi$, in the initial weights $W^0$ and the state $s$.
This leads us to the definition of the family of linearised policies for a fixed initialisation $W^0$, similar to \citet{Wang2019NeuralPG}.

\begin{definition}
    \label{def:linearpolicies}
    For a constant $r > 0$, and   fixed $W_0$. For all widths $n \in \matbN_{\geq 0}$, we define 
    \begin{equation*}
        \matF_{W_0, r, n} = \left \{ \hat{f} = \frac{1}{\sqrt{n}} \sum_{\kappa = 1}^n C^0_{\kappa} \varphi'(W^0_{\kappa}  \cdot s ) s \cdot W_{\kappa} : ||W -  W^0|| \leq r \right \}.
    \end{equation*}
\vspace{-1em}
\end{definition}
This linearised approximation of the policy simplifies our analysis of the set of reachable states.
We further note that it might seem restrictive to consider a network without bias, but we can extend this analysis by adding another input dimension, which is always set to 1.

\subsection{Continuous Time Policy Gradient}
\label{sec:ctpg}
Under the parameterisation described above, the sequence of neural net parameters as described by the updates in equation \ref{eq:update_seq}, are determined by the semi-gradient update direction
\begin{align*}
    \nabla_{\theta} J = & \matE \left [ \nabla_a Q^{\pi} (s, a, t) \nabla_{\theta} f^{\txtlin}(s; W)\right ],
\end{align*}
where the expectation is over the visitation measure $\rho_{\pi}$ and $Q^{\pi}: \matS \times \matA \times [0, T]$ is the action-value function that represents the value of taking a constant action $a$ at time $t$. For further details see Appendix \ref{sec:cont_time_pg}.
As is usually done, the following stochastic gradient based update rule approximates the true gradient for the policy parameters
\begin{align}
\label{eq:sgd_discrete}
    W_{(k + 1) \eta} - W_{k \eta} = \frac{\eta}{B} \sum_{b = 1}^B \nabla_a Q^{W_{k \eta}}(s_b, a_b, t_b) \Phi(s_b; W_0), \vspace{-1em}
\end{align}
 where $ W_{k \eta}$ represents the parameters after $k$ gradient steps with learning rate $\eta$, $Q^{W k \eta}$ is the action-value function associated with policy parameterised by $f^{\txtlin}(; W^{k \eta})$, $\mathB_{W_{k \eta}} = \{ (s_b, a_b, t_b) \}_{b = 1}^B$ is randomly chosen batch of data from samples of the SDE \citep{doya2000reinforcement, jia2022policy}
 \begin{align*}
     d S_t = & \left (g(S_t) + \sum_{i = 1}^{d_a} h_i (S_t) f_i^{\txtlin}(s; W_{k \eta}) \right ) dt + \sigma(S_t) d w_t,
     \vspace{-1em}
 \end{align*}
where $W_0 = W^0$ (see Section \ref{sec:linearnn}), $w_t$ is the $d_s$ dimensional Wiener process where $\sigma: \matR^{d_s} \to \matR^{d_s \times d_s} $ is the exploration component of the agent.
We assume access to an oracle that gives us the gradients $\nabla_a Q^{W_{k \eta}}$, which do not need to be true in practice.
Therefore, a sample $\mathB_W$ is an i.i.d. set of samples from $\{1, \ldots, N'\}$, for large $N'$, of size $B$.
Thus we can write the expectation of the gradient update as follows
\begin{align*}
    \matE_{\mathB_{W}} \left [ \frac{\eta}{B} \sum_{b = 1}^B \nabla_a \hat{Q}^{W_{k \eta}}(s_b, a_b, t_b) \Phi(s_b; W_0)\right ] = \frac{\eta}{N'}\sum_{i = 1}^{ N'} \nabla_a \hat{Q}^{W_{k \eta}}(s_i, a_i, t_i) \Phi(s_i; W_0),
\end{align*}
 where we have an appropriate function $Q$ such that the above condition is satisfied.
Let the term on the right hand side be denoted by $\nabla_W J(W)$ in the limit $N' \to \infty$.
Here, $\sigma$ is the exploration component of the dynamics.
We re-write the update rule from equation \ref{eq:sgd_discrete} as follows,
\begin{align}
\begin{split}
\label{eq:grad_update_direction}
     W_{(k + 1) \eta} - W_{k \eta} =  \eta \nabla_W J(W) |_{W = W_{k \eta}} +  \eta \xi(W_{k \eta}, \mathB_{W_{k \eta}}) =  G(W_{k \eta}, \eta) ,
     \end{split}
\end{align}
where $\xi(W_{k \eta}, \mathB_{W_{k \eta}}) = \left (  \frac{1}{B}  \sum_{b = 1}^B  \nabla_a Q^{W_{k \eta}}(s_b, a_b, t_b) \Phi(s_b; W_0) - \nabla_W J(W) |_{W = W_{k \eta}}\right ) $. Therefore, we have $\matE_{\mathB_W}[ \xi(W, \mathB_W)] = 0$ given an unbiased sampling mechanism for $\mathB_{W}$.
Simiar formulation of SGD is also used in supervised learning \citep{cheng2020stochastic, ben2022high}.

\section{Main Result: Locally Attainable States}
\label{sec:main_result}

The state space is typically thought of as a dense Euclidean space with all states reachable, but it is not necessarily the case that all such states are reachable by the agent. 
Three main factors constrain the states available to an agent: \textbf{1)} the transition function, \textbf{2)} the family of functions to which the policy belongs, and \textbf{3)} the optimization process which determines the dynamics of parameters of the policies.
We therefore are interested in the set of states \textit{attained} by the trajectories of linearised policy with parameters that are optimised as in Section \ref{sec:ctpg} around a fixed state $s$ for time $\delta$.
The properties of this set gives us a proxy for the ``local manifold'' around any arbitrary state.

A vector field, its exponential map, and the corresponding Lie series described in Section \ref{sec:vectorfield} are analogous to parameterised policy, the state transition based on this policy, and an approximation of this rollout.
To formalise this, we denote the vector field determined by the parameters $W$ of a linearised policy with initialisation $W^0$ is
\begin{equation}
    X(W) =  g(x) + h(x, \Phi(x; W^0) W^0)  \Phi(x; W^0) W.
\end{equation}
The set of states attained by the rollout of this policy, parameterised by $W, W^0$, over time $\delta$ is therefore $e^{X(W)}_{(0, \delta)}(s)$, i.e. the image of the interval $(0, \delta)$ under the exponential map corresponding to the vector field $X(W, W_0)$.
Moreover, $W_0$ is randomly initialised, and the parameters $W$ are obtained through stochastic semigradient updates (equation \ref{eq:grad_update_direction}).

There are two time scales: one is the time of policy rollouts and the other is the policy parameter optimisation.
This complicates the analysis.
We will use $t$ for time in the \textit{physical} sense of an RL environment and $\tau$ for the gradient update step. 
Continuous-time analogues for discrete stochastic gradient descent algorithms at small step sizes have yielded remarkable theoretical analyses of algorithms \citep{mei2018mean, chizat2018global, Jacot2018NeuralTK, lee2019wide, cheng2020stochastic, ben2022high}.
Therefore, to analyse the evolution of the attainable states under a time-discretized sequence of parameters, we derive an approximate continuous time dynamics for the evolution of the randomly initialised parameters $W$.
Many theoretical frameworks that study SGD in continuous time seek to approximate the evolution of the high-dimensional parameter distribution, but we seek to closely approximate the Lie series.
We therefore utilise the theoretical framework provided by \citet{ben2022high}, with appropriate modifications, to analyse continuous-time dynamics of relevant statistics in the infinite-width limit.

Let  $\xi_n$, $G_n$ be the semi-gradients for linearised policy of width $n$, $f^{\txtlin}_n$.
Let $\eta_n$ be a sequence of learning rates such that $\eta_n \to 0$ as $n \to \infty$ at rate $\frac{1}{\sqrt{n}}$.
For a random variable $\mathbf{W}_n$, which determines the distribution of the $n d_s$ parameters, let $e_t^{X(\mathbf{W}_n)}(s)$ denote the push-forward of $\mathbf{W}_n$ of the exponential map.
In the case of random variables, the attained set of states is sampled from this time-dependent push-forward of the distribution $\mathbf{W}_n^{\tau}$, where $\tau$ is the gradient time step.
We make the following assumptions.
\begin{assumption}
    \label{assump:finite_variance1} Suppose $H_n(W, \mathB_W ) = \xi_n(W, \mathB_{W}) - G_n(W)$ for any $n$ and a given compact set $K$ there exists a constant $\sigma_{H, K}$ such that $\matE_{\mathB_W} \left [ \normltwo (H_n(W, \mathB_W )) ^4 \right ] \leq n \sigma_{H, K}^2 $ for $W \in K$, where $\normltwo$ is the 2-norm.
\end{assumption}
This assumption is a relaxed version of the assumption on the variance of the gradient update (assumption 4.4) made by \citet{Wang2019NeuralPG}.
We make a further assumption about the Lipschitz continuity of $H_n$ and $G_n$, similar to \citet{ben2022high}.
\begin{assumption}
    \label{assump:lipschitz1}
    $G_n$ is locally Lipschitz continuous in $W$.
\end{assumption}

Furthermore, we assume that the activation, $\varphi$, has bounded first and second derivatives everywhere in $\matR$. This assumption holds for GeLU activation.
We also denote by $J h_j(s)$ the $d_s \times d_s$ Jacobian of the $d_s \times 1$ vector-valued function $h_j(s)$.
We also define the proximity of a random variable to a manifold in a probabilistic manner.
\begin{definition}
     A random variable, $X$, is concentrated around a manifold $\matM$ with a rate $\mathcal R$ if $\Pr(\distfun(X, \matM) \geq D - O(\epsilon)) \leq e^{-\mathcal R (D)}$.
\end{definition}

Intuitively, this means that the probability that the random variable $X$ lies at more distance than $D$ decays exponentially.

\begin{theo}
\label{thm:statemanifold}
    Given a continuous time MDP $\matM$, a fixed state $s$, a sequence of two-layer linearised neural network policy, $f^{\txtlin}_n$, initialised with i.i.d samples from $\text{Normal}(0, 1/d_s)$, semi-gradient based updates $(\eta_n, \xi_n, G_n)$ which satisfy assumptions \ref{assump:finite_variance1}, \ref{assump:lipschitz1}, then for varying $\delta t \in (0, \delta)$ and fixed $\tau > 0$ the random variable defined by the push-forward of the random variable $W_n^{\tau}$ w.r.t the exponential map $e^{X(W^{\tau}_n)}_{\delta t}(s)$ converges weakly to a random variable $\hat{S} + \bar{S}$ such that $\hat{S}$ concentrates around an $m$-dimensional manifold $M_{\delta', \tau}$ with $m \leq 2 d_a + 1$ at rate $\mathcal R$, that depends on the operator norms of the matrices $J h_j(s), j \in \{1, \ldots, d_a \}$, the values $g_k(s), k \in \{1, \ldots, d_s\}$, $\tau$. Further, the variable $\bar{S}$ is $O(\delta^3)$.
\end{theo}
Intuitively, this means that in the infinite width limit for very low learning rates the probability mass of the push-forward of the exponential map is concentrated around a $2 d_a + 1$ dimensional manifold and this probability decays exponentially as one moves away from this manifold.
The proof is provided in Appendix \ref{sec:main_proof}.
The proof sketch is as follows:
\begin{enumerate}[labelwidth=3pt, labelindent=2pt]
    \item We expand the Lie series up to an error term of $\delta^3$ (Appendix \ref{sec:lie_series_feedback}).
    \item We then show the weak convergence of the dynamics of random variables that determine the Lie series in Appendix \ref{app:tracking_stats}, this Section closely follows the proof by \citet{ben2022high}.
    \item Finally, we show that the push forward of the random variable $W^{\tau}$ through Lie series expansion is concentrated around a space spanned by $2 d_a +1 $ vectors for fixed $t$ and therefore for variable $t$ there is a $2 d_a + 2$ around which the data lie, modulo the $\delta^3$ distance.
\end{enumerate}

The manifold $\matM$ is locally derived as being spanned by $( h_j, v_{j}^{\tau}, t g + t^2 g' )$ locally at $s$. Here, the directions in which the individual action dimensions change the state locally are $h_1, \ldots, h_{d_a}$.
The mean second-order change: $v_{j}^{\tau} = \sum_{k = 1}^{d_s} \frac{\partial h_{j}(s)}{\partial s_k} \sum_{j'=1}^{d_a}  \bar{a}^{\tau}_{j'} h_{j', k}(s)$, where $J h_j$ is the Jacobian of the function $h_j(s)$ and $\bar{a}^{\tau}_{j'}$ is a constant that depends on the gradient time $\tau$, and the paraboloid.
$g'$ is first order partial derivative of $g$, and therefore $t g + t^2 g'$ is a paraboloid. 
This is similar to how a local neighborhood is defined as being spanned by bases vectors in Section \ref{subsec:mathdefs}.
Informally extending and intuiting this result, one can hypothesize that over the training dynamics of a linearised neural network, if the parameters remain bounded, the union of trajectories starting from a state $s$ over a ``small'' time interval $\delta$ then the trajectories are concentrated around a $2d_a + 3$ manifold.
The reason being that their is an additional degrees of freedom from the gradient dynamics.
This means ``locally'' the data is concentrated around some low-dimensional manifold whose dimensionality is linear in $d_a$.

\section{Empirical Validation}
\label{sec:empiricalvalidation}

Our empirical validation is threefold.
First, we show the validity of the linearised parameterisation of the policy (Equation \ref{eq:linearised_policy}) as a theoretical model for canonical NNs (Equation \ref{eq:singlelayer}).
Second, we verify that the bound on the manifold dimensionality as in Theorem \ref{thm:statemanifold} holds in practice.
In the third subsection, we demonstrate the practical relevance of our result by demonstrating the benefits of learning compact low-dimensional representations, without significant computational overhead.

\subsection{Approximation Error With Linearised Policy}

\begin{wrapfigure}{r}{0.4\textwidth}
    \centering
    \includegraphics[width=0.75\linewidth]{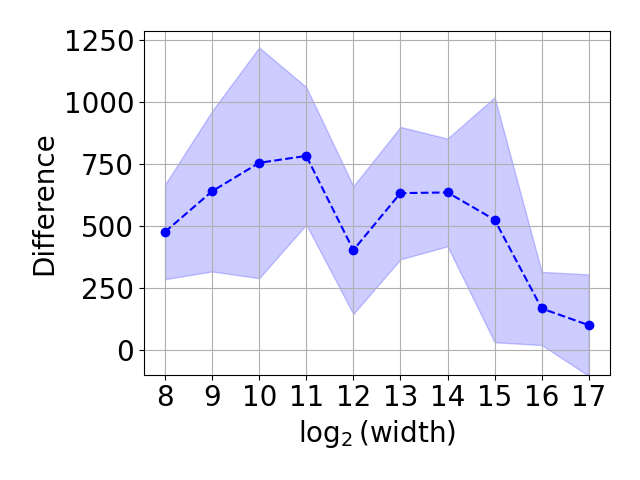}
    \caption{ We observe that the difference  between the returns approaches zero as we increase the width.}
    \label{fig:approx_err_width}
    \vspace{-12pt}
\end{wrapfigure}

We empirically observe the impact of our choice of linearised policies as a theoretical model for two-layer NNs by measuring the impact on the returns of this choice.
We calculate the difference in returns for DDPG using canonical NNs  and linearised NNs as parameterisations for its policy network, while only training the weights of the first layer.
Let the empirically observed return to which the DDPG algorithm converges using a canonical NN policy be $J^*_n$, and $J^{\txtlin}_n$ be the same for a linearised policy.
In figure \ref{fig:approx_err_width} we report the value $(J^*_n - J^{\txtlin}_n)$ on the y-axis and $\log_2 n$ on the x-axis for the Cheetah environment \citep{Todorov2012MuJoCoAP, Brockman2016OpenAIG}.
We present additional training curves in the appendix (figure \ref{fig:varying_width_cheetah}) that compare how the returns vary as training progresses.
Interestingly, at large widths ($\log_2 n > 15$) the discounted returns match across training steps for canonical and linearised policy parameterisations.
This suggests that the agent's learning dynamics are captured by a linearised policy as $n \to \infty$.
All results are averaged across 16 seeds.

\subsection{Empirical Dimensionality Estimation}
\label{sec:dimestimates}

\begin{figure}
    \centering
    
    \subfigure[Walker2D]{\includegraphics[width=0.23\textwidth]{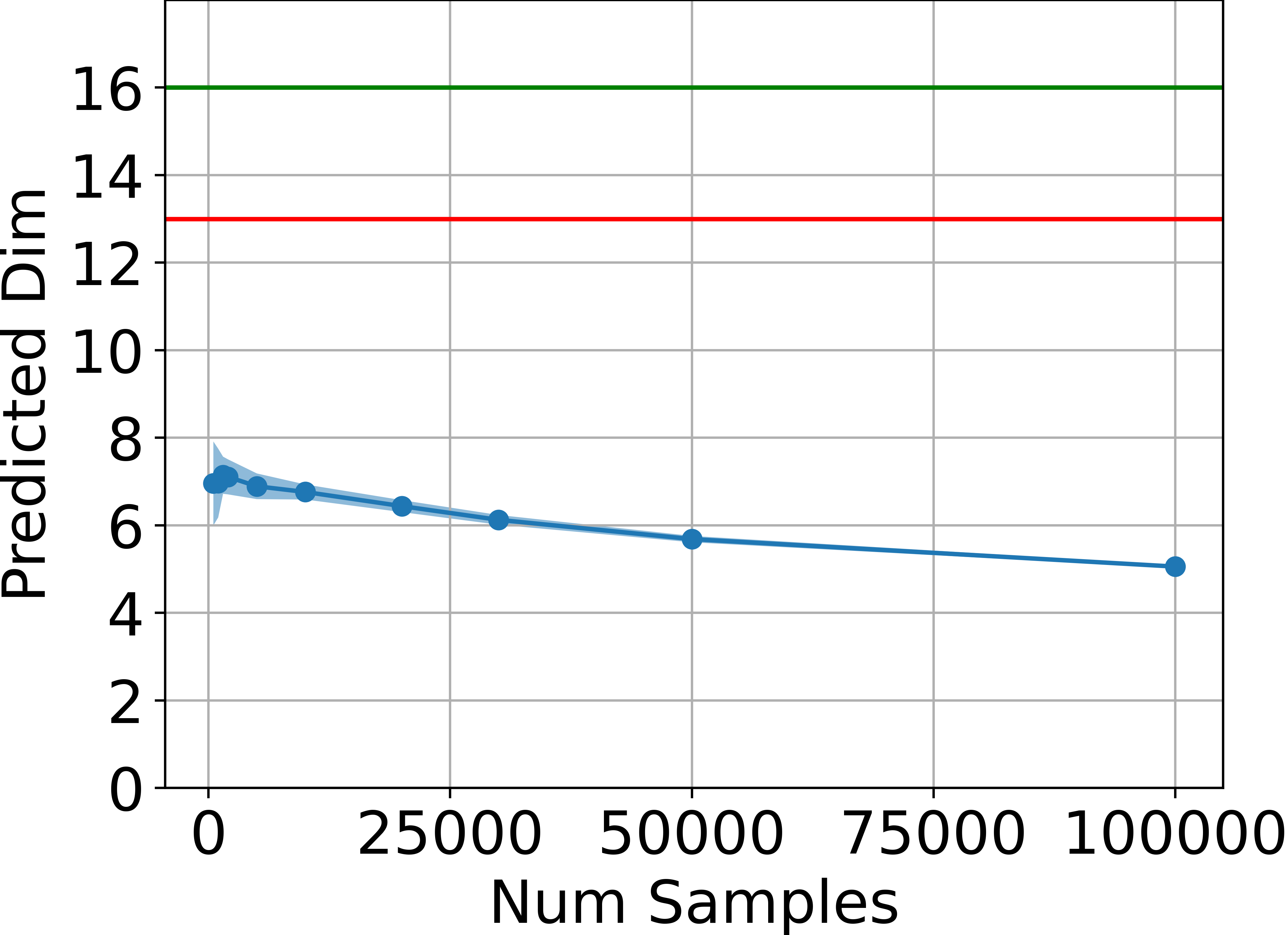}}
    \subfigure[Cheetah]{\includegraphics[width=0.23\textwidth]{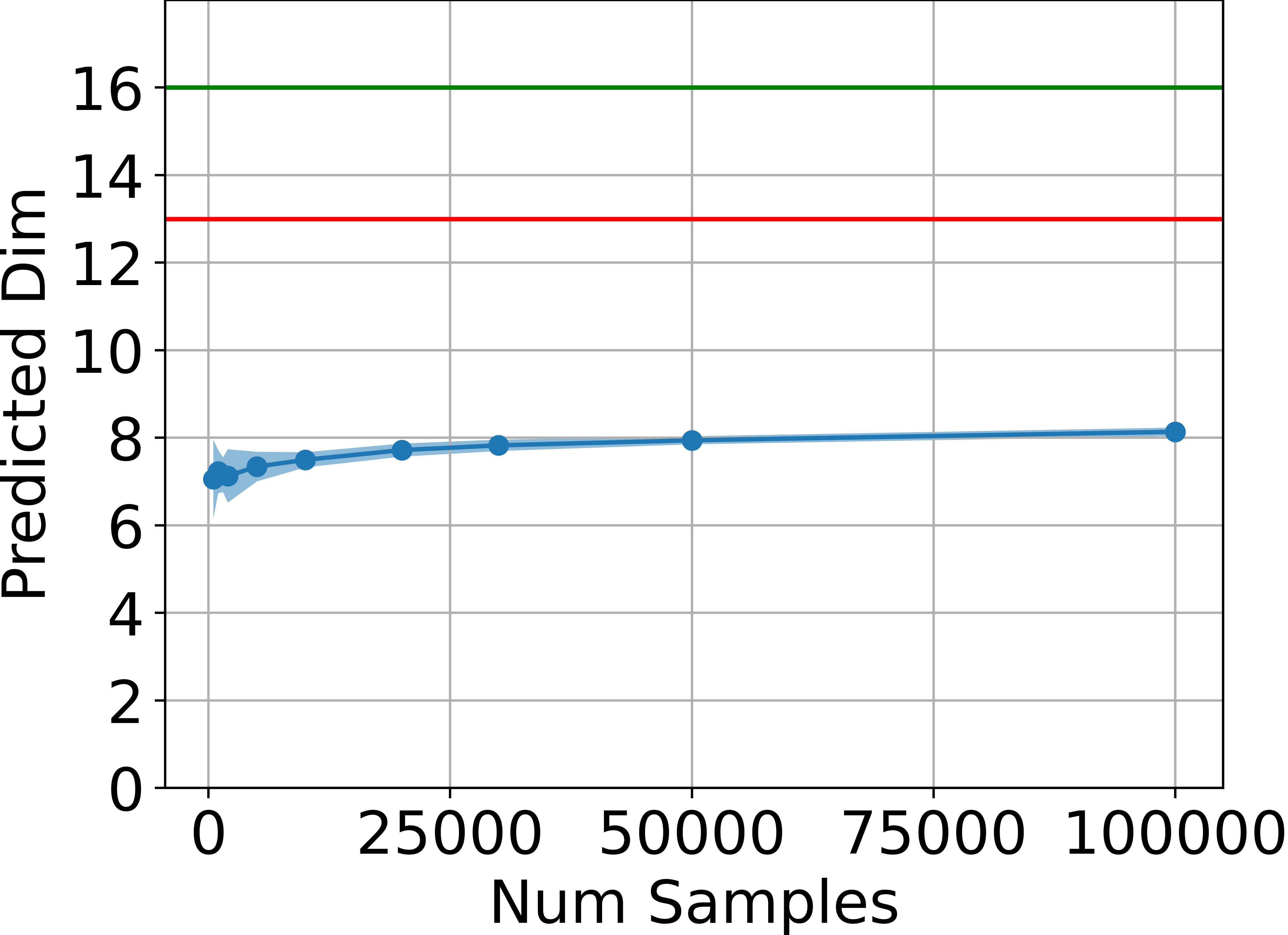}}
    \subfigure[Reacher]{\includegraphics[width=0.23\textwidth]{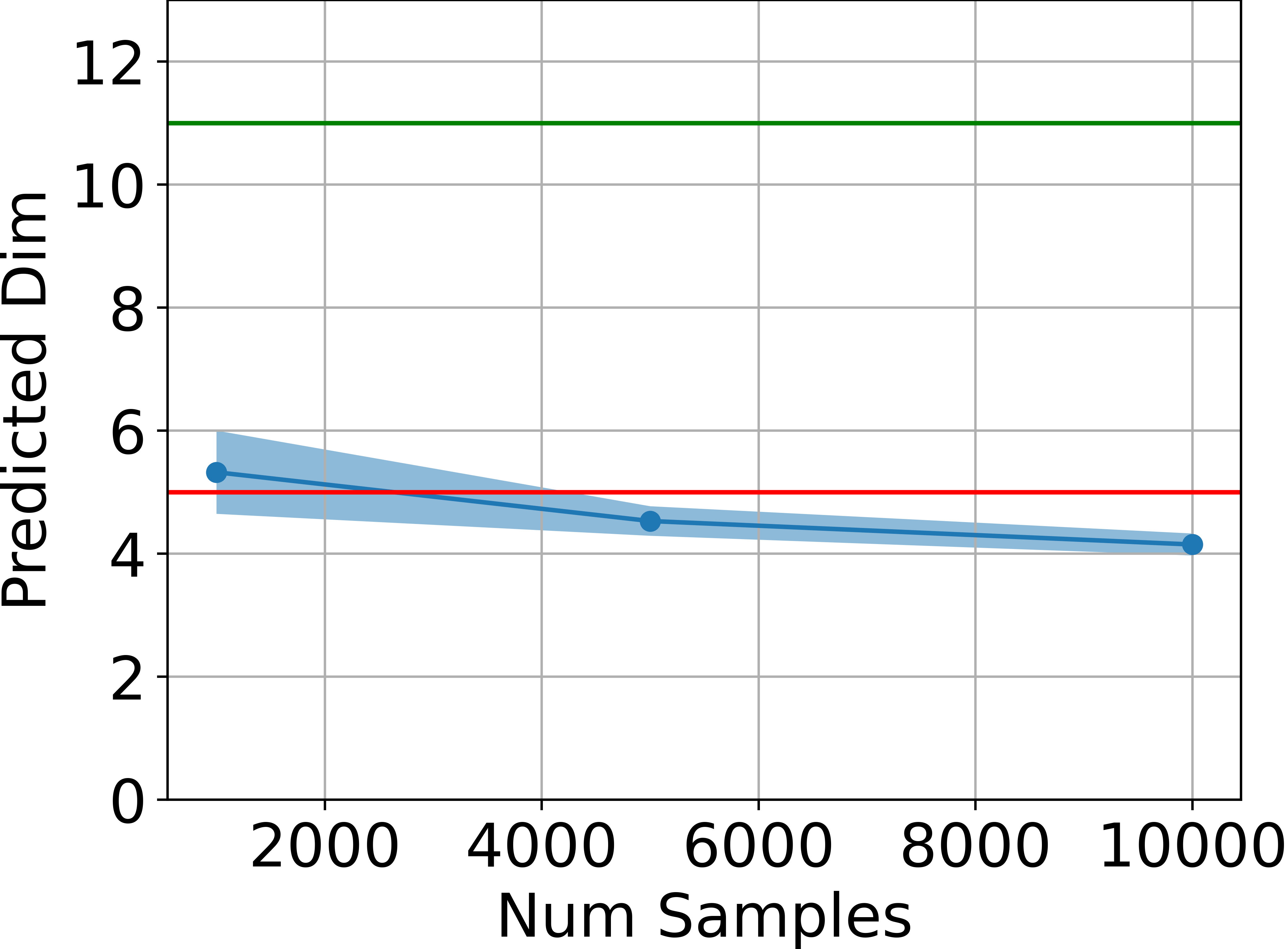}}
    \subfigure[Swimmer]{\includegraphics[width=0.23\textwidth]{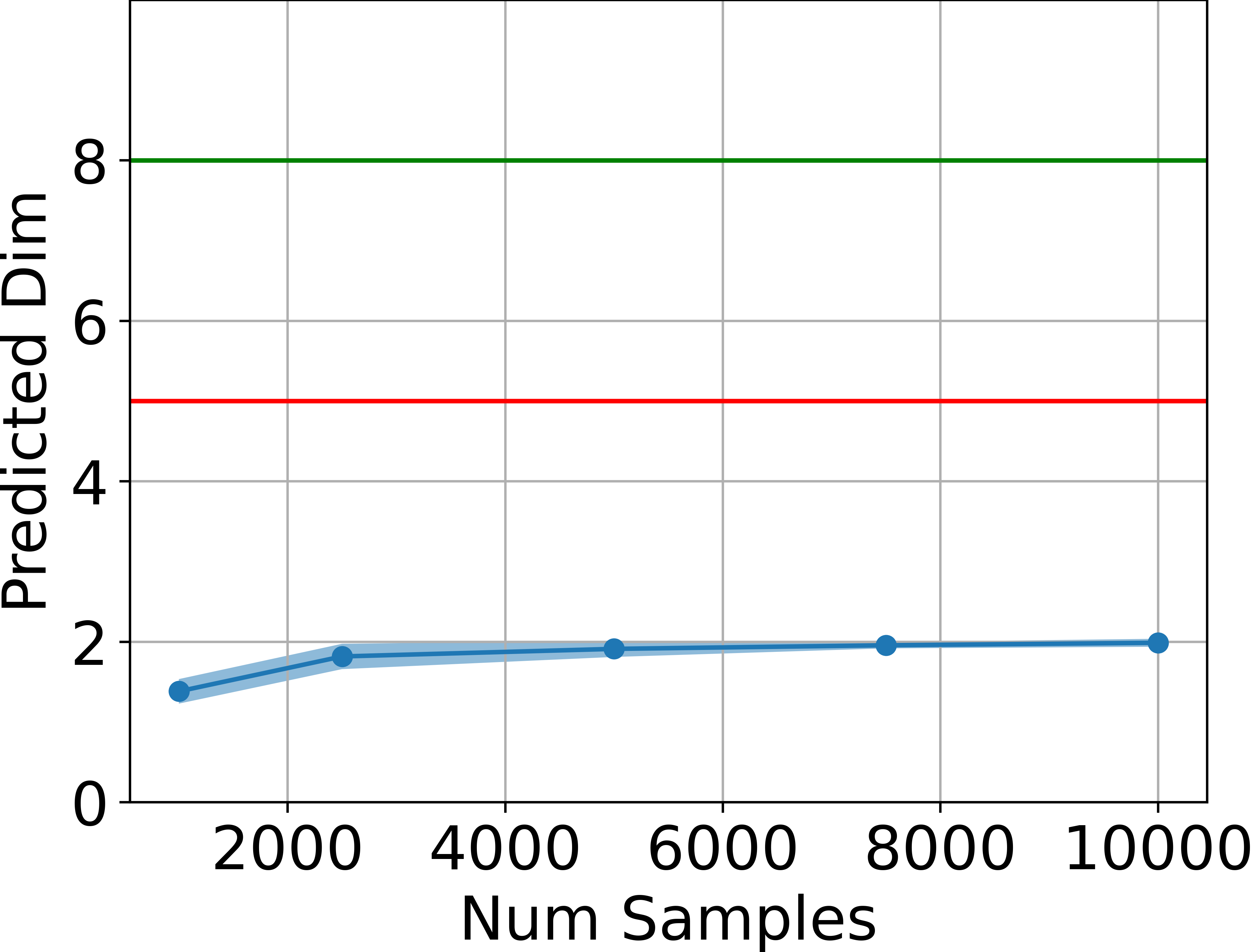}}
    
    \caption{Estimated dimensionality of the attainable states, in blue, is far below $d_s$ (green line) and also below $2 d_a +1$ (red line) for four tasks, estimated using the method by~\citet{Facco2017EstimatingTI}. }
    \label{fig:dimestimate} 
    \vspace{-15pt}
\end{figure}

We empirically corroborate our main result (Theorem \ref{thm:statemanifold}) in the MuJoCo domains provided in the OpenAI Gym \citep{Brockman2016OpenAIG}.
These are all continuous state and action spaces with $d_a < d_s$ for simulated robotic control tasks.
The states are typically sensor measurements such as angles, velocities, or orientation, and the actions are torques provided at various joints.
We estimate the dimensionality of the attainable set of states upon training.
To sample data from the manifold, we record the trajectories of multiple DDPG evaluation runs across different seeds \citep{Lillicrap2016ContinuousCW}, with two changes: we use GeLU activation \citep{Hendrycks2016GaussianEL} instead of ReLU in both policy and value networks, and we also use a single hidden layer network instead of 2 hidden layers for both networks.
Performance is comparable to the original DDPG architecture (see Appendix \ref{app:moddppgcomp}).
For background on DDPG refer to Appendix \ref{app:ddpg_background}.
These choices keep our evaluation of the upper bound as close to the theoretical assumptions as possible while still resulting in reasonably good discounted returns.
We then randomly sample states from the evaluation trajectories to obtain a subsample of states, $\matD = \{ s_i \}_{i = 1}^n$.
We estimate the dimensionality with 10 different subsamples of the same size to provide confidence intervals.

We employ the dimensionality estimation algorithm introduced by \citet{Facco2017EstimatingTI}, which estimates the intrinsic dimension of datasets characterized by non-uniform density and curvature, to empirically corroborate Theorem \ref{thm:statemanifold}.
More details on the dimensionality estimation procedure are presented in the Appendix \ref{app:dimestim}.
Estimates for four MuJoCo environments are shown in Figure \ref{fig:dimestimate}. 
For all environments, the estimate remains below the limit of $2d_a +1$ in accordance with Theorem \ref{thm:statemanifold}.

\subsection{Empirical Validation in Toy Linear Environment}
\begin{wrapfigure}{r}{0.4\textwidth}
    \centering
    \includegraphics[width=0.75\linewidth]{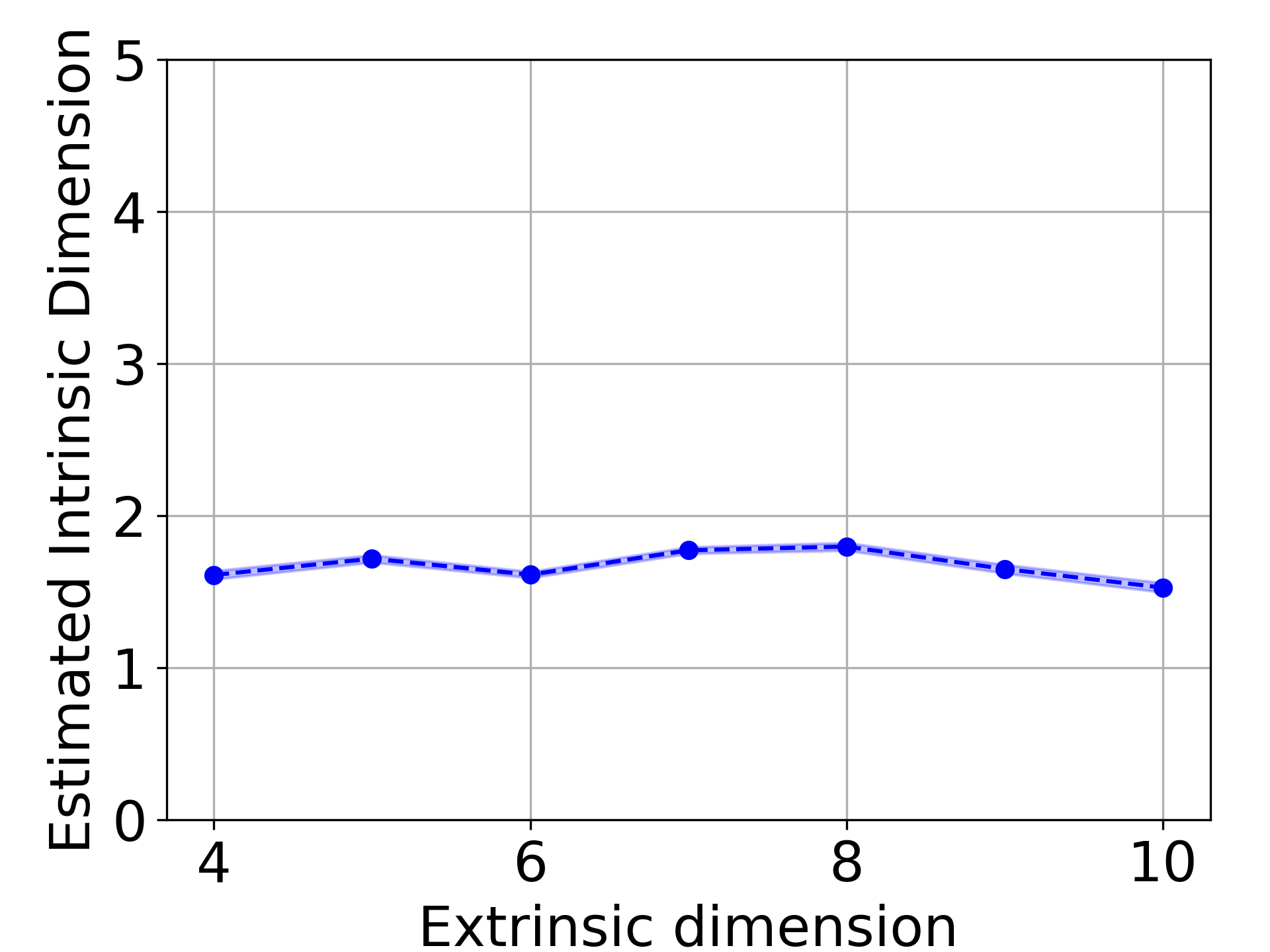}
    \caption{The intrinsic dimensionality  estimate of attainable states linear fully reachable system under linearised policy on y-axis.  }
    \label{fig:varying_dims}
    \vspace{-45pt}
\end{wrapfigure}
A deterministic system is fully reachable if given any start state, $s_0 \in \matR^{d_s}$, the system can be driven to any goal state in $\matR^{d_s}$.
To contrast our result to classic control theory, we demonstrate that for a control environment which is fully reachable using a time-variant or open loop policy the set of all the attainable states using a bounded family of linearised neural nets (definition \ref{def:linearpolicies}) is low-dimensional.
A common example of a fully reachable $d_s$-dimensional linear control problem with 1D controller is:
\begin{align}
\begin{split}
\label{eq:time_variant_policy}
\dot{s}(t) = \begin{bmatrix}
0 & 1 & 0 & \ldots & 0 \\
0 & 0 & 1 & \ldots & 0\\
\vdots & \vdots & \vdots & \vdots & \vdots  \\
0 & 0 & 0 & \ldots & 0
\end{bmatrix} s(t) + \begin{bmatrix}
0 \\
0 \\
\vdots\\
1
\end{bmatrix} \pi(t),
\end{split}
\end{align}
which is fully reachable. This follows from the fact that a linear system $\dot{x} = Ax + B u(t)$ is fully reachable if and only if its controllability matrix defined by $\mathcal{C} = [B, AB, A^2B, \dots, A^{d_s-1}B]$ is full rank \citep{kalman1960general, jurdjevic1997geometric}.
We instead evaluate the intrinsic dimension of the locally attainable set under feedback policies within our theoretical framework.
We do so for the set of states attained for small $t$ under the dynamics $\dot{s(t)} = A s(t) + B \Phi(x) W$, where $A, B$ are as in equation \ref{eq:time_variant_policy}.
To achieve this, for a fixed embedding dimension $d_s$ we obtain neural networks sampled uniformly randomly from the family of linearised neural networks as in definition \ref{def:linearpolicies}, with $r = 1.0, t \in  (0, 5), n = 1024$.
Consequently, we obtain 1000 policies with $\delta t = 0.01$, and therefore a sample of 500000 states to estimate the intrinsic dimension of the attained set of states using the algorithm of \citet{Facco2017EstimatingTI}.
We vary the dimensionality of the state space, $d_s$, from 3 to 10 to observe how the intrinsic dimension of the attained set of states varies with the embedding dimension while keeping $d_a$ fixed at 1.
The dimensionality of the attained set of states remains upper-bounded by  $2d_a + 1 = 3$ for this system (figure \ref{fig:varying_dims}).
This bound is even lower (at $d_a + 1 = 2$) for linear environments because the Lie series expansion (equation \ref{eq:expo_map}) gets truncated at $l = 1$ for GeLU activation owing to the fact that the second derivative is close to zero in most of $\matR$.

\subsection{Reinforcement Learning with Local Low-Dimensional Subspaces}
\label{sec:sac_modified}

To demonstrate the applicability of our theoretical result, we apply a fully-connected \textit{sparisification} MLP layer introduced by \citet{yu2023whitebox}.
In a series of works named the CRATE framework (Coding RAte reduction TransformEr) \citep{chan2022redunet, yu2023whitebox, yu2023emergence, pai2024masked}, researchers have argued for a better design of neural networks that compress and transform high-dimensional data given that it is sampled from low-dimensional manifolds.
They assumed that the data lie near a union of low-dimensional manifolds $\cup_i M_i$, where each manifold has dimension $d_i \ll d_s$.
An innovation that has remarkable empirical and theoretical results under the manifold hypothesis learns \textit{ sparse} high-dimensional representations of the data $\phi: \matR^{d_s} \to \matR^{n}$ with $n \geq \sum_{i} d_i$.
These representations are orthogonal for data points across two manifolds, $x_i \in M_i, x_j \in M_j, i \neq j \implies \phi(s_i) \cdot \phi(s_j) = 0$, and low-rank on or near the same manifold, $\text{rank}([\phi(x_i^1), \phi(x_i^2), \ldots, \phi(x_i^k )]) \approx d_i$ for $x_i^j \in M_i$.
This can be viewed as disentangling representations across different manifolds via sparsification.
The sparsification layer of width $n$ \citep{yu2023whitebox} is defined as
\begin{equation}
\label{eq:sparse_layer}
    Z^{\ell+1} = \text{ReLU}\left( Z^{\ell} + \alpha W^{\intercal} \left( Z^{\ell} -  W Z^{\ell} \right) - \alpha \lambda_1 \right),
\end{equation}
where $\alpha$ is the \textit{sparse rate step size} parameter, $Z^{\ell}$ is the input to the $\ell$-th layer, $W$ are the $n \times n$ weight matrix.
For further explanation of this sparsification layer, refer to Appendix \ref{sec:background}.
As is evident, this is a linear transformation of the feed-forward layer and therefore does not add computational overhead.
To verify the efficacy of disentangled low-dimensional representations, under the manifold hypothesis, we replace one feed-forward layer of all the policy and Q networks with a sparsification layer within the SAC framework (see Appendix \ref{app:sac_background} for background on SAC).
We also use wider networks of width 1024, for both the baseline and modified architecture, for comparison.
This is to satisfy the assumption $n \geq \sum_{i} d_i$ described above.
With a simple code change of about 5 lines with same number of parameters and two additional hyperparameters, $\alpha_{\pi}, \alpha_Q$, we see improvements in the discounted returns for high-dimensional control environments: Ant \citep{Brockman2016OpenAIG}, Dog Stand, Dog Walk and Quadruped Walk \citep{tunyasuvunakool2020dm_control}, averaged across 16 seeds.
Discounted returns are reported on the y-axis against the number of samples on the x-axis in Figure \ref{fig:sac_comparison}.
We observe that SAC with fully connected network fails to learn in high-dimensional Dog environments where as SAC equipped with a single sparsification layer instead of a fully connected layer does far better.
This demonstrates the efficacy of learning local low-dimensional representations which arise from wide neural nets.
We use the same hyperparameter for learning rates and entropy regularization for both the sparse SAC and vanilla SAC as those provided in the CleanRL library \citep{huang2022cleanrl}.
We report ablation for Ant and Humanoid domains over the step size parameter in Appendix \ref{app:ablation}.
\begin{figure}
    \centering
    
    \subfigure[Ant]{\includegraphics[width=0.23\textwidth]{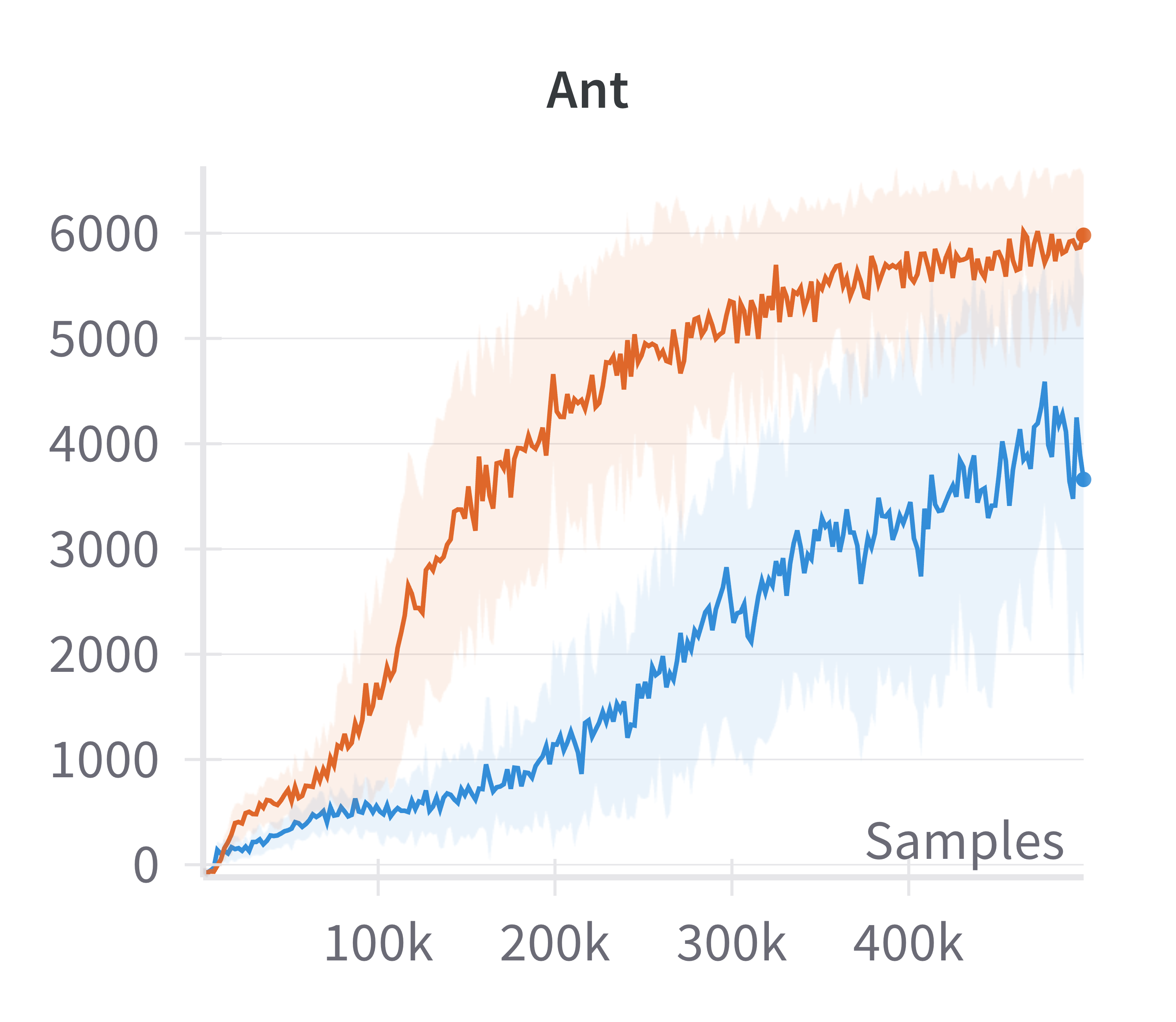}}
    \subfigure[Dog Stand]{\includegraphics[width=0.23\textwidth]{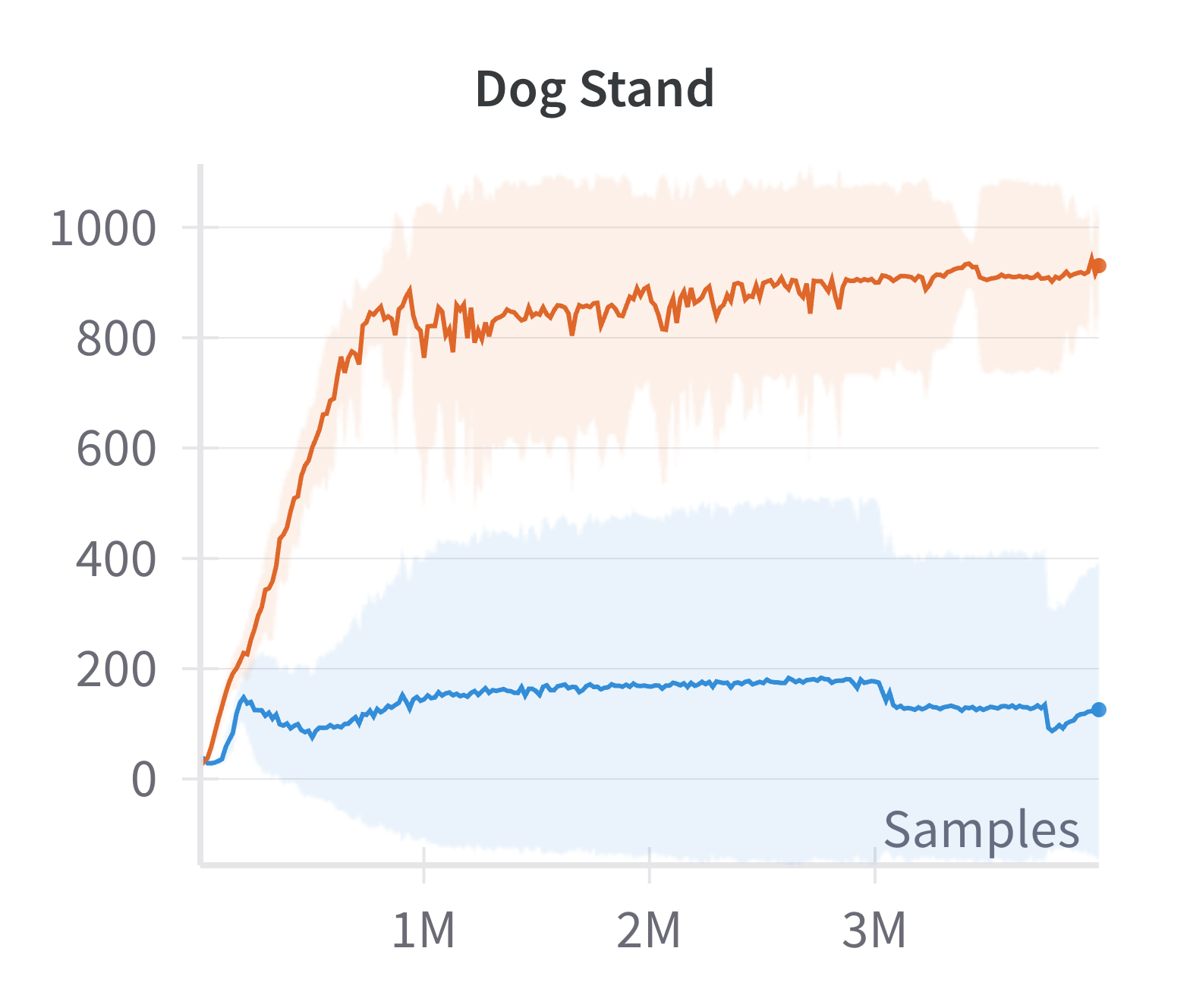}}
    \subfigure[Dog Walk]{\includegraphics[width=0.23\textwidth]{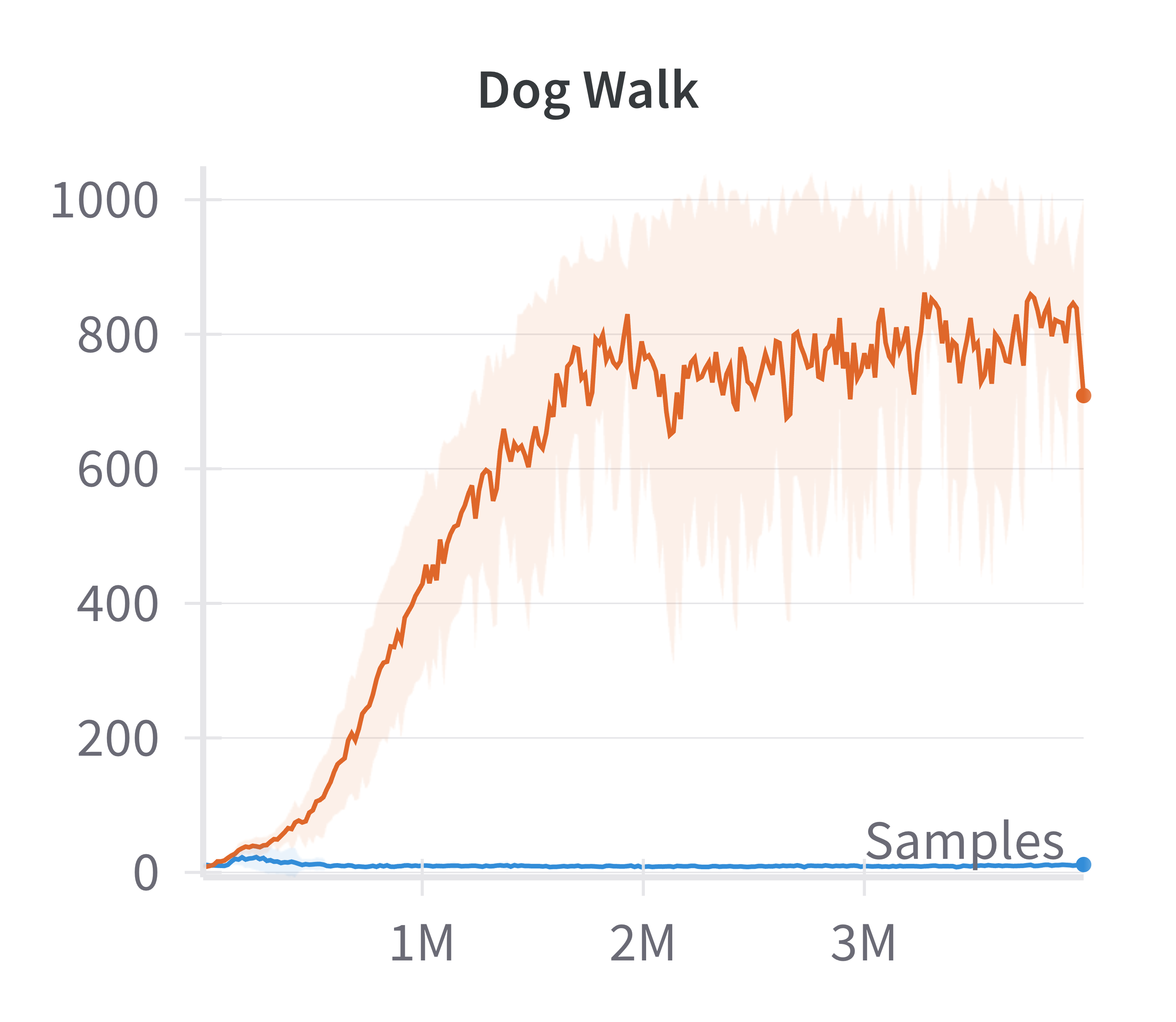}}
    \subfigure[Quadruped Walk]{\includegraphics[width=0.23\textwidth]{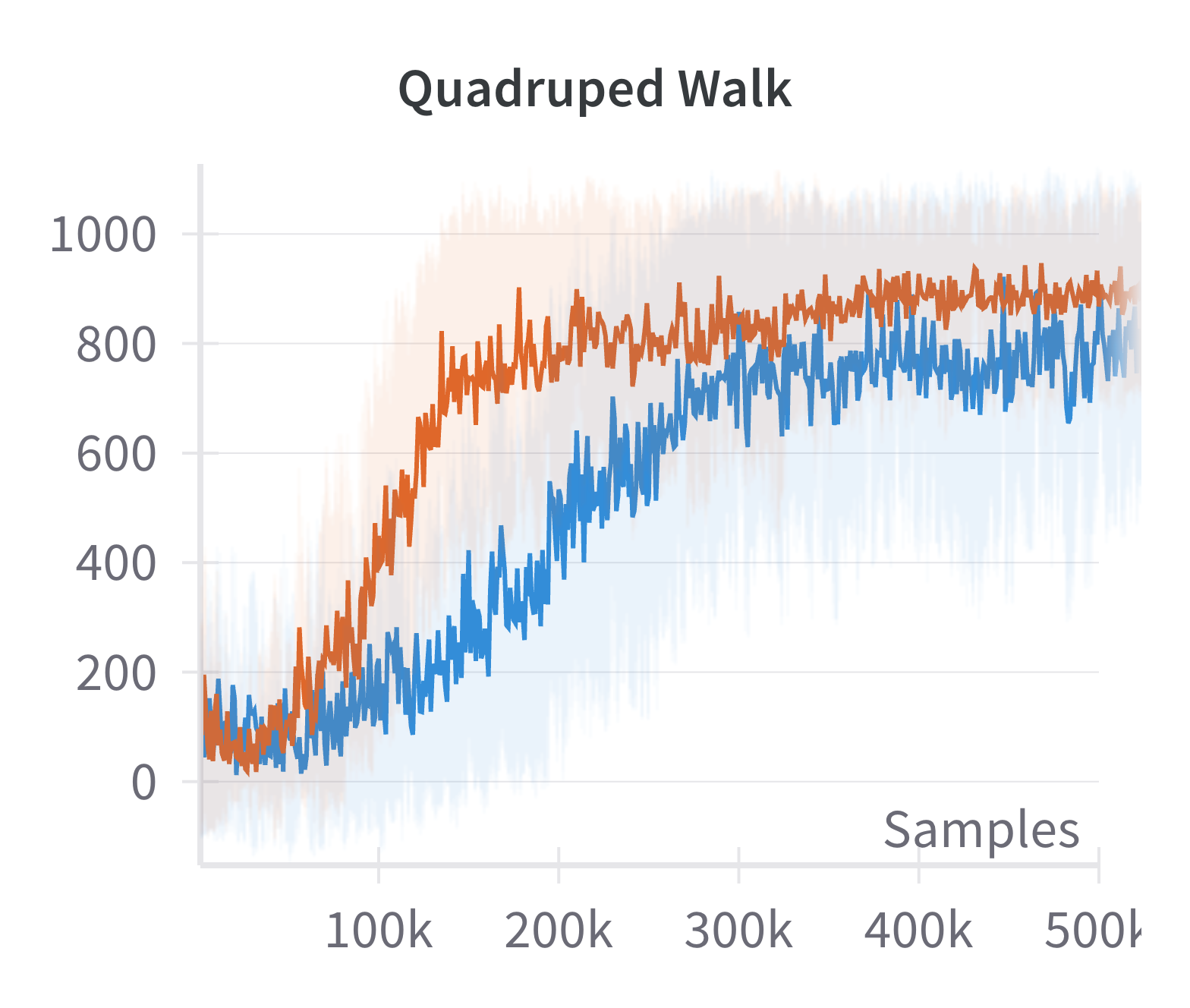}}
    
    \caption{Discounted returns of SAC (blue) and sparse SAC (red) $\alpha_{\pi} = 0.5, \alpha_Q = 0.4$}
    \label{fig:sac_comparison} 
    \vspace{-15pt}
\end{figure}

\section{Discussion}

We have proved that locally there exists a low-dimensional structure to the continuous time trajectories of policies learned using a semi-gradient ascent method.
We develop a theoretical model where both transition dynamics and training dynamics are continuous-time.
Ours is not only the first result of its kind, but we also introduce new mathematical models for the study of RL.
In addition, we exploit this low-dimensional structure for efficient RL in high-dimensional environments with minimal changes.
For detailed related work, refer to Appendix \ref{sec:rel_work} and address the broader applicability of our theoretical work in Appendix \ref{sec:extension_limits}.
We also assume access to the true value function $Q$, this is not practical and warrants an extension to the setting where this function is noisy.
A key challenge that remains is extending this theory   to very high-dimensional datasets where $d_s \to \infty$ as $n \to \infty$.
We anticipate that noise in this settings will further complicate analysis.
Additionally, the impact of stochastic transitions remains unexplored, as our current analysis assumes deterministic transitions.

\section{Acknowledgments}

This research was partially funded by the ONR under the REPRISM MURI N000142412603 and ONR \#N00014-22-1-2592.
Partial funding for this work was provided by The Robotics and AI Institute, for which we are very grateful.
This research was conducted using computational resources and services at the Center for Computation and Visualization, Brown University.
We thank Sam Lobel, Rafael Rodriguez Sanchez, Akhil Bagaria, and Tejas Kotwal for fruitful conversations and guidance on the project.
This work would not have been possible without their input.
We would like to thank the members of Brown Intelligent Robot Lab for their continued support.
We thank Wasiwasi Mgonzo, Alessio Mazzetto, Ji Won Chung, Renato Amado, Pedro Lopes De Almeida, Kalaiyarasan Arumugam, the second floor community at the Center for Information Technology, and the larger Brown University graduate student community for their moral support and encouragement.
The anonymous reviewers who have contributed very significantly to our work over time have our sincere gratitude.

\bibliography{iclr2025_conference}
\bibliographystyle{iclr2025_conference}

\appendix

\newpage

\noindent{\huge \centerline{ \bfseries Appendix}\par}

\section{Manifold Background}
\label{app:manibackground}
Here, we provide precise definitions which are essential to the theory of differential geometry but might not be absolutely essential to understanding our results in the main body of our work.
The tangent space characterises the geometry of the manifold and it is defined as follows.

\begin{definition}
\label{def:tangentspace}
\textit{
Let $M$ be an $m$-manifold in $\mathbb R^k$ and $p \in M$ be a fixed point. 
A vector $v \in \mathbb R^k$ is called a tangent vector of $M$ at $p$ if there exists a smooth curve $\gamma: I \to M$ such that
$ \gamma(0) = p, \Dot{\gamma}(0) = v. $
The set $ T_p M := \{ \Dot{\gamma}(0) | \gamma: \mathbb R \to M \text{ is smooth}, \gamma(0) = p \}$ of tangent vectors of $M$ at $p$ is called the tangent space of $M$ at $p$.}
\end{definition}

Continuing our example, the tangent space of a point $p$ in $S^2$ is the vertical plane tangent to the cylinder at that point.
For a small enough $\epsilon$ and a vector $v \in T_p S^2$ there exists a unique curve $\gamma: [-\epsilon, \epsilon] \to S^2$ such that $\gamma(0) = p$ and $\Dot{\gamma}(0) = v$.
The union of tangent spaces at all points is termed the tangent bundle and denoted by $T (M)$.
At point \( p \), the tangent space \( T_p M \) is spanned by the vectors \( \left\{ \left. \dfrac{\partial}{\partial x^i} \right|_p \right\} \). Any tangent vector \( v \in T_p M \) can be expressed as a linear combination:

\[
v = \sum_{i=1}^m v^i \left. \dfrac{\partial}{\partial x^i} \right|_p,
\]
where \( v^i \in \mathbb{R} \) are the components of \( v \) in the basis \( \left\{ \left. \dfrac{\partial}{\partial x^i} \right|_p \right\} \).

\section{Feedback Action Lie Series}
\label{sec:lie_series_feedback}

Consider the vector fields for a feedback policy $a(x) \in C^{\infty}$:

\begin{align*}
     X = & g(x) + h(x) a(x).
\end{align*}

Consider the Lie series and its first term:
\begin{align*}
e_t^X = & x + t X(x) + \sum_{l = 1}^{\infty} \frac{t^{l + 1}}{(l + 1)!} L_X^{l + 1} (x) \\
    L_{X_1} (x)  =& g(x) + h(x) a(x).
\end{align*}

The second order term can be written as:
\begin{align*}
  (L_{X_1}^2 x)_i = & \sum_{k = 1}^{d_s} \left (g_k(x) + \sum_{j = 1}^{d_a} h_{j, k}(x) a_j(x) \right )  \frac{\partial g_i(x) +  h_i(x)  a(x)}{\partial x_k} \\
  = & \sum_{k = 1}^{d_s} \Bigg ( g_k(x) \frac{\partial g_i(x)}{\partial x_k} + \sum_{j = 1}^{d_a} h_{j, k}(x) a_j(x) \frac{\partial g_i(x)}{\partial x_k} \\
 & + g_k(x) \left ( \sum_{j'=1}^{d_a} \frac{\partial h_{i, j'}(x)}{\partial x_k} a_{j'}(x) +  \frac{\partial a_{j'}(x)}{\partial x_k} h_{i, j'}(x) \right ) \\
 & + \sum_{j = 1}^{d_a}  \sum_{j'=1}^{d_a}  h_{j, k}(x) a_j(x) \frac{\partial h_{i, j'}(x)}{\partial x_k} a_{j'}(x) + h_{j, k}(x) a_j(x)  \frac{\partial a_{j'}(x)}{\partial x_k} h_{i, j'}(x) \Bigg).
\end{align*}

While we do not use this third order term we present it to demonstrate that tracking the statistics of this gets exponentially difficult:
\begin{align*}
    (L_{X_1}^3 x)_i = & \sum_{k' = 1}^{d_s} \left (g_k(x) + \sum_{j = 1}^{d_a} h_{j, k}(x) a_j(x) \right ) \frac{(L_{X_1}^2 x)_i}{\partial x_{k'}} \\
    = & \sum_{k' = 1}^{d_s} \left (g_k'(x) + \sum_{j'' = 1}^{d_a} h_{j'', k'}(x) a_{j''}(x) \right ) \Bigg ( \sum_{k = 1}^{d_s}  \frac{\partial g_k(x)}{\partial x_{k'}} + g_k(x)\frac{\partial g_i(x)}{\partial x_{k'}\partial x_k} \\
    & + \sum_{j = 1}^{d_a} \frac{\partial h_{j, k}(x)}{\partial x_{k'}} a_j(x) \frac{\partial g_i(x)}{\partial x_k} + h_{j, k}(x) \frac{\partial a_j(x)}{\partial x_{k'}}  \frac{\partial g_i(x)}{\partial x_k} + h_{j, k}(x) a_j(x) \frac{\partial^2 g_i(x)}{\partial x_{k'} \partial x_k} \\
    & + \sum_{j'=1}^{d_a} \frac{\partial g_k(x)}{\partial x_{k'}} \frac{\partial h_{i, j'}(x)}{\partial x_k} a_{j'}(x) +  g_k(x) \frac{\partial^2 h_{i, j'}(x)}{\partial x_{k'} \partial x_k} a_{j'}(x)  +  g_k(x) \frac{\partial h_{i, j'}(x)}{\partial x_k} \frac{\partial a_{j'}(x)}{\partial x_{k'}} \\
    & + \sum_{j'=1}^{d_a} \frac{\partial g_k(x)}{\partial x_{k'}}  \frac{\partial a_{j'}(x)}{\partial x_k} h_{i, j'}(x) +   g_k(x)\frac{\partial^2 a_{j'}(x)}{\partial x_{k'} \partial x_k} h_{i, j'}(x)  +g_k(x)\frac{\partial a_{j'}(x)}{\partial x_k} \frac{\partial h_{i, j'}(x)}{\partial x_{k'}} \\
    & + \sum_{j = 1}^{d_a}  \sum_{j'=1}^{d_a} \Bigg ( \frac{\partial  h_{j, k}(x)}{\partial x_{k'}}  a_j(x) \frac{\partial h_{i, j'}(x)}{\partial x_k} a_{j'}(x) + h_{j, k}(x) \frac{\partial a_j(x)}{\partial x_{k'}} \frac{\partial h_{i, j'}(x)}{\partial x_k} a_{j'}(x) +  h_{j, k}(x) a_j(x) \frac{\partial^2 h_{i, j'}(x)}{\partial x_{k'} \partial x_k} a_{j'}(x) \\
    & + h_{j, k}(x) a_{j}(x)  \frac{\partial h_{i, j'}(x)}{\partial x_k}  \frac{\partial a_{j'}(x)}{\partial x_{k'}} + \frac{\partial h_{j, k}(x)}{\partial x_{k'}} a_j(x)  \frac{\partial a_{j'}(x)}{\partial x_k} h_{i, j'}(x) + h_{j, k}(x) \frac{\partial a_j(x)}{\partial x_{k'}}  \frac{\partial a_{j'}(x)}{\partial x_k} h_{i, j'}(x) \\
    & + h_{j, k}(x) a_j(x)  \frac{\partial^2 a_{j'}(x)}{\partial x_{k'} \partial x_k} h_{i, j'}(x) + h_{j, k}(x) a_j(x)  \frac{\partial a_{j'}(x)}{\partial x_k} \frac{\partial h_{i, j'}(x)}{\partial x_{k'}}  \Bigg ) \Bigg ).
\end{align*}

\section{Some Helpful Derivations}
\label{sec:helpful_derive}
Here we derive various expressions to bound their magnitude in terms of the width of the NN $n$.
First we consider the gradient term:
\begin{align*}
   G(Y) = & \lim_{N' \to \infty} \frac{1}{N'}\sum_{i = 1}^{ N'} \nabla_a \hat{Q}^{Y}(s_i, a_i, t_i) \Phi(s_i; W_0) \\
   = &  \lim_{N' \to \infty}\frac{\eta}{N'} \sum_{i = 1}^{ N'} q^i(s_i) \Phi(s_i; W_0) \\
   =& \lim_{N' \to \infty}\frac{1}{N' \sqrt{n}} \sum_{i = 1}^{ N'} \left [ s^i_{k} \varphi'(W^0_m \cdot s^i)  \sum_{j = 1}^{d_a} q_j^i(s_i) C^0_{j, m }  \right ]_{m (d_s - 1) + k} \\
   =& \frac{1}{ \sqrt{n}} \left [ \sum_{j = 1}^{d_a} C^0_{j, m }  \lim_{N' \to \infty} \frac{1}{N'}
  \sum_{i = 1}^{ N'} s^i_{k} \varphi'(W^0_m \cdot s^i)   q_j^i(s_i)  \right ]_{m (d_s - 1) + k} \\
  = &  \frac{1}{ \sqrt{n}} \left [ \sum_{j = 1}^{d_a} C^0_{j, m }  G'_{j, m (d_s - 1) + k} (Y) \right ]_{m (d_s - 1) + k}.
\end{align*}

The fourth equality suggests that individual elements of the vector $G$ at any point are iid samples from the same random variable. Similarly we expand the term for $M^2(Y)$
\begin{align*}
    \xi(Y) = & \frac{\sqrt{\eta}}{\sqrt{n}} \matE_{\mathB_Y} \left [ \frac{1}{B} \sum_{b = 1}^B  \sum_{j = 1}^{d_a} q_j(s^b)  C^0_{j, m }\varphi'(W^0_m \cdot s^b) s^b_k - \sum_{j = 1}^{d_a}  C^0_{j, m }  G'_{j, m (d_s - 1) + k} (Y) \right ]_{m (d_s - 1) + k} \\
    = & \frac{\sqrt{\eta}}{\sqrt{n}} \matE_{\mathB_Y} \left [  \sum_{j = 1}^{d_a}  C^0_{j, m } \left ( \frac{1}{B}  q_j(s^b) \sum_{b = 1}^B \varphi'(W^0_m \cdot s^b) s^b_k - \sum_{j = 1}^{d_a}    G'_{j, m (d_s - 1) + k} (Y) \right ) \right ]_{m (d_s - 1) + k} \\
    = &  \frac{\sqrt{\eta}}{\sqrt{n}} \sum_{j = 1}^{d_a}  \left [ C^0_{j, m } \matE_{\mathB_Y} \left [  \frac{1}{B}  q_j(s^b) \sum_{b = 1}^B \varphi'(W^0_m \cdot s^b) s^b_k - \sum_{j = 1}^{d_a}    G'_{j, m (d_s - 1) + k} (Y) \right ] \right ]_{m (d_s - 1) + k} \\
    M^2(Y) = & \frac{\eta}{n} \matE_{\mathB_Y} \Bigg [ \left ( \frac{1}{B} \sum_{b = 1}^B  \sum_{j = 1}^{d_a} q_j(s^b)  C^0_{j, m }\varphi'(W^0_m \cdot s^b) s^b_k - \sum_{j = 1}^{d_a}  C^0_{j, m }  G'_{j, m (d_s - 1) + k} (Y) \right ) \\
    & \left ( \frac{1}{B} \sum_{b' = 1}^B  \sum_{j' = 1}^{d_a} q_{j'}(s^{b'})  C^0_{j', m' }\varphi'(W^0_{m'} \cdot s^{b'}) s^{b'}_{k'} - \sum_{j' = 1}^{d_a}  C^0_{j', m' }  G'_{j', m' (d_s - 1) + k'} (Y) \right ) \Bigg ]_{m (d_s - 1) + k, m' (d_s - 1) + k'} \\
    = & \frac{\eta}{n} \matE_{\mathB_Y} \Bigg [ \left ( \frac{1}{B^2} \sum_{b' = 1}^B  \sum_{j' = 1}^{d_a} \sum_{b = 1}^B  \sum_{j = 1}^{d_a} q_j(s^b) q_{j'}(s^{b'}) C^0_{j, m }\varphi'(W^0_m \cdot s^b) s^b_k   C^0_{j', m' }\varphi'(W^0_{m'} \cdot s^{b'}) s^{b'}_{k'} \right ) \\
    & - \left ( \frac{1}{B} \sum_{j' = 1}^{d_a} \sum_{b = 1}^B  \sum_{j = 1}^{d_a} q_j(s^b)  C^0_{j, m }\varphi'(W^0_m \cdot s^b) s^b_k  C^0_{j', m' }  G'_{j', m' (d_s - 1) + k'} (Y) \right ) \\
   & - \frac{1}{B} \sum_{j = 1}^{d_a}   \sum_{b' = 1}^B  \sum_{j' = 1}^{d_a} C^0_{j, m } G'_{j, m (d_s - 1) + k} (Y)  q_{j'}(s^{b'})  C^0_{j', m' }\varphi'(W^0_{m'} \cdot s^{b'}) s^{b'}_{k'} \\
   & +   \sum_{j = 1}^{d_a}  \sum_{j' = 1}^{d_a} C^0_{j, m }  G'_{j, m (d_s - 1) + k} (Y) C^0_{j', m' }  G'_{j', m' (d_s - 1) + k'} (Y) \Bigg ]_{m (d_s - 1) + k, m' (d_s - 1) + k'},
\end{align*}

which we combine to form:
\begin{align*}
     M^2(Y) = &  \frac{\eta}{n}  \Bigg [ \sum_{j = 1}^{d_a}  \sum_{j' = 1}^{d_a} C^0_{j, m } C^0_{j', m' } \matE_{\mathB_Y} \Bigg [ \frac{1}{B^2} \sum_{b' = 1}^B \sum_{b' = 1}^B q_j(s^b) \varphi'(W^0_m \cdot s^b)  s^b_k   q_{j'}(s^{b'})\varphi'(W^0_{m'} \cdot s^{b'}) s^{b'}_{k'}\\
    & -   \left ( \frac{1}{B} \sum_{b = 1}^B  q_{j'}(s^b)  C\varphi'(W^0_m \cdot s^b) s^b_k  G'_{j', m' (d_s - 1) + k'} (Y) \right ) \\
    & - \left ( \frac{1}{B}  \sum_{b = 1}^B  q_j(s^b)  \varphi'(W^0_{m'} \cdot s^b) s^b_k  G'_{j, m (d_s - 1) + k} (Y) \right ) \\
    & +G'_{j, m (d_s - 1) + k} (Y) G'_{j', m' (d_s - 1) + k'} (Y)  \Bigg ] \Bigg ]_{m (d_s - 1) + k, m' (d_s - 1) + k'}.
\end{align*}

Let the internal term in the summation be defined as follows:
\begin{align*}
    H_{m (d_s - 1) + k, m' (d_s - 1) + k'}^{j, j'} = & \matE_{\mathB_Y} \Bigg [ \frac{1}{B^2} \sum_{b = 1}^B \sum_{b' = 1}^B q_j(s^b) \varphi'(W^0_m \cdot s^b)  s^b_k   q_{j'}(s^{b'})\varphi'(W^0_{m'} \cdot s^{b'}) s^{b'}_{k'}\\
    & -   \left ( \frac{1}{B} \sum_{b = 1}^B  q_j(s^b)  C\varphi'(W^0_m \cdot s^b) s^b_k  G'_{j', m' (d_s - 1) + k'} (Y) \right ) \\
    & - \left ( \frac{1}{B}  \sum_{b = 1}^B  q_{j'}(s^b)  \varphi'(W^0_{m'} \cdot s^b) s^b_k  G'_{j, m (d_s - 1) + k} (Y) \right ) \\
    & +G'_{j, m (d_s - 1) + k} (Y) G'_{j', m' (d_s - 1) + k'} (Y)  \Bigg ] \\
    =& \matE_{\mathB_Y} \Bigg [ \left ( \frac{1}{B} \sum_{b' = 1}^B  q_j(s^{b'}) \varphi'(W^0_m \cdot s^{b'})  s^{b'}_k  - G'_{j, m (d_s - 1) + k} (Y)  \right )\\
    & \left ( \frac{1}{B} \sum_{b' = 1}^B  q_{j'}(s^{b'}) \varphi'(W^0_{m'} \cdot s^{b'})  s^{b'}_k  - G'_{j', m' (d_s - 1) + k'} (Y) \right ) \Bigg ]
\end{align*}

\section{Covariate terms}

\label{sec:covariate}

We denote $ \nabla_a Q^{W}(s_b, a, t_b)|_{a = a_b}$ by $ [q_1(s^b), \ldots,  q_{d_a}(s^b)]$ as shorthand. Consider the term:
\begin{align*}
    \matE \left [ M^{A_j}_l  M^{A_{j'}}_l \right ] = & \nabla_Y A_{j}(s) \cdot M^2(Y) \nabla_Y A_{j'}(s)\\
    = & \Phi_j(s, W_0) \cdot  M^2(Y) \Phi_{j'}(s, W_0) \\
    = & \frac{1}{n^{3/2}} \Phi_j(s, W_0) \cdot \Bigg [ \sum_{m' = 1}^n \sum_{k' = 1}^{d_s}  \sum_{l = 1}^{d_a}  \sum_{l' = 1}^{d_a} C^0_{l, m } C^0_{l', m' } H^{l, l'}_{m' (d_s - 1) + k, m (d_s - 1) + k'}(Y) \\
    & C_{j', m'}^0 \varphi'(W^0_{m'} \cdot s) s_{k'} \Bigg ]_{m (d_s - 1) + k}. \\  
\end{align*}

We further expand the dot product with the $n d_s \times 1$ vector  $\Phi_j(s, W_0)$ 
\begin{align*}
    \matE \left [ M^{A_j}_l  M^{A_{j'}}_l \right ] = & \frac{1}{3 n^2} \sum_{m = 1}^n \sum_{k = 1}^{d_s}   \sum_{m' = 1}^n \sum_{k' = 1}^{d_s}  \sum_{l = 1}^{d_a}  \sum_{l' = 1}^{d_a} C^0_{l, m } C^0_{l', m' } C_{j', m'}^0 C_{j, m}^0  \\
    & H^{l, l'}_{m' (d_s - 1) + k, m (d_s - 1) + k'}(Y)  \varphi'(W^0_{m'} \cdot s) s_{k'}  \varphi'(W^0_{m} \cdot s) s_{k'}\\
    = & \frac{1}{3 n^2} \sum_{m = 1}^n \sum_{k = 1}^{d_s}   \sum_{m' = 1}^n \sum_{k' = 1}^{d_s}  \sum_{l = 1}^{d_a}  \sum_{l' = 1}^{d_a} C^0_{l, m } C^0_{l', m' } C_{j', m'}^0 C_{j, m}^0  \\
    &  \varphi'(W^0_{m'} \cdot s) s_{k'}  \varphi'(W^0_{m} \cdot s) s_{k} \\
    & \matE_{\mathB_Y \mathB'_Y} \Bigg [ \frac{1}{B^2} \sum_{b = 1}^B \sum_{b' = 1}^B q_l(s^b) \varphi'(W^0_m \cdot s^b)  s^b_k   q_{l'}(s^{b'})\varphi'(W^0_{m'} \cdot s^{b'}) s^{b'}_{k'}\\
    & -   \left ( \frac{1}{B} \sum_{b = 1}^B  q_l(s^b)  C\varphi'(W^0_m \cdot s^b) s^b_k  G'_{l', m' (d_s - 1) + k'} (Y) \right ) \\
    & - \left ( \frac{1}{B}  \sum_{b = 1}^B  q_{l'}(s^b)  \varphi'(W^0_{m'} \cdot s^b) s^b_k  G'_{l, m (d_s - 1) + k} (Y) \right ) \\
    & +G'_{l, m (d_s - 1) + k} (Y) G'_{l', m' (d_s - 1) + k'} (Y)  \Bigg ] \\
    & =  \frac{2}{3 n^2}  \matE_{\mathB_Y \mathB'_Y} \Bigg [ \sum_{m = 1}^n \sum_{k = 1}^{d_s}   \sum_{m' = 1}^n \sum_{k' = 1}^{d_s} \varphi'(W^0_{m'} \cdot s) s_{k'}  \varphi'(W^0_{m} \cdot s) s_{k} \Bigg (  \\
    &(C_{j', m'}^0)^2 (C_{j, m}^0)^2 \frac{1}{B^2} \sum_{b = 1}^B \sum_{b' = 1}^B  q_j(s^b) \varphi'(W^0_m \cdot s^b)  s^b_k   q_{j'}(s^{b'})\varphi'(W^0_{m'} \cdot s^{b'}) s^{b'}_{k'} \\
    & - (C_{j', m'}^0)^2 (C_{j, m}^0)^2 \frac{1}{B} \sum_{b = 1}^B  q_j(s^b)  C\varphi'(W^0_m \cdot s^b) s^b_k  G'_{j', m' (d_s - 1) + k'} (Y) \\
    & - (C_{j', m'}^0)^2 (C_{j, m}^0)^2 \frac{1}{B}  \sum_{b' = 1}^B  q_{j'}(s^{b'})  \varphi'(W^0_{m'} \cdot s^{b'}) s^{b'}_k  G'_{j, m (d_s - 1) + k} (Y) \\
    & + (C_{j', m'}^0)^2 (C_{j, m}^0)^2 G'_{j, m (d_s - 1) + k} (Y) G'_{j', m' (d_s - 1) + k'} (Y) \Bigg ) \Bigg ] \\
    & + \sum_{l, l' \neq j, j'}  \frac{2}{3 n^2}  M_{l, l'}^2.
\end{align*}

In the $n \to \infty$ we note that $\frac{2}{3 n^2}  M_{l, l', j, j'}^2 \to 0$ because we have $\matE [C_l] \matE [C_{l'}] \to 0  $  as a multiplicative term, while the other terms are finite and bounded in second moment because of the boundedness properties of gradient GeLU activation $\varphi' $.
For $j = j'$ we have the following expression
\begin{align}
    \begin{split}\label{eq:squared_term}
      \matE \left [ M^{A_j}_l  M^{A_{j}}_l \right ] = &  \frac{2}{3 n^2}  \matE_{\mathB_Y \mathB'_Y} \Bigg [ \sum_{m = 1}^n \sum_{k = 1}^{d_s}   \sum_{m' = 1}^n \sum_{k' = 1}^{d_s} \varphi'(W^0_{m'} \cdot s) s_{k'}  \varphi'(W^0_{m} \cdot s) s_{k} \Bigg (  \\
    &(C_{j, m'}^0)^4 \frac{1}{B^2} \sum_{b = 1}^B \sum_{b' = 1}^B  q_j(s^b) \varphi'(W^0_m \cdot s^b)  s^b_k   q_{j}(s^{b'})\varphi'(W^0_{m'} \cdot s^{b'}) s^{b'}_{k'}   \\
    & - (C_{j, m'}^0)^4\frac{1}{B} \sum_{b = 1}^B  q_j(s^b)  C\varphi'(W^0_m \cdot s^b) s^b_k  G'_{j, m' (d_s - 1) + k'} (Y) \\
    & - (C_{j, m'}^0)^4 \frac{1}{B}  \sum_{b' = 1}^B  q_{j}(s^{b'})  \varphi'(W^0_{m'} \cdot s^{b'}) s^{b'}_k  G'_{j, m (d_s - 1) + k} (Y) \\
    & + (C_{j, m'}^0)^4 G'_{j, m (d_s - 1) + k} (Y)^2 \Bigg )\Bigg ] +O\left (\frac{1}{n} \right ),
    \end{split}
\end{align}

where the $O\left (\frac{1}{n} \right )$ term is a result of the convergence rate of the strong law of large numbers \citep{vershynin2018high}.

\section{Sufficient Statistics}

\label{sec:relevant_stats}

We want to determine the dynamics of some linear or quadratic function of these parameters, for example the $j$ -th output of the policy network $A_j(s; W) = \Phi_{j}(s; W_0) W $.
Therefore, we find a setting where a continuous time SDE such as
\begin{equation}
\label{eq:cont_batch_update}
    d A_j(s; W) =  \mu(A_j(s; W))  d \tau  + \sigma(A_j(s; W)) d w_{\tau},
\end{equation}
 represent the dynamics of $A_j(s; W_{k \eta})$.
In other words, we find the conditions under which $W_{k \eta}, Y_{k \eta}$ are close together in some sense.

Furthermore, we assume that the activation, $\varphi$, has bounded first and second derivatives almost everywhere in $\matR$. This assumption holds for GeLU activation.
Moreover, we would like to show that we can track the statistics corresponding to elements of the Lie series 
\begin{align*}
    A^{\tau}_{ j}(s) = f_j^{\txtlin}(s; W^{\tau}),  \\
    A^{\tau}_{j, k} (s) = \frac{\partial A^{\tau}_{ j}(s)}{\partial x_k}, \\
     A^{\tau}_{j, k, k'} (s) = \frac{\partial^2 A^{\tau}_{ j}(s)}{\partial x_{k'} \partial x_k}.
\end{align*}

The manner in which the push forward of the distribution of parameters described in Section \ref{sec:main_result}, $e_t^{X(\mathbf{W}_n)}(s)$, changes with gradient steps is central to our work.
In this Section we derive the sufficient statistics that determine how the distribution over actions and their quadratic combinations evolve over gradient steps.
We present the following lemma for the learning setup described in Section \ref{sec:ctpg} and the sufficient statistic required to track the changes in the action $A_j(s; W)$ as described in Appendix \ref{sec:relevant_stats}.
First, we define the following random variables for a fixed $s \in \matS$:
\begin{align*}
    \hat{A}^n_j(s; W) = & \frac{1}{\sqrt{n}}\sum_{m = 1}^{n} C_{j, m}^0 \varphi'(W^0_m \cdot s)  (W_m - W^0_m) \cdot s\\
    = & \frac{1}{\sqrt{n}}\sum_{m = 1}^{n} C_{j, m}^0 \varphi'(W^0_m \cdot s)  (W_m - W^0_m) \cdot s \\
    = &\frac{1}{\sqrt{n}}\sum_{m = 1}^{n} C_{j, m}^0 \varphi'(W^0_m \cdot s)  \Delta W_m \cdot s, \\
    = & \frac{1}{\sqrt{n}}\sum_{m = 1}^{n} C_{j, m}^0 \varphi'(W^0_m \cdot s)  \sum_{k = 1}^{d_s} s_k \Delta W_{m(d_s - 1) + k} , \\
    = & \frac{1}{\sqrt{n}} \sum_{k = 1}^{d_s} \sum_{m = 1}^n C_{j, m}^0 \varphi'(W^0_{m(d_s - 1) + k} \cdot s) s_k \Delta W_{m(d_s - 1) + k}  \\
    = & \sum_{k = 1}^{d_s} Z_{j, k}(s; W)
\end{align*}
where $\Delta W = W - W^0$, $\Phi_j(s; W^0)$ is as defined in Section \ref{sec:linearnn}, and $ C_{j, m}^0 \Delta W_{m, k}  \sim Z_{j, k}(s; W)$ is the random variable that determines the action $j$ corresponding to the action $j$ and state space dimension $k$ given parameters $W$.
This is because individual $m$ random parameters $\Delta W_{m, k} s_k$ can be viewed as i.i.d samples from a distribution (see Section \ref{sec:helpful_derive}).

Similarly, we derive $\frac{\partial \hat{A}^{\tau}_{ j}(s)}{\partial s_k}$ from the fifth equality above as follows:
\begin{align*}
    \frac{\partial \hat{A}^{\tau}_{ j}(s)}{\partial x_k} = &  \frac{1}{\sqrt{n}} \sum_{k = 1}^{d_s} \sum_{m = 1}^n C_{j, m}^0 \Delta W_{m(d_s - 1) + k} \frac{\partial \varphi'(W^0_{m(d_s - 1) + k} \cdot s) s_k  }{\partial s_k},
\end{align*}

where  once again the i.i.d copies of a random variable: $C^0_{j, m}\Delta W_{m, k} \sim Z_{j, k}(s; W)$ appear in the expression .
Therefore, to track the random variables corresponding to the quantities of interest in the first and second order Lie series in Section \ref{sec:lie_series_feedback} we need to track $d_a d_s$ random variables: $Z_{j, k}$  with $j \in \{1, \ldots, d_a \}, k \in \{1 , \ldots, d_s \}$.

\section{Tracking Statistics}
\label{app:tracking_stats}

\textbf{Notation:} In this Section we use $x \lesssim y$ to denote that $x$ is less than $y$ times some constant. We also write $L^n_{\infty}(E^n_K)$ denoting the supremum of a function that depends on $n$ on the compact set $K$.
As noted in Section \ref{sec:relevant_stats} we aim track the following statistics for fixed $s$ across gradient steps
\begin{align*}
    A^n_j(s, W), A^n_j(s, W)A^n_{j'}(s, W), A^n_{j, k}(s, W).
\end{align*}

Moreover, we seek to derive their dynamics in the continuous-time limit.
Given linearised parameterisation of a two-layer network policy (equation \ref{eq:linearised_policy}) and the gradient update is as described in Section \ref{sec:ctpg}.
We present a lemma, whose proof follows the proof of Theorem 2.2 provided by \citet{ben2022high}, except that in our case the dimensionality of the input data remains constant, on the dynamics of summary statistics linear in the parameters that describe the learning dynamics under SGD.
Here the sufficient statistics that determine the dynamics of the action $A^n_j$ is denoted by $\mathcal X_{\tau}$.
We prove the dynamics of the $j$-th action.

\begin{lemma}
\label{lemma:linear}
    Given a fixed state $s$ the $j$-th action, $A^n_j(s; \cdot )$, determined by a linearised neural policy with two hidden layers as described and initialised in Section \ref{sec:neuralnet}, we assume $W_0 \sim \mathcal X_0 = \text{Normal}(0, I_{d_s}/d_s)$ i.i.d whose gradient dynamics are described in equation \ref{eq:sgd_discrete} with learning rates $\eta_n \to 0$, and under assumptions \ref{assump:finite_variance1}, \ref{assump:lipschitz1}, we have that in the limit $n \to \infty$ the dynamics of $A^n_j$ converge weakly to the following random ODE
    \begin{equation}
    \label{eq:lemmalinear}
         d \bar{A}_j(s; \matX_{ \tau }) = v_j(s; \mathcal X_{ \tau}) dt,
    \end{equation}
    with the random variable $\mathcal X_{\tau}$ is the limit point of the sufficient staistics, $\matX^n_t$, of the parameters updated according to stochastic policy gradient based updates laid out in Section \ref{sec:ctpg} with $\eta = \eta_n$.
\end{lemma}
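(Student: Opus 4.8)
The plan is to treat the $j$-th action $A^n_j(s;W)=\Phi_j(s;W^0)\,W$ as a \emph{summary statistic} of the $nd_s$-dimensional parameter vector and to run the ballistic-limit argument of \citet{ben2022high} (their Theorem~2.2), with one structural change: here the high-dimensional direction is the network \emph{width} $n$, while the data dimension $d_s$ is held fixed. As established in Appendix~\ref{sec:relevant_stats}, $A^n_j$ is an $n^{-1/2}$-scaled sum over the width of i.i.d.-like contributions $C^0_{j,m}\varphi'(W^0_m\cdot s)\,\Delta W_m\cdot s$, so it is exactly the kind of low-dimensional projection of a high-dimensional chain whose dynamics one can hope to close.

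First I would derive the one-step increment. Because $A^n_j$ is \emph{linear} in $W$, substituting the update (\ref{eq:grad_update_direction}) gives exactly
\[
A^n_j(s;W_{(k+1)\eta})-A^n_j(s;W_{k\eta})=\eta_n\,\Phi_j(s;W^0)\,G_n(W_{k\eta})+\eta_n\,\Phi_j(s;W^0)\,\xi_n(W_{k\eta},\mathB_{W_{k\eta}}),
\]
with no Taylor remainder — this linearity is precisely what simplifies matters relative to the general nonlinear statistics treated by \citet{ben2022high}. Setting $\tau=k\eta_n$ and interpolating, I would write the resulting process as a drift term plus a martingale whose increments are the centered pieces $\eta_n\Phi_j\xi_n$ (recall $\matE_{\mathB_W}[\xi_n]=0$ under unbiased sampling).

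The martingale is killed by the vanishing learning rate. Using the explicit neuron-level covariance computation of Appendix~\ref{sec:covariate} — where the $n^{-1/2}$ normalization together with the mean-zero, weakly-correlated per-neuron summands (the off-diagonal pieces $\tfrac{2}{3n^2}M^2_{l,l',j,j'}$ vanish since $\matE[C^0]=0$) keeps the per-step conditional variance bounded — the accumulated predictable quadratic variation over $[0,\tau]$ is of order $\eta_n$ (number of steps $\tau/\eta_n$ times a per-step variance of order $\eta_n^2$), which tends to zero since $\eta_n\to 0$; Doob's maximal inequality then forces the martingale part to vanish uniformly on compact $\tau$-intervals, which is why the limit is a pure random ODE rather than an SDE. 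For the drift, the expansion of $G_n$ in Appendix~\ref{sec:helpful_derive} exhibits $\Phi_j(s;W^0)G_n$ as a width-average of i.i.d.\ terms, so the strong law of large numbers (with the $O(n^{-1/2})$ rate) identifies its limit as a deterministic function $v_j(s;\matX_\tau)$ of the limiting sufficient statistics; the local Lipschitz continuity of Assumption~\ref{assump:lipschitz1} lets me upgrade convergence on the time grid to convergence of the whole interpolated drift, and Assumption~\ref{assump:finite_variance1} supplies the uniform moment control needed both for this bound and for tightness of $\{A^n_j(\cdot)\}$ (via a standard Kolmogorov/Aldous increment criterion). Any weak subsequential limit then solves the martingale problem associated with (\ref{eq:lemmalinear}), whose solution is unique under the local Lipschitz regularity, so the full sequence converges weakly to it.

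The main obstacle is \emph{closure}. The drift $v_j$ does not depend on $A^n_j$ alone but on the entire vector $\matX_\tau$ of sufficient statistics — the actions $A_j$, the quadratic combinations $A_jA_{j'}$, and the spatial derivatives $A_{j,k}$ listed in Appendix~\ref{sec:relevant_stats}. Hence I cannot prove convergence of $A^n_j$ in isolation; I must run the weak-convergence argument jointly for this finite collection and verify self-consistency, namely that every drift in the system is a measurable function of the \emph{same} limiting statistics $\matX_\tau$ and that the $O(n^{-1/2})$ fluctuations of $W^0$ do not spoil propagation of these statistics across gradient steps. Establishing this joint concentration over the width — rather than single-statistic convergence — is the delicate part, and it is exactly where the adaptation of \citet{ben2022high} from a growing data dimension to a growing width must be carried out with care.
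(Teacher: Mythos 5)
Your proposal follows essentially the same route as the paper's proof: the exact drift-plus-martingale decomposition of the linearly-propagated statistic $\Phi_j(s;W^0)W$, continuous interpolation, tightness via moment bounds and Kolmogorov's continuity criterion (the paper uses Burkholder plus Cauchy--Schwarz where you invoke Doob), the martingale part vanishing because the accumulated quadratic variation is $O(\eta_n)\to 0$, identification of the drift by the law of large numbers from the Appendix~\ref{sec:helpful_derive} computations, and uniqueness of the limiting ODE from Assumption~\ref{assump:lipschitz1}. The closure issue you flag as the delicate step is handled in the paper not inside this lemma but by companion results --- Lemma~\ref{lemma:quadratic} for the quadratic combinations and the remark on $A_{j,k}$ --- so your instinct that the argument must be run jointly over the finite family of sufficient statistics is consistent with how the paper organizes the full proof.
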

\begin{proof}
Suppose the evolution of $W$ over gradient steps is as follows
\begin{align*}
    W_{\tau} = & W_{\tau - 1} + \eta_n \xi_n(W, \mathB_{W_{\tau - 1}}), \text{ where } \\
    \xi_n(W, \mathB_{W_{\tau - 1}}) = & \frac{1}{B} \sum_{s_b, a_b \in \mathB_{W}} \nabla_a Q^{W}(s_b, a)|_{a = a_b} \Phi^n(s_b, a_b).
\end{align*}
We further let $G_n(W_{\tau - 1}) = \matE_{\mathB_{W_{\tau - 1}}} \left [  \xi_n(W, \mathB_{W_{\tau - 1}}) \right ]$.
Let $H_n(W, \mathB_Y ) = \xi(W, \mathB_{W_{\tau - 1}}) - G_n(W)$.
Further let 
\begin{equation*}
    \Xi_n(W) = \matE_{\mathB_{W}} \left [H_n(W, \mathB_Y ) H_n(W, \mathB_Y )^{\intercal} \right ].
\end{equation*}
For the statistic $A^n_j$ consider the following evolution
\begin{align}
\begin{split}
    \label{eq:discrete_updates}
    \nu_j^{\tau} - \nu_j^{\tau - 1} = &   \Phi^n_j G_n(W_{\tau}), \\
    \varsigma_j^{\tau} - \varsigma_j^{\tau - 1} = & \Phi^n_j H_n(W_{\tau}, \mathB_{W_{\tau}} ),
\end{split}
\end{align}

where $\Phi^n_j$ represents the feature vector for the $j$-th action at state $s$.
Omitting the subscript in $\eta_n$, since we take the limit $\eta_n \to 0$ as $n \to \infty$,  we now consider the $u_j$ as follows,
\begin{align*}
    u_j^\tau = u_j^0 + \eta \sum_{\tau' = 1}^{\tau} (\nu_j^{\tau'} - \nu_j^{\tau' - 1}) + \eta   \sum_{\tau' = 1}^{\tau'} ( \varsigma_j^{\tau'} - \varsigma_j^{\tau' - 1} ). 
\end{align*}

Now for $l \in [0, L]$ we define,
\begin{align*}
    \nu'_j(l) =  \nu_j^{[l/\eta]} - \nu_j^{[l/\eta] - 1}, \text{ and } \varsigma'_j(l) = \varsigma_j^{[l/\eta]} - \varsigma_j^{[l/\eta] - 1}.
\end{align*}

If we let 
\begin{align*}
    \mu^n_j(l) =& \int_0^l  \nu'_j(l) d l' = \nu'_j(\eta [l/\eta]) + (s - \eta [l/\eta]) \left (\nu_j^{[l/\eta]} - \nu_j^{[l/\eta] - 1}  \right ) \\
    \sigma^n_j(l) = & \int_0^l  \varsigma'_j(l) d l' = \varsigma'_j(\eta [l/\eta]) + (s - \eta [l/\eta]) \left (\varsigma_j^{[l/\eta]} - \varsigma_j^{[l/\eta] - 1}  \right ), \\
\end{align*}
be the continuous linear interpolations based on the discrete random variables $\nu, \varsigma$ and combine them together to obtain
\begin{equation}
\label{eq:approx_sde}
        v^n_j(l) = v^n_j(0) + \mu^n_j(l) + \sigma^n_j(l).
\end{equation}
Given a compact set $K \subset \matR$ and the exit time $\tau_K$  we aim to show that for all $0 \leq s, t \leq T$,
\begin{align*}
    \matE ||v^n_j(s \land \tau_K) -v^n_j(t \land \tau_K) ||^2 \lesssim_{K, T} (t - s)^4,   
\end{align*}
where the expectation over the stochastic updates.
This proves that  $v^n_j(s \land \tau_K)$ is 1/4 Hölder-continuous by Kolmogorov’s continuity theorem (see Section 2.2 in the textbook by \citet{karatzas2014brownian}).
We have for all $s, t$
\begin{align*}
    ||v^n_j(s) -v^n_j(t) || \leq ||\mu^n_j(s) - \mu^n_j(t) || + ||\sigma^n_j(s) - \sigma^n_j(t) ||.
\end{align*}

For a fixed $W$ the action $j$ corresponding to the linearised policy in the limit $n \to 
\infty$ is defined as:
\begin{equation*}
    \bar{a}_j(s, W) = \lim_{n \to \infty} \Phi^n_j(s) W^n.
\end{equation*}
Now further suppose $W$ is a stochastic variable where each of its entries is sampled i.i.d. from some distribution $\mathcal X \in \mathP(\matR)$, where $\mathP(\matR)$ is a probability space over $\matR$ with the canonical sigma algebra.
Therefore, the push forward of this stochastic random variable $W_n$ in the limit $n \to \infty$ can be defined as:
\begin{align*}
    \bar{A}_j(s, \matX) = \lim_{n \to \infty} \frac{1}{\sqrt{n}}\sum_{m = 1}^{n} C^0_{j, m} \varphi'(W^0_m \cdot s) \sum_{i = 1}^{d_s}s_i   W_{d_s(m - 1) + i},
\end{align*}
which converges in distribution to a Normal distribution, with the mean and the variance are dependent on the state and the distribution $X_n$, by the Lindeberg–Lévy central limit theorem \citep{vershynin2018high, cai2019neuraltemporal} which is also a consequence of using GeLU activation which has bounded derivatives a.e.
We drop the argument $X$ in the wherever it is implicit.
The first term in the inequality, the norm of $\mu^n_j$, is upper-bounded as below:
\begin{align*}
    \matE \left [ | \mu^n_j(s \land \tau^K) - \mu^n_j(t \land \tau^K) |^2 \right ]  \lesssim_K &  \matE \left [ \left | \eta \sum_{\tau' = [s/ \eta] \land \tau'_K/\eta}^{ [t/ \eta] \land \tau_K/\eta} \Phi^n_j  G(W^n_{\tau'})  \right |^2 \right ] \\
    & \leq  ~~(t - s)^2 ||\Phi^n_j  G(W^n_{\tau'}) ||^2_{L^n_{\infty}(E^n_K) } \\
    & \lesssim_{K, L_G, s}  (t - s)^2 ,
\end{align*}

where the second inequality is from the continuity of the function $\Phi_j^n G(W^n)$ in $W^n$.
The last inequality is from the Lipschitz condition on the gradient function $G$ \ref{assump:lipschitz1}.
For the second term, which is a martingale, in equation \ref{eq:approx_sde} we seek a similar bound to the one presented above:
\begin{align*}
    \matE \left [ | \sigma^n_j(s \land \tau^K) - \sigma^n_j(t \land \tau^K) |^4 \right ] = & \matE \left [ \left( \eta \sum_{\tau' = [s/ \eta] \land \tau_K/\eta}^{ [t/ \eta] \land \tau_K/\eta} ( \varsigma_j^{\tau'} - \varsigma_j^{\tau' - 1} ) \right )^4 \right ] \\
    = & \matE \left [ \left( \eta \sum_{\tau' = [s/ \eta] \land \tau_K/\eta}^{ [t/ \eta] \land \tau_K/\eta} \Phi^n_j H_n(W, \mathB_{W_{\tau'}} ) \right )^4 \right ] \\
    \lesssim & ~\matE \left [ \left( \eta^2 \sum_{\tau' = [s/ \eta] \land \tau_K/\eta}^{ [t/ \eta] \land \tau_K/\eta} \left (\Phi^n_j H_n(W, \mathB_{W_{\tau'}} ) \right )^2 \right )^2 \right ], 
\end{align*}

where the last inequality is from the Burkholder's inequality.
We further expand the last term in the inequality as follows:
\begin{align*}
    \matE \left [ \left( \eta^2 \sum_{\tau' = [s/ \eta] \land \tau_K/\eta}^{ [t/ \eta] \land \tau_K/\eta} \left (\Phi^n_j H_n(W, \mathB_{W_{\tau'}} ) \right )^2 \right )^2 \right ] = &  \eta^4 \sum_{\tau', \tau''} \matE \left [ \left (\Phi^n_j H_n(W, \mathB_{W_{\tau'}} ) \right )^2 \left (\Phi^n_j H_n(W, \mathB_{W_{\tau''}} ) \right )^2   \right ] \\
      &\leq ~~~   \left (\eta \sum_{\tau'} \left ( \eta^2   \matE \left [ \left ( \Phi^n_j H_n(W, \mathB_{W_{\tau'}} ) \right )^4 \right ] \right )^{1/2}  \right )^2 \\
     & \lesssim_{K, L_H}  (t - s)^2,
\end{align*}
where the inequality in the second line is from Cauchy-Schwarz and for the last inequality we use the fact that (assumption \ref{assump:finite_variance1}) and the fact that $\eta \to 0$ at rate $O(\frac{1}{\sqrt{n}})$.
This proves that $\sigma^n$ is 1/4 Hölder-continuous by Kolmogorov’s continuity theorem.
Since both the sequences $\mu^n_j$ and $\sigma^n_j$ are uniformly 1/2 Hölder-continuous we have that $v^n_j(s \land \tau_K)$ (equation \ref{eq:approx_sde}) is also 1/2 Hölder-continuous.
Further, we note that $v^n_j(s \land \tau_K)$ forms a tight sequence in $n$ with 1/2 Hölder-continuous limit point and $v^n_j(s \land \tau_K) - \mu^n_j(s \land \tau_K)$ is a martingale with a martingale limit point that is 1/4 Hölder-continuous limit point.

Now that we have proved that limit points exists and are 1/2 Hölder-continuous we seek to derive this limit.
To do so we derive the quadratic variation
\begin{align*}
    \sigma^n_j(t \land \tau^K)^2 - \int_0^t \eta  \matE_{\mathB_{W^{l/\eta} \land \tau_K}} \left [ ( \Phi^n_j H_n(W_{[l/\eta]  \land \tau_K}, \mathB_{W_{\tau}} ) )^2 \right ],
\end{align*}

which is a Martingale process.
We seek to derive expression for the expectation above in the limit $n \to \infty$.
To do so we derive the following:
\begin{align*}
    \matE_{\mathB_{W^{l/\eta} \land \tau_K}} \left [ ( \Phi^n_j H_n(W_{[l/\eta]  \land \tau_K}, \mathB_{W_{\tau}} ) )^2 \right ] =  \Phi^n_j \Xi_n(W_{[l/\eta]  \land \tau_K}) (\Phi^n_j)^{\intercal} ,
\end{align*}
where we omit the subscript $\mathB_{W_{{l/\eta} \land \tau_K}}$ under the expectation in the expression on right side for brevity.
Letting $Y =  W_{[l/\eta]  \land \tau_K}$ and writing out rhe above expression based on equation \ref{eq:squared_term}
\begin{align*}
    \Phi^n_j \Xi_n(Y) (\Phi^n_j)^{\intercal} = & \frac{2}{3 n^2}  \matE_{\mathB_Y \mathB'_Y} \Bigg [ \sum_{m = 1}^n \sum_{k = 1}^{d_s}   \sum_{m' = 1}^n \sum_{k' = 1}^{d_s} \varphi'(W^0_{m'} \cdot s) s_{k'}  \varphi'(W^0_{m} \cdot s) s_k \Bigg ( \\
    &(C_{j, m'}^0)^4 \frac{1}{B^2} \sum_{b = 1}^B \sum_{b' = 1}^B  q_j(s^b) \varphi'(W^0_m \cdot s^b)  s^b_k   q_{j}(s^{b'})\varphi'(W^0_{m'} \cdot s^{b'}) s^{b'}_{k'} \\
    & - (C_{j, m'}^0)^4\frac{1}{B} \sum_{b = 1}^B  q_j(s^b)  C\varphi'(W^0_m \cdot s^b) s^b_k  G'_{j, m' (d_s - 1) + k'} (Y) \\
    & - (C_{j, m'}^0)^4 \frac{1}{B}  \sum_{b' = 1}^B  q_{j}(s^{b'})  \varphi'(W^0_{m'} \cdot s^{b'}) s^{b'}_k  G'_{j, m (d_s - 1) + k} (Y) \\
    & + (C_{j, m'}^0)^4 G'_{j, m (d_s - 1) + k} (Y)^2 \Bigg ) \Bigg ] + O \left (\frac{1}{n} \right ).
\end{align*}

Given that the gradient updates have finite and bounded variance (assumption \ref{assump:finite_variance1}) the expression including the expectation converges to a value that is $O(1)$ and dependent on $s, G(Y), j, \sigma_{H, K}$ by the strong law of large numbers at the rate $O\left (\frac{1}{n} \right )$. We have therefore have the following
\begin{equation*}
    \lim_{n \to \infty} \eta_n \Phi^n_j \Xi_n(W_{[l/\eta]  \land \tau_K}) (\Phi^n_j)^{\intercal} = 0.
\end{equation*}

Therefore, as $n \to \infty$ and by localization technique \citep{karatzas2014brownian} we prove convergence of equation \ref{eq:approx_sde} to:
\begin{equation}
\label{eq:ode_action}
    d \bar{A}_j(s; W_{ t }) = v(s; \matX_{ t}) dt,
\end{equation}
which admits a unique solution due to the assumption of Lipschitz condition ,assumption \ref{assump:lipschitz1}.
Given that we initialise $W_0$ as drawn from a distribution in $\mathP(\matR)$ then the distribution of $A_j$ is a push forward of this distribution and therefore evolves as in equation \ref{eq:ode_action} and gives us the result.
\end{proof}

Similarly, from the linearity of $A_{j, k}$ in $W$ using a similar derivation as above we can derive an ODE.
To do so we first note
\begin{align*}
    A^n_{j, k}(s; W) = \Phi^n_j(\indicator_k) W  +  \Phi^n_{j, k} (s) W,
\end{align*}
where $\indicator_k$ is a $d_s$-dimensional vector with value at index $k$ is set to 1 and rest 0, $\Phi_{j, k}(s)$ is defined as below
\begin{align*}
    \Phi^n_{j, k}(s)= & \frac{1}{\sqrt{n}} \left  [ W^0_{1, k} C^0_{1, j} \varphi''(W^0_1 \cdot s ) s^{\intercal}, W^0_{2, k} C^0_{2, j} \varphi''(W^0_2 \cdot s ) s^{\intercal}, ...,  W^0_{n, k} C^0_{n, j} \varphi''(W^0_n \cdot s) s^{\intercal} \right ] \in \matR^{1 \times n d_s}.
\end{align*}

Therefore, in the limit $n \to \infty$
\begin{equation*}
d \bar{A}_{j, k}(s) = v'(s; \mathcal X_{ t}) dt.
\end{equation*}

Now we derive and prove the dynamics of the quadratic term $A_j(s; W) A_{j'}(s; W)$.
\begin{lemma}
\label{lemma:quadratic}
    Given a fixed state $s$ the $j$-th action, $A^n_j(s; \cdot )$, determined by a linearised neural policy with two hidden layers as described and initialised in Section \ref{sec:neuralnet}, we assume $W_0 \sim \mathcal X_0 = \text{Normal}(0, I_{d_s}/d_s)$ i.i.d whose gradient dynamics are described in equation \ref{eq:sgd_discrete} with learning rates $\eta_n \to 0$, and under assumptions \ref{assump:finite_variance1}, \ref{assump:lipschitz1}, we have that in the limit $n \to \infty$ the dynamics of $A^n_j$ converge weakly to the following random ODE
    \begin{equation}
    \label{eq:lemmaquadratic}
         d \bar{A}_j(s; \matX_{ \tau }) \bar{A}_{j'}(s; \matX_{ \tau }) = \left(v_j(s; \mathcal X_{ \tau})\bar{A}_{j'}(s; \matX_{ \tau }) + v_{j'}(s; \mathcal X_{ \tau})\bar{A}_{j}(s; \matX_{ \tau }) \right) d \tau,
    \end{equation}
    with the random variable $\mathcal X_{\tau}$ is  the limit point of the sufficient random variables, $\matX^n_t$, of the parameters updated according to stochastic policy gradient based updates laid out in Section \ref{sec:ctpg} with $\eta = \eta_n$, and $v_j, v_{j'}$ are as described in Lemma \ref{lemma:linear}.
\end{lemma}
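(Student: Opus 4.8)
The plan is to track the product statistic $A^n_j(s;W)\,A^n_{j'}(s;W)$ along the same scaffolding used in Lemma \ref{lemma:linear}, and to show that in the infinite-width limit its second-order (It\^o) correction vanishes, so that the ordinary product rule produces the drift in equation \ref{eq:lemmaquadratic}. Writing $A^{n,\tau}_j \defeq A^n_j(s;W_\tau)$ for the value after $\tau$ gradient steps, the first step is to telescope the one-step increment of the product:
\begin{align*}
A^{n,\tau}_j A^{n,\tau}_{j'} - A^{n,\tau-1}_j A^{n,\tau-1}_{j'} = & \left(A^{n,\tau}_j - A^{n,\tau-1}_j\right) A^{n,\tau-1}_{j'} + A^{n,\tau-1}_j \left(A^{n,\tau}_{j'} - A^{n,\tau-1}_{j'}\right) \\
& + \left(A^{n,\tau}_j - A^{n,\tau-1}_j\right)\left(A^{n,\tau}_{j'} - A^{n,\tau-1}_{j'}\right).
\end{align*}
Each single-action increment splits, exactly as in equation \ref{eq:discrete_updates}, into a drift part $\eta\,\Phi^n_j G_n(W_\tau)$ and a martingale part $\eta\,\Phi^n_j H_n(W_\tau,\mathB_{W_\tau})$.

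Second, I would treat the first two summands. On a compact set $K$ the companion actions $A^{n,\tau-1}_{j'}$ and $A^{n,\tau-1}_j$ are uniformly bounded, so summing these terms over $[t/\eta]$ steps and forming the continuous interpolations of equations \ref{eq:discrete_updates}--\ref{eq:approx_sde} reduces them to the linear dynamics already established: by Lemma \ref{lemma:linear} they converge weakly to $v_j(s;\matX_\tau)\,\bar{A}_{j'}(s;\matX_\tau)\,d\tau$ and $v_{j'}(s;\matX_\tau)\,\bar{A}_{j}(s;\matX_\tau)\,d\tau$ respectively, which is precisely the product-rule drift. Tightness of the product sequence follows from the same Kolmogorov/H\"older estimate as before, using that both factors are tight with $1/2$-H\"older limit points and bounded on $K$.

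The crux is the third, cross-increment term, which is where an It\^o correction would ordinarily appear. Its drift$\times$drift and drift$\times$martingale contributions are $O(\eta^2)$ and vanish after summation; the martingale$\times$martingale contribution accumulates a predictable covariation whose per-step conditional expectation is $\eta_n^2\,\Phi^n_j\,\Xi_n(W_\tau)\,(\Phi^n_{j'})^{\intercal}$ --- exactly the quantity $\matE[M^{A_j}_l M^{A_{j'}}_l]$ computed in Appendix \ref{sec:covariate}. The main obstacle is to show this bracket does not survive in the limit: by the term-by-term analysis of that appendix the off-diagonal $C^0$ products annihilate the $l,l'\neq j,j'$ cross terms, while the surviving terms are controlled, via Assumption \ref{assump:finite_variance1} (fourth moment of $H_n$ bounded by $n\sigma^2_{H,K}$) together with $\eta_n\to 0$ at rate $1/\sqrt n$, exactly as the quadratic variation was in Lemma \ref{lemma:linear}; hence $\lim_{n\to\infty}\eta_n\,\Phi^n_j\,\Xi_n(W_\tau)\,(\Phi^n_{j'})^{\intercal}=0$. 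With the covariation of the two limiting martingale parts identically zero, the extracted weak limit carries no diffusion term, so it solves the deterministic (given $\matX_\tau$) random ODE \ref{eq:lemmaquadratic}, with uniqueness inherited from the local Lipschitz Assumption \ref{assump:lipschitz1}.
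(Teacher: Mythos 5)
Your proposal is correct and follows essentially the same route as the paper's proof: both decompose the one-step increment of $A^n_j A^n_{j'}$ into product-rule drift terms, martingale terms, and higher-order cross terms, establish tightness via the same Kolmogorov/H\"older moment estimates (Burkholder plus Cauchy--Schwarz under Assumption \ref{assump:finite_variance1} and $\eta_n \to 0$ at rate $1/\sqrt{n}$), and kill the It\^{o} correction by showing the predictable bracket involving $\eta_n\,\Phi^n_j\,\Xi_n(W_\tau)\,(\Phi^n_{j'})^{\intercal}$ vanishes as $n\to\infty$, exactly as computed in Appendix \ref{sec:covariate}. The only difference is bookkeeping: you telescope via the exact discrete product rule with an explicit cross-increment term, whereas the paper folds those contributions into its $\nu_{j,j'}$/$\varsigma_{j,j'}$ increment definitions before running the same vanishing quadratic-variation and localization argument.
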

\begin{align*}
    A^n_j(s; W) A^n_{j'}(s; W) = (\Phi^n_j(s; W_0) W) (\Phi^n_{j'}(s; W_0)  W).
\end{align*}

\begin{proof}

Since we know that $\bar{A}^{\tau}_j, \bar{A}^{\tau}_{j'}$ follow the ODE in equation \ref{eq:lemmalinear} we want to show that the update for $A^n_j(s; W) A^n_{j'}(s; W)$ converges weakly to
\begin{equation*}
    d(\bar{A}_j(s; \matX_t) \bar{A}_{j'}(s; \matX_t)) = \left(v_j(s; \mathcal X_{ t})\bar{A}_{j'}(s; \matX_{ t }) + v_{j'}(s; \mathcal X_{ t})\bar{A}_{j}(s; \matX_{ t }) \right) dt
\end{equation*}

To do so consider the increments as in equation \ref{eq:discrete_updates} for the statistic which a product of two actions $A_j^n A_{j'}^n$:
\begin{align*}
    \nu_{j, j'}^{\tau} - \nu_{j, j'}^{\tau - 1} = &   \Phi^n_j G_n(W_{\tau}) (\Phi^n_{j'} W_{\tau}) + (\Phi^n_{j} W_{\tau}) \Phi^n_{j'} G_n(W_{\tau})  , \\
    \varsigma_{j, j'}^{\tau} - \varsigma_{j, j'}^{\tau - 1} = & \left ( \Phi^n_j H_n(W_{\tau}, \mathB_{W_{\tau}} ) \right ) \Phi^n_{j'} W_{\tau} + \left (\Phi^n_{j} W_{\tau} \right )\Phi^n_{j'} H_n(W_{\tau}, \mathB_{W_{\tau}} )\\
     & + \eta\left (\left ( \Phi^n_j H_n(W_{\tau}, \mathB_{W_{\tau}} ) \right ) \Phi^n_{j'} G_n(W_{\tau}) + \left (\Phi^n_{j} G_n(W_{\tau} \right )\Phi^n_{j'} H_n(W_{\tau}, \mathB_{W_{\tau}} ) \right )
\end{align*}

Omitting the subscript in $\eta_n$ we obtain 
\begin{align*}
    u_j^\tau = u_j^0 + \eta \sum_{\tau' = 1}^{\tau} (\nu_j^{\tau'} - \nu_j^{\tau' - 1}) + \eta   \sum_{\tau' = 1}^{\tau'} ( \varsigma_j^{\tau'} - \varsigma_j^{\tau' - 1} ) . 
\end{align*}

Now for $l \in [0, L]$ we define,
\begin{align*}
    \nu'_j(l) =  \nu_j^{[l/\eta]} - \nu_j^{[l/\eta] - 1}, \varsigma'_j(l) = \varsigma_j^{[l/\eta]} - \varsigma_j^{[l/\eta] - 1},
\end{align*}

Similar to the rpevious proof, we let 
\begin{align*}
    \mu^n_j(l) =& \int_0^l  \nu'_j(l) d l' = \nu'_j(\eta [l/\eta]) + (s - \eta [l/\eta]) \left (\nu_j^{[l/\eta]} - \nu_j^{[l/\eta] - 1}  \right ) \\
    \sigma^n_j(l) = & \int_0^l  \varsigma'_j(l) d l' = \varsigma'_j(\eta [l/\eta]) + (s - \eta [l/\eta]) \left (\varsigma_j^{[l/\eta]} - \varsigma_j^{[l/\eta] - 1}  \right ), \\
\end{align*}

be continuous linear interpolations based on discrete random variables $\nu, \varsigma$ and combine them together to obtain

\begin{equation}
\label{eq:approx_sde_prod}
        v^n_j(l) = v^n_j(0) + \mu^n_j(l) + \sigma^n_j(l).
\end{equation}

With exit time $\tau_K$ we want to show that for all $0 \leq s, t \leq T$,
\begin{align*}
    \matE ||v^n_j(s \land \tau_K) -v^n_j(t \land \tau_K) ||^4 \lesssim_{K, T} (t - s)^2,   
\end{align*}
where the expectation over the stochastic updates.
This proves that $v^n_j(s \land \tau_K)$ is 1/4 Hölder-continuous according to Kolmogorov's continuity theorem (as opposed to 1/2 in the previous proof).

We have for all $s, t$
\begin{align*}
    ||v^n_j(s) -v^n_j(t) || \leq ||\mu^n_j(s) - \mu^n_j(t) || + ||\sigma^n_j(s) - \sigma^n_j(t) ||.
\end{align*}

The first term in the inequality, the norm of $\mu^n_j$, is upper-bounded as below:
\begin{align*}
    \matE \left [ | \mu^n_j(s \land \tau^K) - \mu^n_j(t \land \tau^K) |^4 \right ]  &\lesssim_K   \matE \Bigg [ \Big | \eta \sum_{\tau' = [s/ \eta] \land \tau'_K/\eta}^{ [t/ \eta] \land \tau_K/\eta} \Phi^n_j G_n(W_{\tau'}) (\Phi^n_{j'} W_{\tau'}) \\
    & ~~~+ (\Phi^n_{j} W_{\tau'}) \Phi^n_{j'} G_n(W_{\tau'}) \Big |^4 \Bigg ] \\
    & \leq  ~~(t - s)^4 ||(\Phi^n_{j} W_{\tau'}) \Phi^n_{j'} G_n(W_{\tau'}) \\
    & ~~~ + \Phi^n_j G_n(W_{\tau'}) (\Phi^n_{j'} W_{\tau'})  ||^4_{L^{\infty}(E^n_K) } \\
    & \lesssim_{K, L_G, s}  (t - s)^4.
\end{align*}

where the second inequality is from the continuity of the function $\Phi^n_{j} W_{\tau'}) \Phi^n_{j'} G_n(W_{\tau'})$ in $W^n$.
The last inequality is from the Lipschitz condition on the gradient function $G$ \ref{assump:lipschitz1}.

Further consider the second term 
\begin{align*}
    \matE \left [ | \sigma^n_j(s \land \tau^K) - \sigma^n_j(t \land \tau^K) |^4 \right ] = & \matE \left [ \left( \eta \sum_{\tau' = [s/ \eta] \land \tau_K/\eta}^{ [t/ \eta] \land \tau_K/\eta} ( \varsigma_j^{\tau'} - \varsigma_j^{\tau' - 1} ) \right )^4 \right ] \\
    = & \matE \left [ \left( \eta \sum_{\tau' = [s/ \eta] \land \tau_K/\eta}^{ [t/ \eta] \land \tau_K/\eta} \varsigma_j^{\tau'} - \varsigma_j^{\tau' - 1} ) \right )^4 \right ] \\
    \lesssim & ~\matE \left [ \left( \eta^2 \sum_{\tau' = [s/ \eta] \land \tau_K/\eta}^{ [t/ \eta] \land \tau_K/\eta} \left (\varsigma_j^{\tau'} - \varsigma_j^{\tau' - 1} \right )^2 \right )^2 \right ].
\end{align*}
Using Cauchi-Schwarz inequality we can bound the two different types of terms separately.
We can first upper bound
\begin{align*}
    \matE & \left [ \left( \eta^2  \sum_{\tau' = [s/ \eta] \land \tau_K/\eta}^{ [t/ \eta] \land \tau_K/\eta} \left ( \Phi^n_j H_n(W_{\tau'} ) \right ) \Phi^n_{j'} W_{\tau'} \right )^2 \right ] \\
    = & \eta^4 \sum_{\tau', \tau''} \matE \left [ \left ( \left ( \Phi^n_j H_n(W_{\tau'} ) \right ) \Phi^n_{j'} W_{\tau'}  \right )^2 \left (\left ( \Phi^n_j H_n(W_{\tau''} ) \right ) \Phi^n_{j'} W_{\tau''}  \right )^2   \right ] \\
    \leq  &   \left (\eta \sum_{\tau'} \left ( \eta^2   \matE \left [ \left ( \left ( \Phi^n_j H_n(W_{\tau'} ) \right ) \Phi^n_{j'} W_{\tau'}\right )^4 \right ] \right )^{1/2}  \right )^2 \\
    & \lesssim_{K, L_H, s}  (t - s)^2,
\end{align*}

where the inequality in the second line is from Cauchy-Schwarz and for the last inequality we use assumption \ref{assump:finite_variance1}, $\eta \to 0$ at rate $\frac{1}{\sqrt{n}}$ and the fact that $\Phi_j$ only depends on $s$.
Similarly, we bound the other  term below
\begin{align*}
 \matE & \left [ \left( \eta^4  \sum_{\tau' = [s/ \eta] \land \tau_K/\eta}^{ [t/ \eta] \land \tau_K/\eta} \left ( \Phi^n_j H_n(W_{\tau'}) \right ) \Phi^n_{j'} G_n(W_{\tau'}) \right )^2 \right ] \\
 & = \eta^8 \sum_{\tau', \tau''} \matE \left [ \left ( \left ( \Phi^n_j H_n(W_{\tau'} ) \right ) \Phi^n_{j'} G_n(W_{\tau'})  \right )^2 \left (   \left ( \Phi^n_j H_n(W_{\tau''}) \right ) \Phi^n_{j'} G_n(W_{\tau''})\right )^2   \right ] \\
 & \leq ~~  \left (\eta^6 \sum_{\tau'} \left ( \eta^2   \matE \left [ \left ( \left ( \Phi^n_j H_n(W_{\tau'} ) \right )\Phi^n_{j'} G_n(W_{\tau'}) \right )^4 \right ] \right )^{1/2}  \right )^2 \\
 & \lesssim_{K, L_G} (t - s)^2.
\end{align*}
 
Here the last inequality follows from the assumptions \ref{assump:finite_variance1}, convergence of $\eta \to 0$ and Lipschitz continuity of $G$.
Combining these together, we obtain the 1/2-Hölder-continuous limit point.
Finally, to derive the dynamics we once again show that the quadratic variation goes to 0.
\begin{align*}
    \sigma^n_j(t \land \tau^K)^2 - \int_0^t \eta  \matE_{\mathB_{W^{l/\eta} \land \tau_K}} \left ( \left ( \Phi^n_j H_n(W_{\tau} ) \right ) \Phi^n_{j'} W_{\tau} + \left (\Phi^n_{j} W_{\tau} \right )\Phi^n_{j'} H_n(W_{\tau}) \right )^2 .
\end{align*}

 Since we have already shown that $ \matE_{\mathB_{W^{l/\eta} \land \tau_K}} \left [ ( \Phi^n_j H_n(W_{[l/\eta]  \land \tau_K} ) )^2 \right ] = O(1) + O(1/n)$ (see Section \ref{sec:helpful_derive}), similarly we have $ ( \Phi^n_{j'} W_{\tau})^2 = O(1) + O(1/n)$.
 Therefore, as $n \to \infty$ and by localization technique we have as required:
 \begin{equation*}
      d(\bar{A}_j(s; \matX_t) \bar{A}_{j'}(s; \matX_t)) = \left(v_j(s; \mathcal X_{ t})\bar{A}_{j'}(s; \matX_{ t }) + v_{j'}(s; \mathcal X_{ t})\bar{A}_{j}(s; \matX_{ t }) \right) dt
 \end{equation*}
\end{proof}

Similarly, we can show the convergence to a random ODE for the product of two linear terms $A_{j k} A_{j'}$.

\section{Proof of Main Result}
\label{sec:main_proof}

\begin{proof}

We put together the dynamics we have derived and proved convergence in the $n \to \infty$ limit.
We re-arrange the terms of second order expansion of the Lie series to group together the terms that have the same multiplicative vector.
We additionally denote the vector $\frac{\partial g(s)}{\partial s_k}$ by $g'_k(s)$ similarly $\frac{\partial h_{ j}(s)}{\partial s_k}$ as $h'_{j, k}(s)$.
From the derivation in section \ref{sec:lie_series_feedback} we have the following:
\begin{align*}
    e_t^{X(W)}(s) = & s + t  \left (g(s) + \sum_{j =1}^{d_a} h_j(s) A_j(s; W) \right ) \\
    & t^2 \Bigg ( \sum_{k = 1}^{d_s} \Bigg ( g_k(s) g'_{ k}(s) + \sum_{j = 1}^{d_a} h_{j, k}(s) A_j(s, W) g'_{ k}(s) \\
 & + g_k(s) \left ( \sum_{j'=1}^{d_a}  h'_{j', k}(s) A_{j'}(s, W) + A_{j', k}(s, W)h_{j'}(s) \right ) \\
 & + \sum_{j = 1}^{d_a}  \sum_{j'=1}^{d_a}  h_{j, k}(s) A_j(s, W) h'_{ j', k}(s) A_{j'}(s, W) + h_{j, k}(s) A_j(s, W) A_{j', k}(s) h_{j'}(s) \Bigg) \Bigg )\\
 = &  s + (t  g(s) + t^2 \sum_{k = 1}^{d_s} g_k(s) g'_{ k}(s)  ) \\
 & +\sum_{j= 1}^{d_a} t h_j(s) \Bigg ( A_j(s, W) + t \left (\sum_{k = 1}^{d_s} g_k(s) \Bigg (A_{j', k}(s, W) + \sum_{j' = 1}^{d_a} h_{j', k}(s) A_{j'}(s, W) A_{j, k}(s) \Bigg) \right ) \Bigg ) \\
 & + t^2 \sum_{j' = 1}^{d_a}  A_{j'}(s, W) \left ( \sum_{k = 1}^{d_s} h'_{j', k}(s ) \left (g_k(s) +  \sum_{j = 1}^{d_a} h_{j, k}(s) A_j(s, W) \right ) \right ).
\end{align*}
The first four expressions in the summation account for $d_a + 2$ degrees of freedom. In other words, the first three terms in the summation are spanned by $d_a + 2$ vectors. For the last term in the summation consider the following representation:
\begin{align*}
    f^{\tau}_{j'} =&   A^{\tau}_{j'}(s) \left ( \sum_{k = 1}^{d_s} h'_{j', k}(s ) \left ( h_{j, k}(s) A^{\tau}_j(s) \right ) \right ) \\
    =& A^{\tau}_{j}(s) \sum_{k = 1}^{d_s}  h'_{j', k}(s ) \sum_{j'=1}^{d_a}  A^{\tau}_{j'}(s) h_{j', k}(s), \\
    =& A^{\tau}_{j}(s) J h_j(s) h(s) A^{\tau}(s),
\end{align*}
Where $J h_j$ is the Jacobian of the function $h_j(s)$. This leads us to the following vector:
\begin{align*}
    v^{\tau}_j = & \sum_{k = 1}^{d_s} \frac{\partial h_{j}(s)}{\partial s_k} \sum_{j'=1}^{d_a}  \bar{a}^{\tau}_{j'}(s) h_{j', k}(s),\\
    = & J h_j(s) h(s) \bar{B}_j^{\tau}(s),
\end{align*}
$\bar{B}_j^{\tau}$ represents the $d_a \times 1$ vector $[\matE [A^{\tau}_j A^{\tau}_1], \ldots, \matE [A^{\tau}_j A^{\tau}_1]]$, where the expetation is over the stochasticity of initialisation..

where $J h_j(s)$ is the $d_s \times d_s$ Jacobian of $h_j$ w.r.t $s$, $\bar{a}_j^{\tau}(s)$ is the process determined by $\matE \left [\nu_j^\tau\right]$, and $h(x)$ is a concatenation of $d_a$ vectors.
We seek to upper bound the following, by showing that there is a continuous time martingale process in the limit $n \to \infty$,
\begin{align*}
    D(f^{\tau}, v^{\tau}) = D(\sum_{j = 1}^{d_a}f^{\tau}_j, \text{Span}(v^{\tau}_1, \ldots, v^{\tau}_{d_a})),
\end{align*}
where $D(f^{\tau}, v^{\tau})$ is the distance between $f^{\tau}$ and the span of $v^{\tau}_1, \ldots, v^{\tau}_{d_a}$.
This distance is upper bounded as follows:
\begin{align*}
    D(f^{\tau}, v^{\tau}) \leq &  \normltwo ( \sum_{j=1}^{d_a}D(f^{\tau}_{j},  v^{\tau}_j)) \\
    \leq & \normltwo \left (~ \sum_{j=1}^{d_a} D(f^{\tau}_{j},  v^{\tau}_j)\right ) \\
    \leq &  \normltwo \left (~\sum_{j=1}^{d_a}a^{\tau}_{j}(s) J h_j(s) h(s) A^{\tau}(s) -  J h_j(s) h(s) \bar{B}_j^{\tau}(s) \right ) \\
     \leq &  \normltwo \left (~ \sum_{j=1}^{d_a}   J h_j(s) h(s) ( a_j^{\tau}(s) A^{\tau}(s) - \bar{B}_j^{\tau}) \right),
\end{align*}

where both $\bar{a}_j,\bar{A}$ are the mean processes, and $\normltwo$ denotes the L-2 norm.
Therefore, the data is concentrated around the space spanned by the vectors:
$h_1, \ldots, h_{d_a}, v^{\tau}_1, \ldots, v^{\tau}_{d_a}$ and the paraboloid $tg + t^2 g'$.
Let this product space be $M$ then $\dim(M) \leq 2 d_a + 1$. 
The concentration property is a result of the concentration of $a_j^{\tau}(s) A^{\tau}(s)$ around $\bar{B}_j^{\tau}$ due to the dynamics in \ref{eq:lemmaquadratic}.

\end{proof}

While proofs in Appendix \ref{app:tracking_stats} closely follow that of \citet{ben2022high} we list our contibtuions in this work:
\begin{enumerate}
    \item We show that the distribution  of outputs, and its quadratic combinations, of a two-layer linearised NNs deviate only in mean and variance, and are dependent on a finite set of summary statistics, despite the width and parameter size going to infinity as the learning rate goes 0.
    \item In this appendix section, We combine this with the idea of the exponential map being a push forward of the parameter distribution at gradient time step $\tau$, for a fixed state $s$, and show that the distribution is concentrated around a low-dimensional manifold.
\end{enumerate}

\section{Continuous Time Policy Gradient}
\label{sec:cont_time_pg}

In this section we define the continuous time policy gradient.
The policy gradient is similar to the one for discrete time introduced in Appendix \ref{app:ddpg_background}.
We define the $i$-th component of the gradient over the value function with respect to the actions as follows:
\begin{align*}
    (\nabla_a Q^{\pi} (s, a, t))_i = \lim_{h \to 0} \frac{ Q_h^{\pi} (s, a + h e_i, t) - V^{\pi}(s, t) }{h},
\end{align*}
where $e_i$ is a $d_a$ dimensional vector with 1 at $i$ and 0 otherwise, finally the function $Q_h$ is defined as:
\begin{align*}
    Q_h^{\pi} (s, a , t) = &  v^{\pi}(s_{t + h}) + \int_0^h e^{- \frac{l + t}{\lambda}}   f_r(s^{a }_l) dl, \text{ such that} \\
    s^{a }_l = & \transfunc(s, a , l),
\end{align*}
where $\transfunc$ is the transition function as defined in Section \ref{sec:contirl}.
This is an application of the policy gradient theorem \citep{Sutton1999PGM, lillicrap2015continuous} which is a \textit{semi-gradient} based optimisation technique.

It is not always guaranteed that the policy gradient algorithm will converge to globally optimal policies for general dynamics in a straightforward manner \citep{Sutton1999PGM, konda1999actor, marbach2003approximate, xiong2022deterministic}. 
The rate of convergence affects the constant $\mathcal R_{\tau}$ introduced above.
While we do not argue this formally, our main result holds as long as the RL algorithm does not diverge.

\section{Dimensionality Estimation}
\label{app:dimestim}

We describe the algorithm for dimensionality estimation in the context of sampled data from the state manifold $\matS_e$.
Let the dataset be randomly sampled points from a manifold $\matS_e$ embedded in $\matR^{d_s}$ denoted by $\mathcal D = \{ s_i \}_{i=1}^N$.
For a point $s_i$ from the dataset $\mathcal D$ let $\{ r_{i, 1}, r_{i, 2}, r_{i, 3}, ...\}$ be a sorted list of   distances of other points in the dataset from $s_i$ and they set $r_0 = 0$.
Then the ratio of the two nearest neighbors is $\mu_i = r_{i, 2}/r_{i, 1}$ where $r_{i, 1}$ is the distance to the nearest neighbor in $\matD$ of $s_i$ and $r_{i, 2}$ is the distance to the second nearest neighbor.
\citet{Facco2017EstimatingTI} show that the logarithm of the probability distribution function of the ratio of the distances to two nearest neighbors is distributed inversely proportional to the degree of the intrinsic dimension of the data and we follow their algorithm for estimating the intrinsic dimensionality.
We describe the methodology provided by \citet{Facco2017EstimatingTI} in context of data sampled by an RL agent from a manifold.
Without loss of generality, we assume that $\{ s_i \}_{i = 1}^{N}$ are in the ascending order of $r_i$.
We then fit a line going through the origin for  $\{ (\log(\mu_i), -\log(1 - i/N) \}_{i = 1}^N$.
The slope of this line is then the empirical estimate of $\dim(\matS_e)$.
We refer the reader to the supplementary material provided by \citet{Facco2017EstimatingTI} for the theoretical justification of this estimation technique.
The step by step algorithm is restated below.

\begin{enumerate}

    \item Compute $r_{i, 1}$ and $r_{i, 2}$ for all data points $i$.
    
    \item Compute the ratio of the two nearest neighbors $\mu_i = r_{i, 2}/r_{i, 1}$.
    
    \item Without loss of generality, given that all the points in the dataset are sorted in ascending order of $\mu_i$ the empirical measure of $\cdf$ is $i/N$.
    
    \item We then get the dataset $\matD_{\text{density}} = \{ (\log(\mu_i), - \log(1-  i/N) \}$ through which a straight line passing through the origin is fit.
\end{enumerate}

The slope of the line fitted as above is then the estimate of the dimensionality of the manifold.

\section{DDPG Background}
\label{app:ddpg_background}
An agent trained with the DDPG algorithm learns in the discrete time but with continuous states and actions.
With abuse of notation, a discrete time and continuous state and action MDP is defined by the tuple $\matM = (\matS, \matA, P, f_r, s_0, \lambda)$, where $\matS, \matA$, $s_0$ and $f_r$ are the state space, action space, start state and reward function as above.
The transition function $P:\matS \times \matA \times \matS$ is the transition probability function, such that $P(s, a, s') = \Pr(S_{t+1} = s' | S_{t} = s, A_t = a)$, is the probability of the agent transitioning from $s$ to $s'$ upon the application of action $a$ for unit time.
The policy, in this setting, is stochastic, meaning it defines a probility distribution over the set of actions such that $\pi(s, a) = \Pr(A_t =a| S_t =s)$.
The discount factor is also discrete in this setting such that an analogous state value function is defined as
\[ v^{\pi}(s_t) = \mathbb E_{s_l, a_l \sim \pi, P} \left [ \sum_{l = t}^{T} \lambda^{l + t} f_r(s_l, a_l) | s_t \right  ], \]
which is the expected discounted return given that the agent takes action according to the policy $\pi$, transitions according to the discrete dynamics $P$ and $s_t$ is the state the agent is at time $t$.
Note that this is a discrete version of the value function defined in Equation \ref{eq:valfunc}.
The objective then is to maximise $J(\pi) = v^{\pi}(s_0)$.
One abstraction central to learning in this setting is that of the \textit{state-action value function} $Q^{\pi}:\matS \times \matA \to \matR$, for a policy $\pi$, is defined by:
\[ Q^{\pi} = \mathbb E_{s_l, a_l \sim \pi, P} \left [ \sum_{l = t}^{T} \lambda^{l - t} f_r(s_l, a_l) | s_t, a_t \right ], \]
which is the expected discounted return given that the agent takes action $a_t$ at state $s_t$ and then follows policy $\pi$ for its decision making.
An agent, trained using the DDPG algorithm, parametrises the policy and value functions with two deep neural networks.
The policy, $\pi:\matS \to \matA$, is parameterised by a DNN with parameters $\theta^{\pi}$ and the action value function, $q: \matS \times \matA \to \matR$,is also parameterised by a DNN with ReLU activation with parameters $\theta^{Q}$.
Although, the policy has an additive noise, modeled by an Ornstein-Uhlenbeck process \citep{UhlenbeckOnTT}, for exploration thereby making it stochastic.
\citet{Lillicrap2016ContinuousCW} optimise the parameters of the $Q$ function, $\theta^Q$, by optimizing for the loss
\begin{equation}
\label{eq:qloss}
    L_Q = \frac{1}{N} \sum_{i = 1}^N (y_i - Q(s_i, a_i ; \theta^Q))^2,
\end{equation}
where $y_i$ is the target value set as $y_i = r_i + \lambda Q(s'_{i + 1},  \pi(s_{i +1} ; \theta^{\pi}); \theta^Q)$.
The algorithm updates the parameters $\theta^{Q}$ by $\theta^Q \leftarrow \theta^Q  + \alpha_Q \nabla_{\theta^Q} L_Q$, where $L_Q$ is defined as in Equation \ref{eq:qloss}.
The gradient of the policy parameters is defined as
\begin{equation}
    \nabla_{\theta^\pi} J(\theta^{\pi}) = \frac{1}{N} \sum_i \nabla_a Q(s, a; \theta^Q)|_{s = s_i, a = \pi(s_i)} \nabla_{\theta^Q} \pi(s; \theta^{\pi}) |_{s = s_i},
\end{equation}
and the parameters $\theta^{\pi}$ are updated in the direction of increasing this objective.

\section{Background on Soft Actor Critic}
\label{app:sac_background}

The goal of the SAC algorithm is to train an RL agent acting in the continuous state and action but discrete time MDP $\matM = (\matS, \matA, P, f_r, s_0, \lambda)$, which is as described in Appendix \ref{app:ddpg_background}.
The SAC agent optimises for maximising the modified objective:
\begin{equation*}
    J(\theta^\pi) = \sum_{t = 0}^T \mathbb E_{s_t, a_t \sim \pi, P} \left [ f_r(s_t, a_t) + \mathcal H (\pi(\cdot, s_t; \theta^{\pi}))  \right ],
\end{equation*}

where $\mathcal H$ is the entropy of the policy $\pi$.
This additional entropy term improves exploration \citep{Schulman2017ProximalPO, Haarnoja2017ReinforcementLW}.
\citet{Haarnoja2018SoftAO} optimise this objective by learning 4 DNNs: the (soft) state value function $V(s; \theta^V)$,  two instances of the (soft) state-action value function: $Q(s_1, a_t; \theta^{Q}_i)$ where $i \in \{1, 2\}$, and a tractable policy $\pi(s_t, a_t; \theta^{\pi})$.
To do so they maintain a dataset $\matD$ os state-action-reward-state tuples: $\matD = \{(s_i, a_i, r_i, s_i')\}$.
The soft value function is trained to minimize the following squared residual error,
\begin{equation}
    J_{V}(\theta^V) = \mathbb E_{s \sim \matD} \left [\frac{1}{2} \left ( V(s; \theta^V) - \mathbb E_{a \sim \pi} \left [ Q(s, a; \theta^Q) - \log \pi(s, a; \theta^{\pi}) \right ] \right)^2  \right ],
    \label{eq:sac_v_eq}
\end{equation}

where the minimum of the values from the two value functions $Q_i$ is taken to empirically estimate this expectation.
The soft $Q$-function parameters can be trained to minimize the soft Bellman residual

\begin{equation}
    J_Q (\theta^Q) = \mathbb E_{s, a, r, s' \sim \matD} \left [ \frac{1}{2} \left ( Q(s, a; \theta^Q) - r - \lambda V(s'; \Bar{\theta}^V)\right )^2 \right ],
    \label{eq:sac_q_eq}
\end{equation}

where $\Bar{\theta}^V$ are the parameters of the target value function.
The policy parameters are learned by minimizing the expected KL-divergence,
\begin{equation}
    J(\theta^{\pi}) = \mathbb E_{s \sim \matD} \left [D_{KL} \left ( \pi(s, \cdot ; \theta^{\pi}), \frac{\text{exp}(Q(s, \cdot; \theta^Q))}{Z_{\theta^Q}(s)} \right ) \right ],
    \label{eq:pi_sac_loss}
\end{equation}
where $Z_{\theta^Q}(s)$ normalizes the distribution.

\section{DDPG with GeLu Activation}
\label{app:moddppgcomp}

\begin{figure}[!htp]
    \centering
    
    \subfigure[Walker2D]{\includegraphics[width=0.23\textwidth]{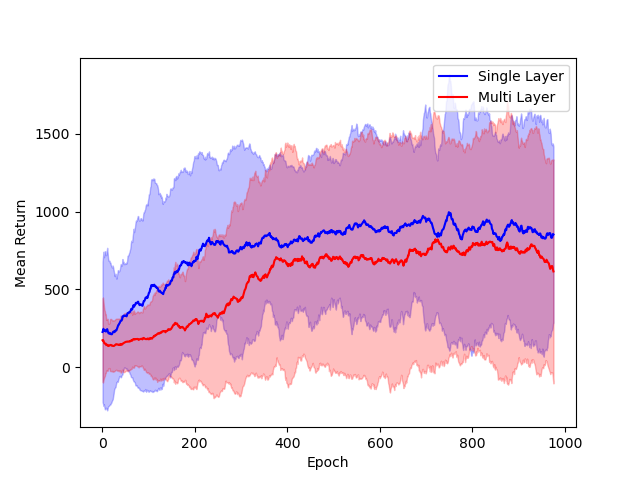}}
    \subfigure[Cheetah]{\includegraphics[width=0.23\textwidth]{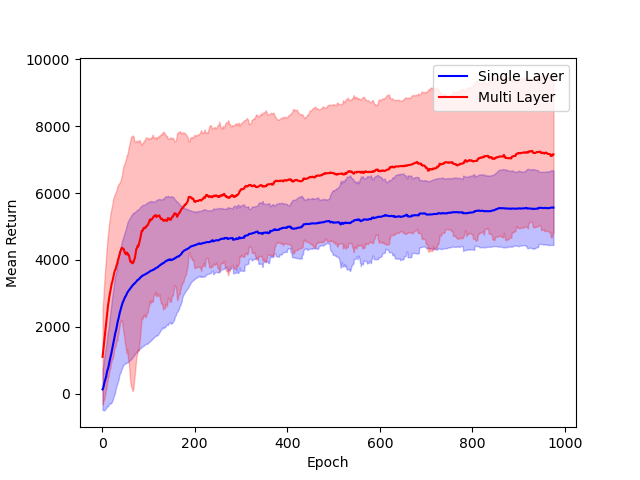}}
    \subfigure[Reacher]{\includegraphics[width=0.23\textwidth]{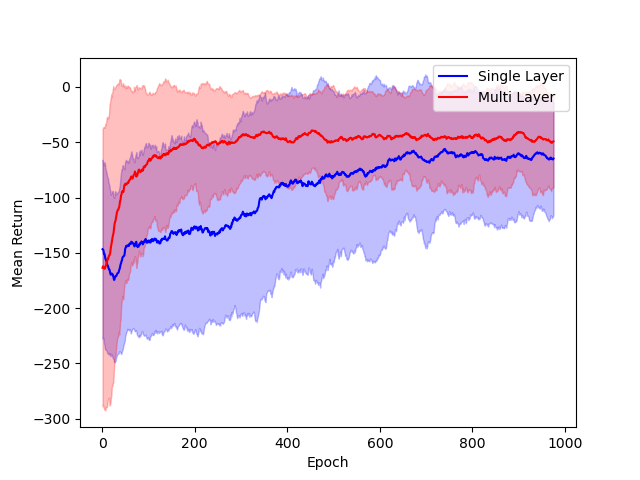}}
    \subfigure[Swimmer]{\includegraphics[width=0.23\textwidth]{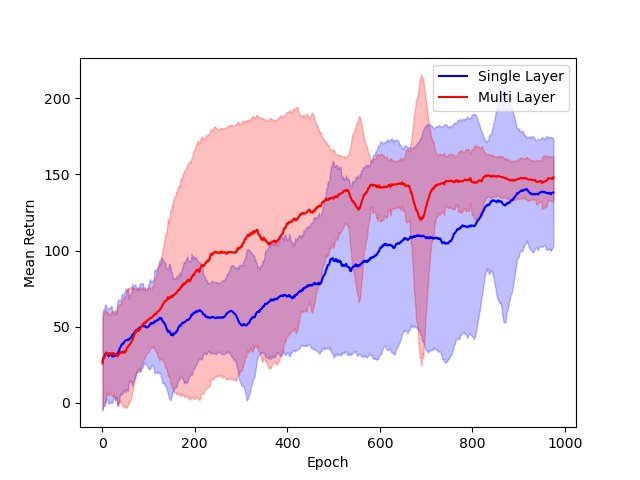}}
    
    \caption{ Comparison of single hidden layer with GeLU activation  (blue) and multiple hidden layer with ReLU activation (red) architectures for DNNs.}
    \label{fig:arch_comp} 
\end{figure}

We provide the comparison between single hidden layer network and multiple hidden layer network because our results in Section 4 are for single hidden layer.
The same architecture is used by \citet{Lillicrap2016ContinuousCW} for the policy and value function DNNs which is two hidden layers of width 300 and 400 with ReLU activation. 
Here we provide the comparison to a single hidden layer width 400 with GeLU activation for the architecture used by \citet{Lillicrap2016ContinuousCW}.
We provide this comparison in Figure \ref{fig:arch_comp} and note that the performance remains comparable for both architectures.
All results are averaged over 6 different seeds.
We use a PyTorch-based implementation for DDPG with modifications for the use of GeLU units.
The base implementation of the DDPG algorithm can be found here:https://github.com/rail-berkeley/rlkit/blob/master/examples/ddpg.py.
The hyperparameters are as in the base implementation.
\begin{figure}[!htp]
    \centering
    
    \subfigure[$\log_2 \text{width} = 12$]{\includegraphics[width=0.45\textwidth]{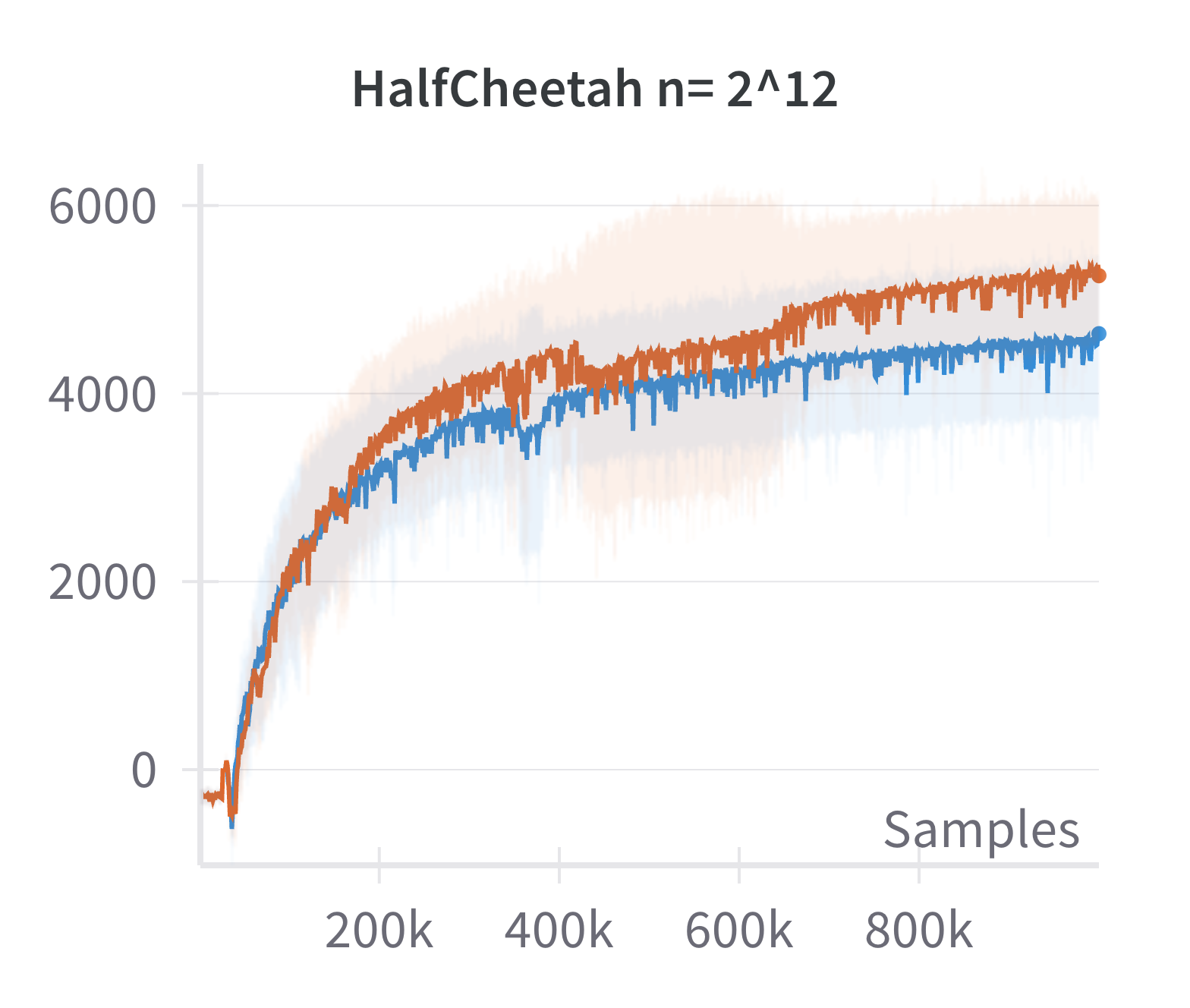}}
    \subfigure[$\log_2 \text{width} = 13$]{\includegraphics[width=0.45\textwidth]{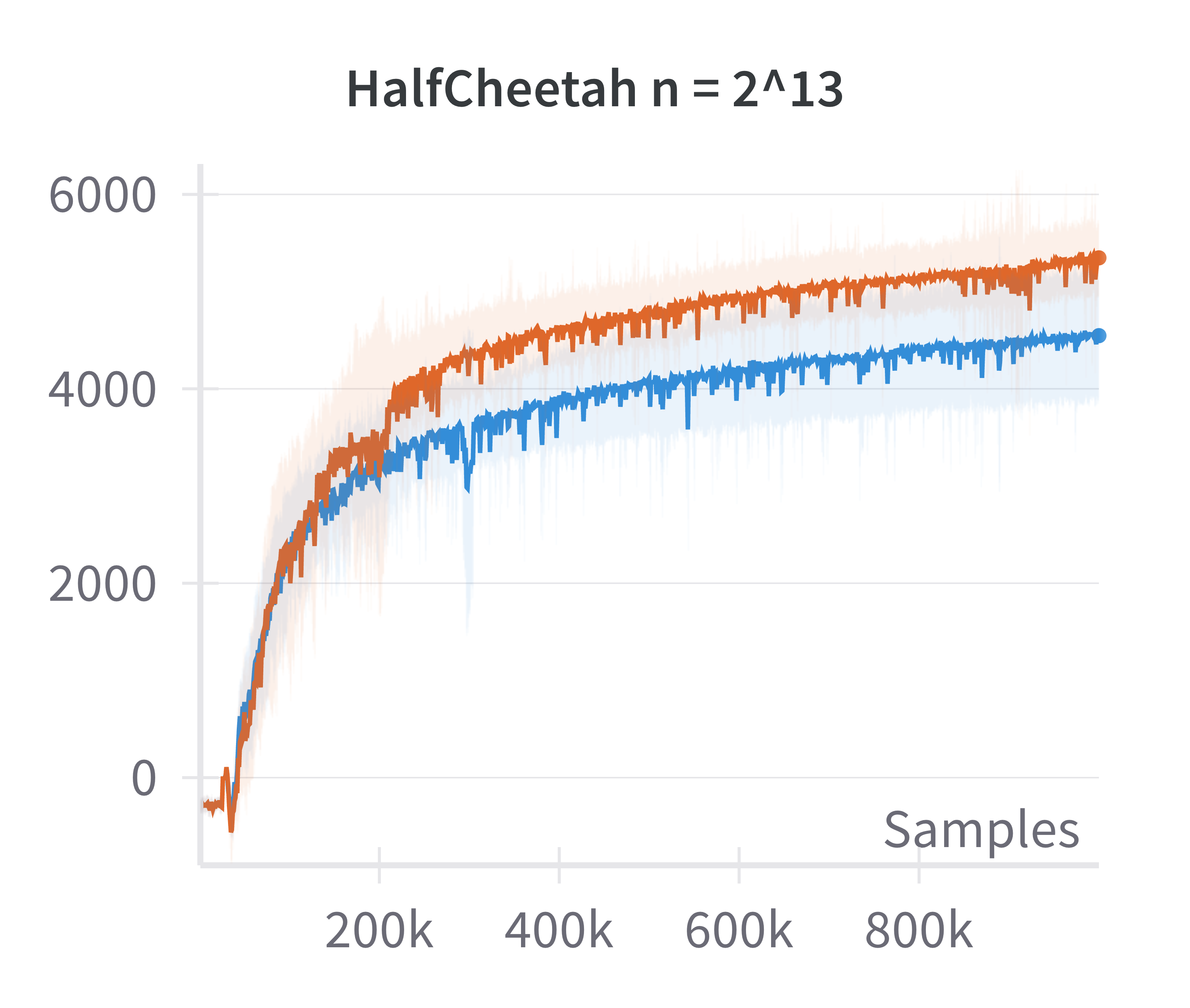}}
    
    \subfigure[$\log_2 \text{width} = 14$]{\includegraphics[width=0.45\textwidth]{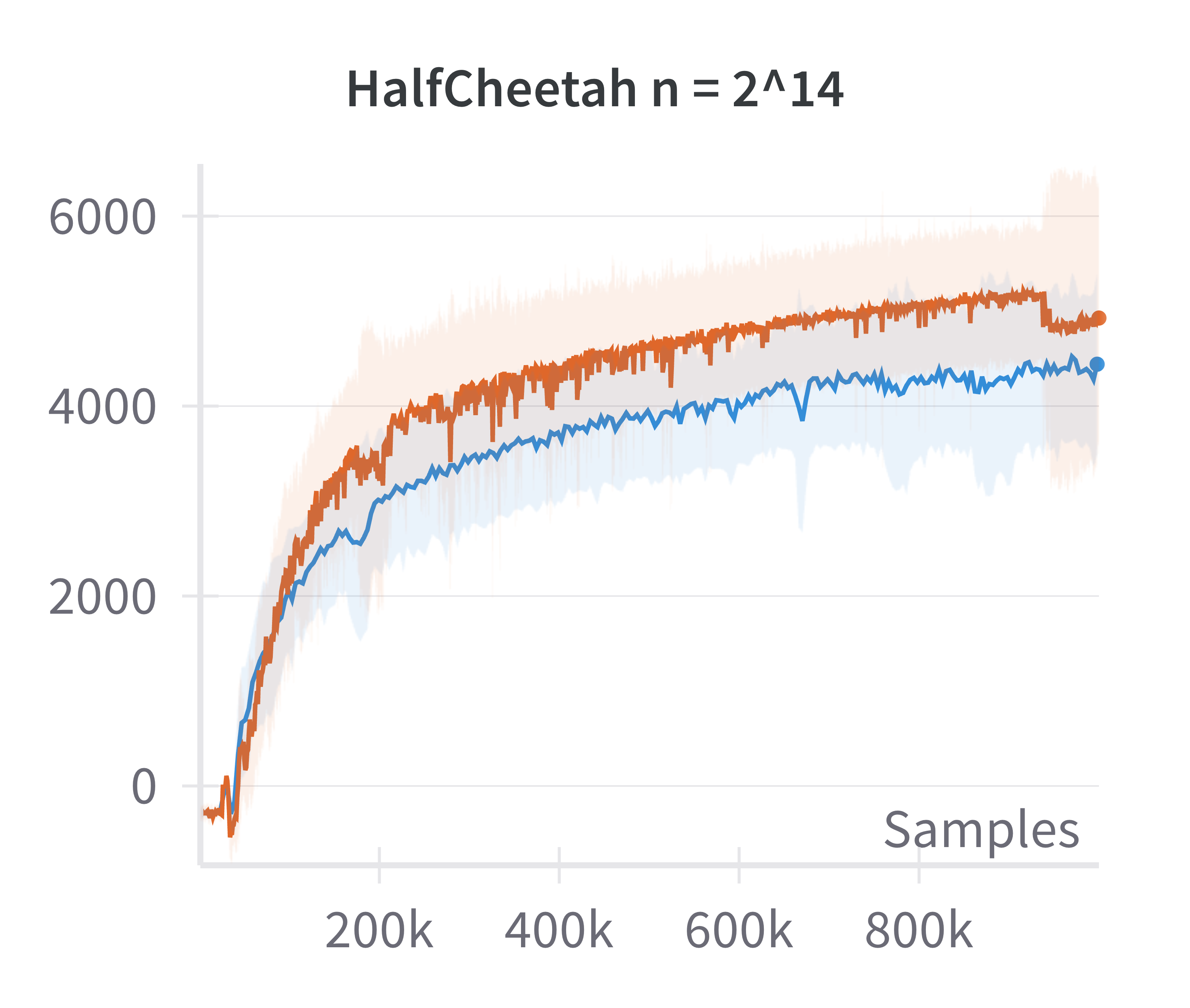}}
    \subfigure[$\log_2 \text{width} = 15$]{\includegraphics[width=0.45\textwidth]{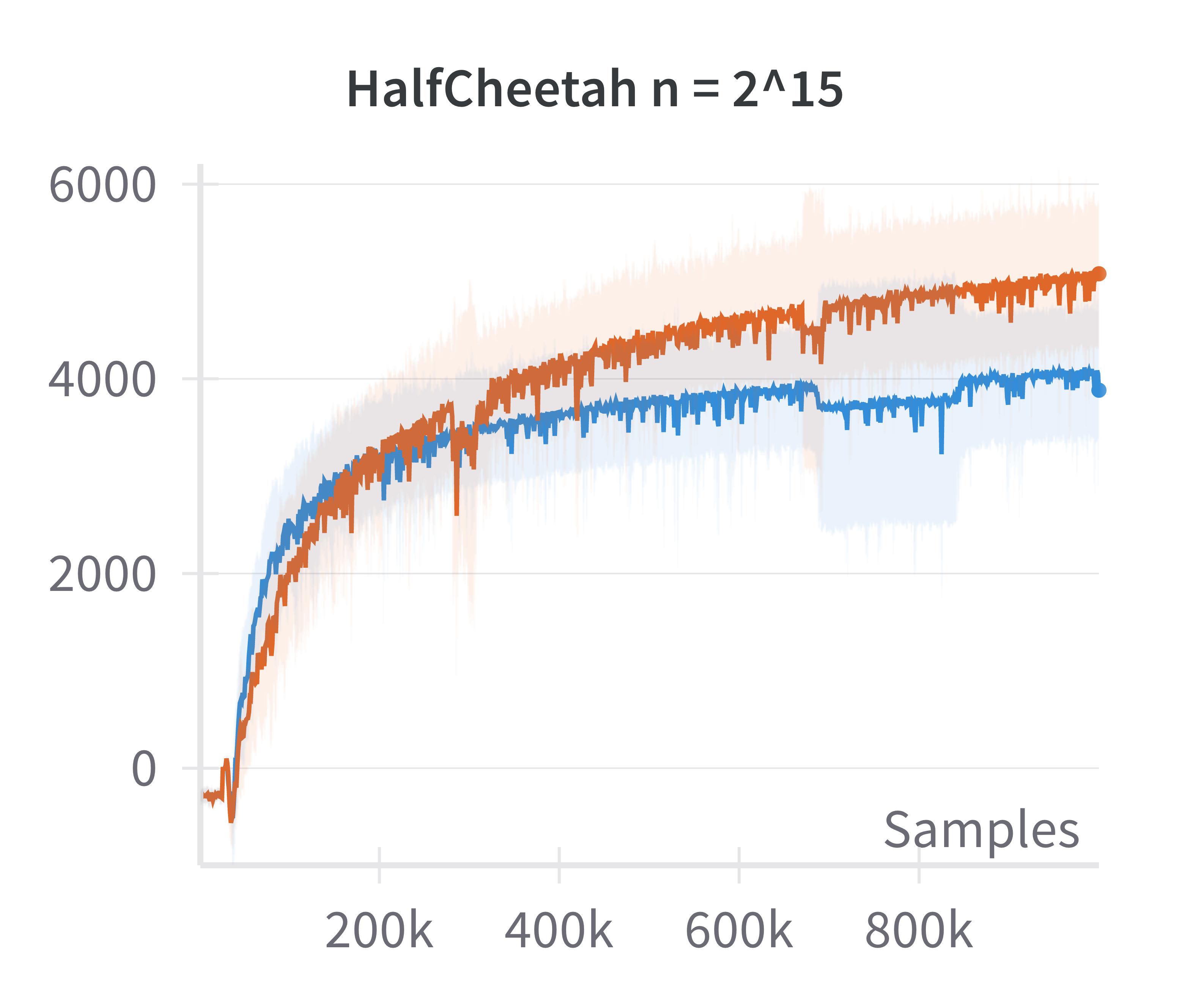}}

     \subfigure[$\log_2 \text{width} = 16$]{\includegraphics[width=0.45\textwidth]{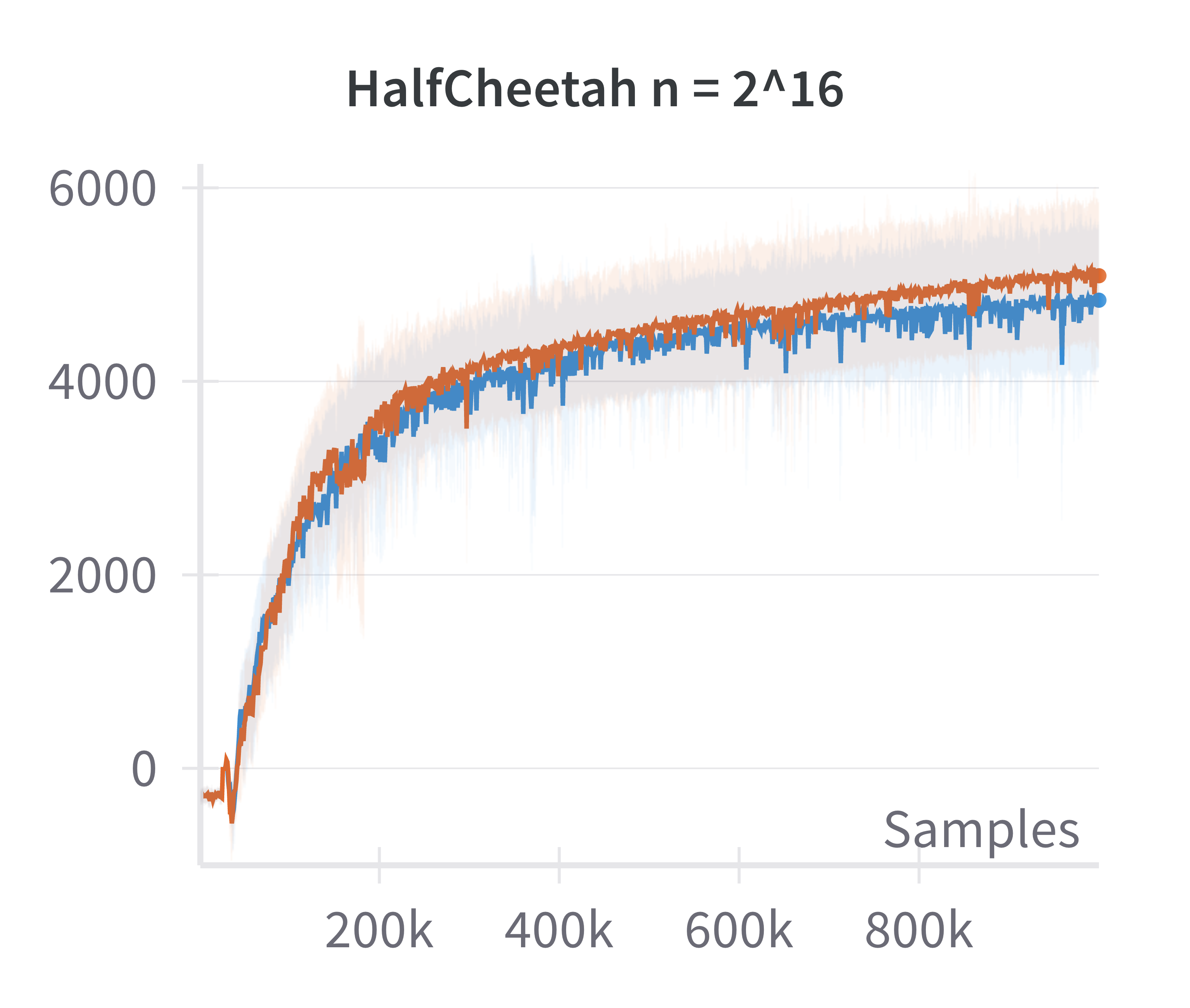}}
    \subfigure[$\log_2 \text{width} = 17$]{\includegraphics[width=0.45\textwidth]{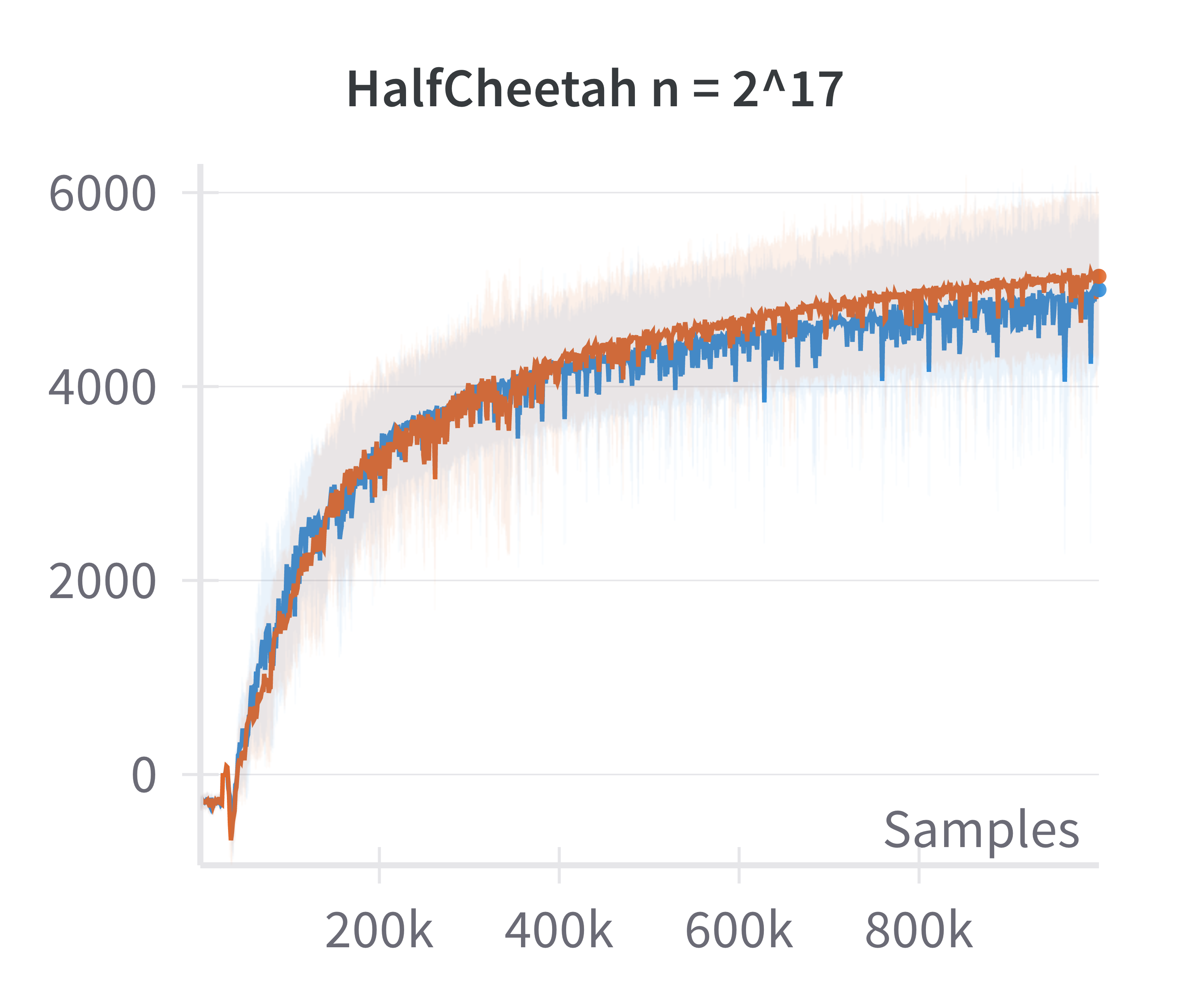}}
    
    \caption{ The canonical policy (in red) tracks the returns for linearised policy (in blue) at higher widths ($\log_2 n > 15$). }
    \label{fig:varying_width_cheetah} 
\end{figure}

\section{Background on Sparse Representation Learning via Sparse Rate Reduction}
\label{sec:background}

We further explain and provide intuition on how the layer introduced in equation \ref{eq:sparse_layer} learns a sparse representation.
For a detailed explanation, we refer to the work by \citet{yu2023whitebox}.
While equation \ref{eq:sparse_layer} performs a linear transform to the pre-activation of the layer, it is built upon the idea of \textit{sparse rate reduction} that follows the principle of \textit{parsimony}: learning representations by successively compressing and sparcifying the input signal to maximally differentiate different data points for the task \citep{Wright-Ma-2022}.

We first establish notation in the setting of batched learning with neural networks.
Suppose the objective is to represent $d$ data points as $n$-dimensional vectors.
This problem can be formulated as obtaining a representation for $d \times n$, denoted by $\mathbf{Z}$ which represents a batch of data.
The \textit{coding rate}, as defined by \citet{ma2007segmentation}, is as follows,
\begin{equation}
    R(\mathbf{Z}) =\frac{1}{2} \log \det \left  (\mathbf{I} + \mathbf{Z}^{\intercal}\mathbf{Z}\right ). 
\end{equation}

To promote sparsity, which in turn learns a low-rank representation of data with a non-linear low-dimensional structure, \citet{yu2023whitebox} optimize the following term:
\begin{align*}
    \max_{\mathbf{Z}} \left [ R(\mathbf{Z}) - \lambda ||\mathbf{Z}||_0 \right ] = \min_{\mathbf{Z}} \left [ \lambda || \mathbf{Z} ||_0 - \frac{1}{2} \log \det \left ( \mathbf{I} + \frac{d}{n \epsilon^2 } \mathbf{Z}^{\intercal}\mathbf{Z} \right )  \right ].
\end{align*}
The iterative approach to learning a representation that minimizes the coding rate takes gradient steps in reducing this rate.
As opposed to optimizing for these objectives in a loop, as in linear programming, \citet{yu2023whitebox} formulate this as representation learning over successive layers of NNs.
\citet{yu2023whitebox} derive this iterative step, as representation over successive layers as follows,
\begin{align*}
    \mathbf{Z}^{l + 1} = \text{ReLU} \left ( \left ( 1 + \frac{4}{9 ( 1 + \alpha')} \right ) \mathbf{D}^{\intercal} \mathbf{Z}^l - \frac{4 \lambda}{9 \alpha'} \mathbf{I} \right ),
\end{align*}

where $\mathbf{Z}^l$ are assumed to be normalised, $\alpha'$ is the step size, and $\mathbf{D}^{\intercal}$ is assumed to be an orthogonal \textit{dictionary} for the batched dataset.
A parameterised version of this transform, where both the constant terms are interpreted as being independent, is used in equation \ref{eq:sparse_layer}.
The sparsity layer adds a constant computation factor of $2n^2$ in the forward and backward passes, we show the impact on the steps per second metric in Figure \ref{fig:ant_dog_sps} of the appendix.

\section{Further Experimental Results}
\label{app:ablation}


    \begin{figure}
    \centering
    \subfigure[Ant Ablation]{\includegraphics[width=0.33\textwidth]{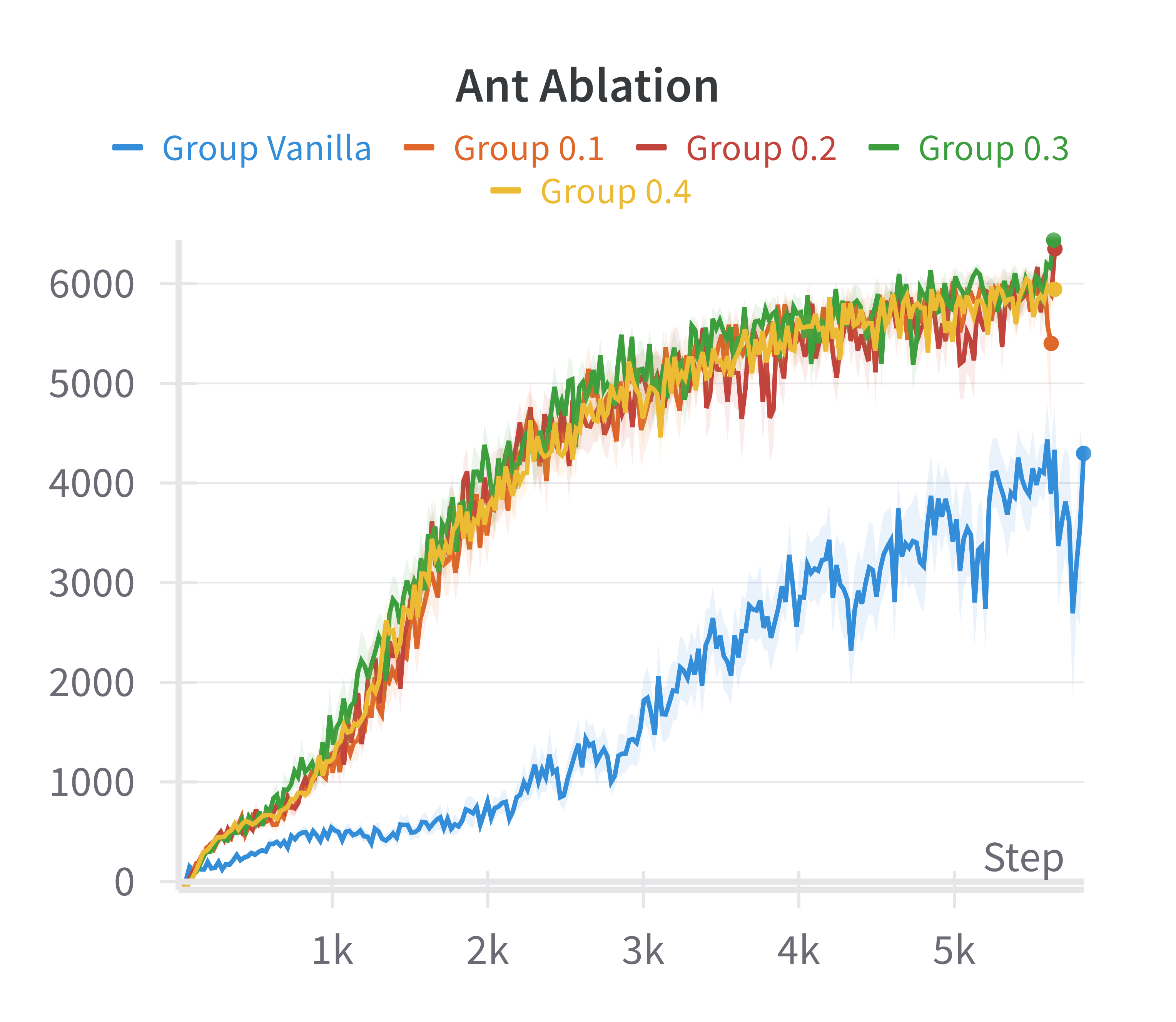}}
    \subfigure[Humanoid Stand]{\includegraphics[width=0.33\textwidth]{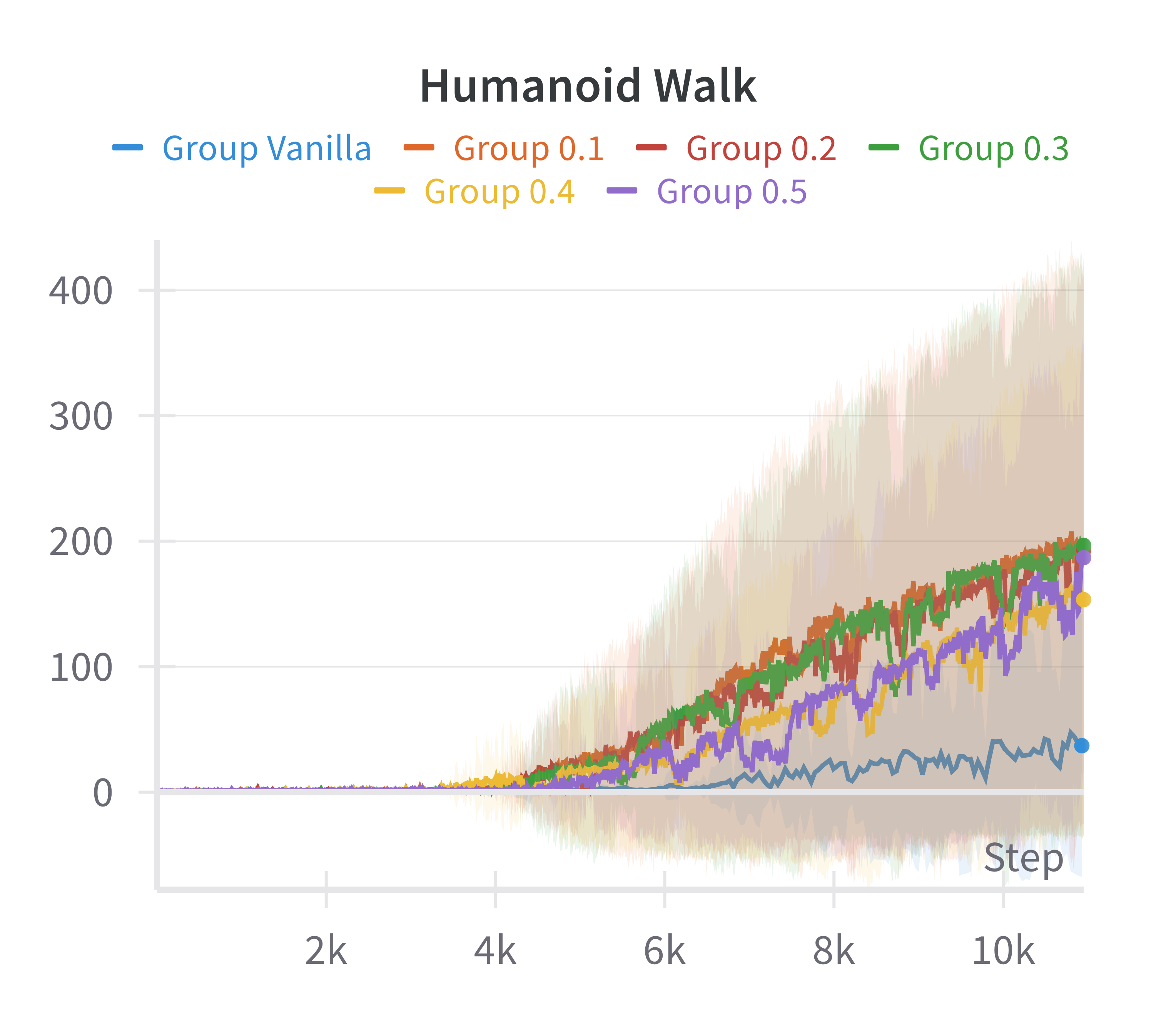}}
    \subfigure[Humanoid Walk]{\includegraphics[width=0.33\textwidth]{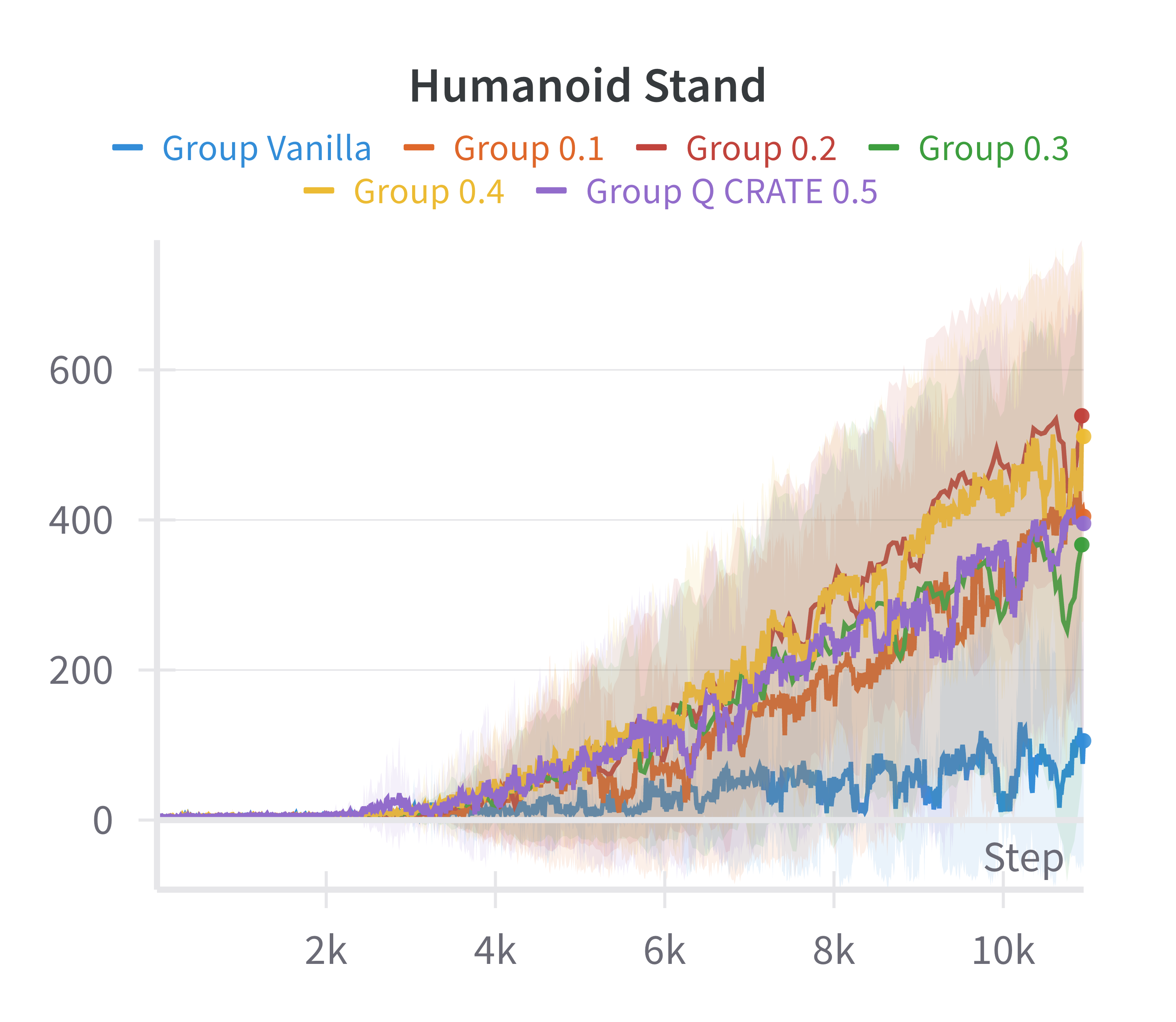}}
    
    \caption{Ablation over the $\alpha_Q$ parameter.}
    \label{fig:ablation} 
    \vspace{-15pt}
\end{figure}

\begin{figure}
    \centering
    \subfigure[Ant]{\includegraphics[width=0.43\textwidth]{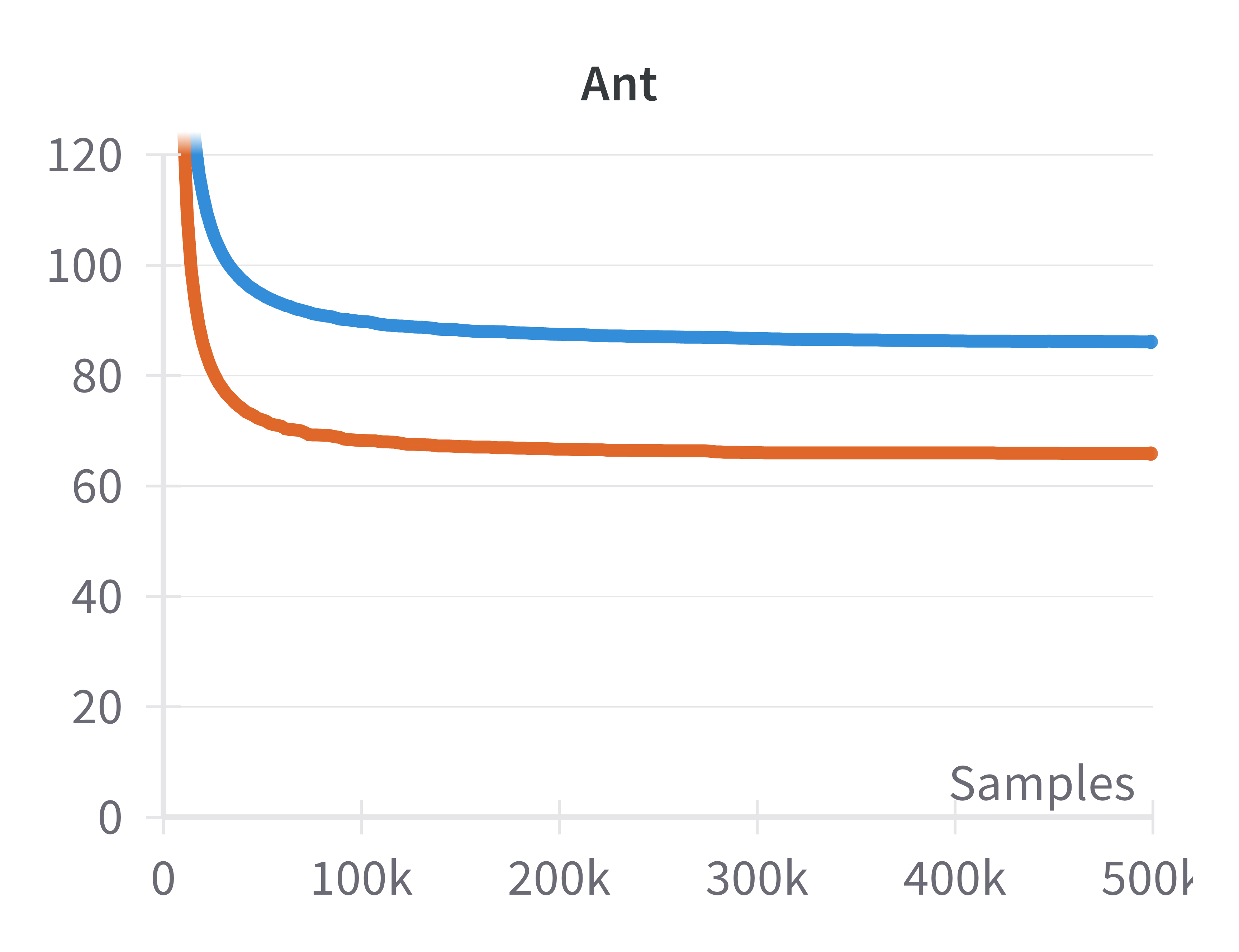}}
    \subfigure[Dog Stand]
    {\includegraphics[width=0.43\textwidth]{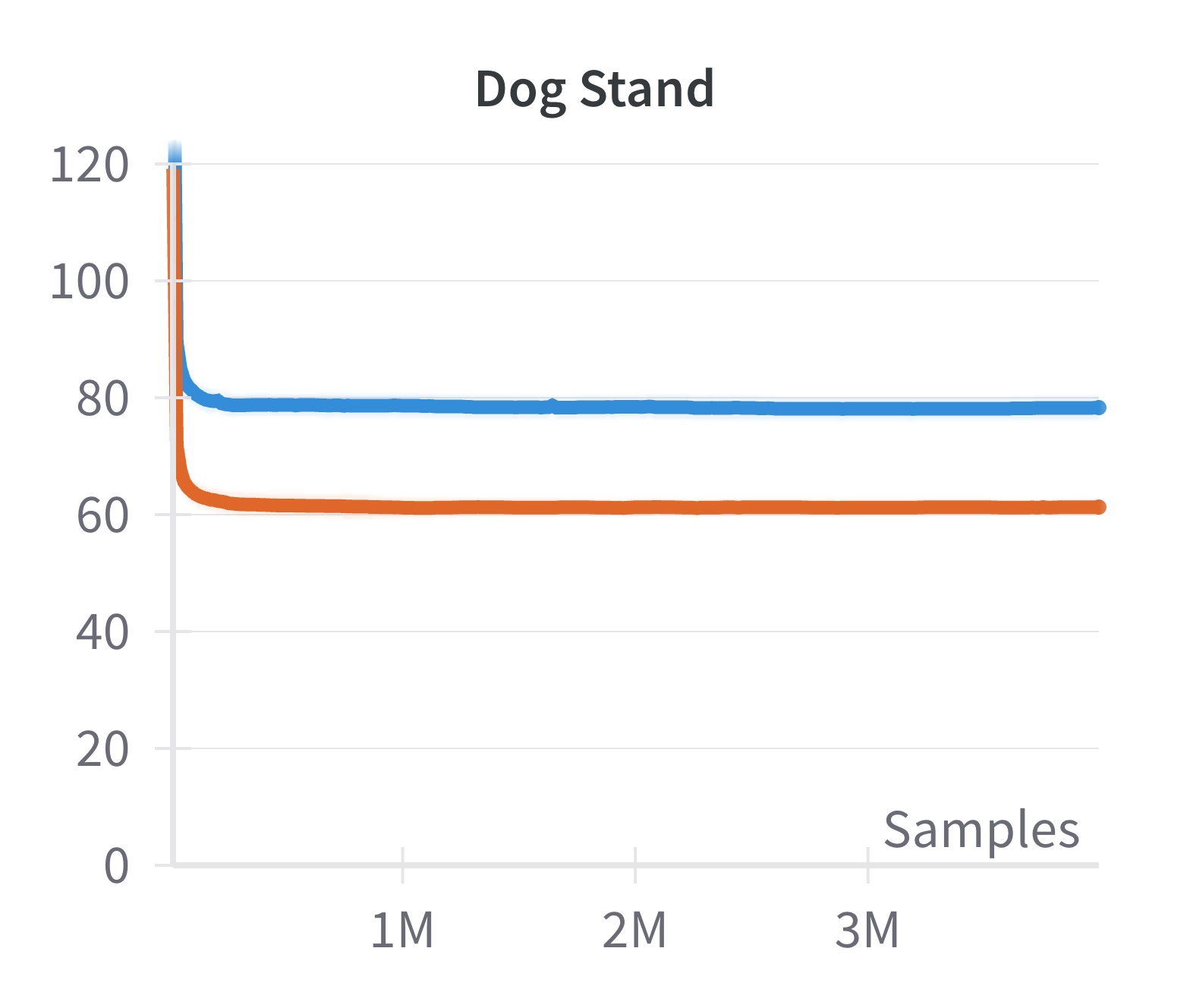}}
    
    \caption{We show the steps per second for SAC (blue) and sparse SAC (red) as training progresses. We observe that, in the ant environment, for the baseline the steps per second is as high as ~90 whereas in the sparse implementation this drops to ~65. Similarly, in the dog stand environment this difference is ~81 to ~61.  While this increases the computation requirements and wall clock time it does not make our method intractable given the significant gains in performance. }
    \label{fig:ant_dog_sps} 
    \vspace{-15pt}
\end{figure}

We observe that the discounted returns don't vary for the ant MujoCo domain \citep{Todorov2012MuJoCoAP} as shown in figure \ref{fig:ablation} with the environment steps on the x-axis.
We see a lot of variance across and within choices of $\alpha_Q$ for humanoid walk and stand environment of DM control suite \citep{tunyasuvunakool2020dm_control} even though the sparse method remains superior to SAC with fully connected feedforward.
We attribute this to being an exploration problem, while our method is able to overcome learning-related bottlenecks, it is unable to overcome the efficient exploration issue, which holds back the agent from attaining optimum returns in higher-dimensional control tasks.

\section{Related Work}
\label{sec:rel_work}

There has been significant empirical work that assumes the set of states to be a manifold in RL.
The primary approach has been to study discrete state spaces as data lying on a graph which has an underlying manifold structure.
\citet{Mahadevan2007ProtovalueFA} provided the first such framework to utilise the manifold structure of the state space in order to learn value functions.
\citet{Machado2017ALF} and \citet{Jinnai2020ExplorationIR} showed that PVFs can be used to implicitly define options and applied them to high dimensional discrete action MDPs (Atari games).
\citet{Wu2019TheLI} provided an overview of varying geometric perspectives of the state space in RL and also show how the graph Laplacian is applied to learning in RL.
Another line of work, that assumes the state space is a manifold, is focused on learning manifold embeddings or mappings.
Several other methods apply manifold learning to learn a compressed representation in RL \citep{Bush2009ManifoldEF, Antonova2020AnalyticML, Liu2021RobotRL}.
\citet{Jenkins2004ASE} extend the popular ISOMAP framework \citep{Tenenbaum1997MappingAM} to spatio-temporal data and they apply this extended framework to embed human motion data which has applications in robotic control.
\citet{Bowling2005ActionRE} demonstrate the efficacy of manifold learning for dimensionality reduction for a robot's position vectors given additional neighborhood information between data points sampled from robot trajectories.
Continuous RL has been applied to continuous robotic control \citep{Doya2000ReinforcementLI, Deisenroth2011PILCOAM, Duan2016BenchmarkingDR} and portfolio selection \citep{Wang2020ReinforcementLI, wang2020continuous, jiaxyzqlearning, jiaxyzpg}.
We apply continuous state, action and time RL as a theoretical model in conjuction with a linearised model of NNs to study the geometry of popular continuous RL problem for the first time.

More recently, the intrinsic dimension of the data manifold and its geometry play an important role in determining the complexity of the learning problem \citep{Shaham2015ProvableAP, Cloninger2020ReLUNA, Goldt2020ModellingTI, Paccolat2020GeometricCO, Buchanan2021DeepNA, tiwari2022effects} for deep learning. 
\citet{SchmidtHieber2019DeepRN} shows that, under assumptions over the function being approximated, the statistical risk deep ReLU networks approximating a function can be bounded by an exponential function of the manifold dimension.
\citet{Basri2017EfficientRO} theoretically and empirically show that SGD can learn isometric maps from high-dimensional ambient space down to $m$-dimensional representation, for data lying on an $m$-dimensional manifold, using a two-hidden layer neural network with ReLU activation where the second layer is only of width $m$.
Similarly,  \citet{Ji2022SampleCO} show that the sample complexity of off-policy evaluation depends strongly on the intrinsic dimensionality of the manifold and weakly on the embedding dimension.
Coupled with our result, these suggest that the complexity of RL problems and data efficiency would be influenced more by the dimensionality of the state manifold, which is upper bounded by $2d_a + 1$, as opposed to the ambient dimension.

We summarise several approaches for better representation learning in RL using information bottlenecks. 
Like our work, this approach reduces noise and irrelavant signal 
One common approach is to compress the state representation that is used by the agent for learning \citep{Goyal2019InfoBotTA, Goyal2019ReinforcementLW, Goyal2020TheVB, Islam2022RepresentationLI}.
The central idea is to extract the most informative bits with an auxiliary objective.
This auxiliary objective could be exploration based \citep{Goyal2019InfoBotTA},  enables hierarchical decision making \citep{Goyal2019ReinforcementLW}, predicting the goal \citep{Goyal2020TheVB}, and relevance to task dynamics \citep{Islam2022RepresentationLI}.
While these are practical methods they do not provide a theoretical limit on the dimension of the bottleneck.
In contrast, our representation is a local manifold embedding that preserves the geometry of the emergent state manifold.

Another closely related line of research exploits the underlying structure and symmetries in MDPs.
\citet{Ravindran2001SymmetriesAM} provide a detailed and comprehensive study on on reducing the model size for MDPs by exploiting the redundancies and symmetries. 
There have been with other more specific approaches to this \citep{Ravindran2003SMDPHA, Ravindran2002ModelMI} and more recent work follow ups by \citet{Pol2020MDPHN}.
The broader study of manifolds, within differential geometry, is related to the study of symmetries and invariances.
We anticipate that further reducing the effective state manifold based on redundancies, to extend our work, would be highly promising.
\citet{Givan2003EquivalenceNA} and \citet{Ferns2004MetricsFF} also provide closely related state aggregation techniques based on \textit{bisimulation metrics} which have been developed further \citep{Castro2010UsingBF, Gelada2019DeepMDPLC, Zhang2020LearningIR, Lan2021MetricsAC}.
The bi-simulation literature defines metrics that incorporates transition probabilities or environment dynamics of the environments.
The underlying metric is probabilistic in nature.
The manifold and metric are defined in such a way as to facilitate better representation learning for RL.
The primary difference is that our approach proves how a low-dimensional manifold “emerges” from the design and structure of certain continuous RL problems.

We finally contextualize our work in light of various control theoretic frameworks.
Control systems on a non-linear manifolds have been studied widely \citep{sussmann1973orbits, Brockett1973LieTA, Nijmeijer1990NonlinearDC, Agrachev2004ControlTF, bloch2015introduction, bullo2019geometric}.
Like most control theoretic frameworks the transitions, dynamics, and the geometry of the system are assumed known to the engineer.
\citep{Liu2021RobotRL} recently provide a framework for controlling a robot on the \textit{constraint manifold} using RL.
As noted previously, our work is also closely related to the notion reachability in control theory \citep{kalman1960general, jurdjevic1997geometric, Touchette1999InformationtheoreticLO, Touchette2001InformationtheoreticAT} which deals with sets reachable under fixed and known dynamics of a system.
Reachability sets from control theory have been applied for safe control under the RL framework \citep{Akametalu2014ReachabilitybasedSL, Shao2020ReachabilityBasedTS, Isele2018SafeRL}.
While the objective is similar, to find the sets of states reached with any controller, the assumptions, on the underlying dynamics are different leading to different results.

\section{Extension to Other Activations and Architectures}
\label{sec:extension_limits}

It is a difficult to theoretically analyse complex engineered systems such as neural networks for continuous control learned using policy gradient methods.
We have simplified this setting by using linearised policies (Section \ref{sec:linearnn}) with GeLu activation and access to the true gradient of the value function (Section \ref{sec:contirl}).
We show results in GeLu activation because it is the closest (smooth) analogue to the most popularly used ReLu activation which is very commonly used in continuous control with RL \citep{lillicrap2015continuous, Schulman2017ProximalPO, Haarnoja2018SoftAO}.
Despite our choice of GeLu, as noted in Section \ref{sec:main_result}, our results extend to activations which are twice differentiable everywhere with bounded derivatives.
Moreover, our results capture the setting of neural policies that have a very high-dimensional parameter space but also have structured outputs \citep{lee2017deep, ben2022high}.
In study of supervised deep learning results emanating from theoretical models that approximate shallow wide NNs  have been extended to deeper networks, e.g. the neural tangeent kernel (NTK) framework \citep{Jacot2018NeuralTK}.
Moreover, there have also been mechanisms to make finite depth and width corrections to NTK \citep{hanin2019finite}.
Theoretical inferences made in simplified settings have been extended to applications and a wide range of architectures as well \citep{yang2021tensor, yang2022tensor, fort2020deep, wang2022and}.
We anticipate that extending our results to a broader set of activations, architectures, and reinforcement learning algorithms would lead to better applications by means of improved theoretical understanding.

Another assumption we make is deterministic transitions.
While this is true in many popular benchmark environments \citep{Todorov2012MuJoCoAP, tunyasuvunakool2020dm_control}, the most general setting of RL as a model for intelligent agent the transitions are stochastic.
This is a common feature in control theory where results in deterministic control: $\dot{s}(t) = g(s(t), u(t), t)$, with continuous states, actions, and time, can be extended to stochastic transitions by considering bounded stochastic perturbations
\begin{equation*}
    \dot{s}(t) = g(s(t), u(t), t) +d(s(t), u(t), t) dw_t,
\end{equation*}

where $d$ is the stochastic perturbation aspect with $w_t$ being the Wiener process and $u(t)$ is the open loop control.
Tor example, the contraction analysis by \citet{Lohmiller1998OnCA} in deterministic transitions is extended to stochastic perturbations by \citet{quangstochcontract}.
We anticipate that our analysis, under appropriate assumptions on the stochastic perturbations, has promise of extensions.

\end{document}